\documentclass{article}

\usepackage{microtype}
\usepackage{graphicx}
\newcommand{\safeincludegraphics}[2][]{%
  \IfFileExists{#2}{\includegraphics[#1]{#2}}{\fbox{Missing file: \texttt{\detokenize{#2}}}}%
}

\usepackage{subfigure}
\usepackage{booktabs} 
\usepackage[table]{xcolor} 
\usepackage{pifont} 
\usepackage{enumitem} 
\usepackage{titlesec} 

\newcommand{\mSCP}{SCP~\citep{vovk2005algorithmic}}
\newcommand{\mSeqCP}{SeqCP~\citep{xu2023sequentialpredictiveconformalinference}}
\newcommand{\mNexCP}{NexCP~\citep{barber2023conformal}}
\newcommand{\mEnbPI}{EnbPI~\citep{10121511}}
\newcommand{\mHopCPT}{HopCPT~\citep{auer2023conformalpredictiontimeseries}}
\newcommand{\mCoREL}{CoREL~\citep{cini25relational}}
\newcommand{\mConForME}{ConForME~\citep{pmlr-v230-galvao-lopes24a}}

\usepackage{hyperref}
\hypersetup{hypertexnames=false}
\titlespacing{\section}{0pt}{0.7\baselineskip}{0.5\baselineskip}
\titlespacing{\subsection}{0pt}{0.7\baselineskip}{0.5\baselineskip}
\titlespacing{\subsubsection}{0pt}{0.7\baselineskip}{0.5\baselineskip}
\usepackage[accepted]{icml2026}

\usepackage{amsmath}
\usepackage{amssymb}
\usepackage{mathtools}
\usepackage{nicefrac}
\usepackage{amsthm}
\usepackage{placeins}
\usepackage{algorithm}

\usepackage{algpseudocode}
\usepackage{float}
\usepackage{balance}

\usepackage[capitalize,noabbrev]{cleveref}

\theoremstyle{plain}
\newtheorem{theorem}{Theorem}[section]

\theoremstyle{definition}
\newtheorem{definition}[theorem]{Definition}

\theoremstyle{remark}

\usepackage[textsize=tiny]{todonotes}

\icmltitlerunning{Delving into Non-Exchangeability for Conformal Prediction in Graph-Structured Multivariate Time Series}

\begin{document}

\twocolumn[
\icmltitle{Delving into Non-Exchangeability for Conformal Prediction \\ in Graph-Structured Multivariate Time Series}



\icmlsetsymbol{equal}{*}

\begin{icmlauthorlist}
\icmlauthor{Ruichao Guo}{equal,sjtu}
\icmlauthor{Xingyao Han}{equal,sjtu}
\icmlauthor{Wenshui Luo}{sjtu}
\icmlauthor{Zhe Liu}{sjtu}
\icmlauthor{Chen Gong}{sjtu}
\icmlauthor{Hesheng Wang}{sjtu}
\end{icmlauthorlist}

\icmlaffiliation{sjtu}{
School of Automation and Intelligent Sensing,
Shanghai Jiao Tong University,
Shanghai, China
}

\icmlcorrespondingauthor{Chen Gong}{chen.gong@sjtu.edu.cn}
\icmlcorrespondingauthor{Hesheng Wang}{wanghesheng@sjtu.edu.cn}

\icmlkeywords{
Conformal Prediction,
Non-Exchangeability,
Graph-Structured Time Series,
Multivariate Time Series,
Uncertainty Quantification,
Graph Signal Processing
}

\vskip 0.3in
]



\printAffiliationsAndNotice{\icmlEqualContribution} 

\begin{abstract}
{\frenchspacing
Point forecasting for graph-structured multivariate time series is a fundamental problem, but rigorous uncertainty quantification for such predictions is still underexplored. Conformal prediction (CP) offers uncertainty estimation with a solid coverage guarantee under the exchangeability assumption, which requires the joint data distribution to be unchanged under permutation. However, in graph-structured time series, inherent cross-node coupling can violate the exchangeability condition, making direct application of CP unreliable. 
Inspired by the spectral graph theory, such coupling resides in global trends and can be characterized by the low-frequency components, while high-frequency components are nearly exchangeable. Therefore, we propose a novel concept named \textbf{S}pectral \textbf{G}raph \textbf{C}onditional \textbf{E}xchangeability (SGCE), which conditions exchangeable high-frequency components on low-frequency ones to preserve global trends and enable effective CP in the spectral domain.
Based on SGCE, we further propose \textbf{S}pectral \textbf{C}onformal prediction via w\textbf{A}ve\textbf{L}\textbf{E}t transform (SCALE). SCALE uses graph wavelets to decompose low/high-frequency components and conformalizes high-frequency residuals via adaptive gating over a low-frequency embedding. Experimental results on real-world traffic datasets show that SCALE not only achieves valid coverage but also consistently improves the coverage-efficiency trade-off over the state-of-the-art CP methods.}
\end{abstract}


\section{Introduction}

Spatio-temporal forecasting of time series is ubiquitous in critical infrastructure, ranging from intelligent transportation systems to power grid load balancing~\citep{capone2025stgnnSurvey,dong2025stelfSurvey}.
It aims to predict the future evolution of a dynamic system observed over time and space.
In practice, since the number of series can be very large and they may have complex interactions, observations are typically organized as a graph-structured multivariate time series~\citep{jin2024gnn4ts,wu2020mtgnn}.
To forecast such coupled time series, Spatio-temporal graph neural networks (STGNNs) are proposed~\citep{capone2025stgnnSurvey}.
They have become the dominant approaches in time series forecasting due to their superior performance on point forecasting.

However, point forecasting alone is often insufficient in some high-stakes scenarios, where stakeholders may need uncertainty quantification (UQ) to support risk-sensitive planning. Regrettably, existing STGNNs often fall short of rigorous uncertainty quantification, rendering their predictions unreliable in some complex scenarios~\citep{mallick2024uncertainty}. To derive reliable predictions, Conformal prediction (CP) has been proposed, which provides a model-agnostic framework for UQ. The common practice of CP is to transform the output of a specific forecaster into a prediction set that covers the ground-truth with high probability. To this end, conformal prediction commonly assumes \emph{exchangeability}, which means that the joint distribution of examples in a held-out calibration set is invariant under the permutation of their indices~\citep{angelopoulos2021gentle}.

Although exchangeability may hold in some static scenarios, this assumption is often violated in time series forecasting due to serial (temporal) dependence and potential distribution shift over time~\citep{barber2023conformal}.
 To address the non-exchangeability problem, recent CP methods, such as Conformal Prediction for Time Series (CPTS)~\citep{10121511} and Sequential Predictive Conformal Inference (SPCI)~\citep{xu2023sequentialpredictiveconformalinference}, propose to design sequential or adaptive calibration techniques. These procedures typically rely on the identification of (approximate) exchangeability, which is further leveraged to recover prediction errors or nonconformity scores~\citep{wu2025error}.

The non-exchangeability challenge becomes even more severe for \emph{graph-structured} multivariate time series (MTS), where observations at each time step form a coupled random vector and cross-node interactions can induce system-level dependence. Such coupling can violate the exchangeability condition even when each coordinate sequence appears separately exchangeable, making a direct application of CP unreliable. In view of this point, most existing MTS forecasting approaches either apply CP independently per series (yielding only marginal guarantees) or attempt to enforce exchangeability through restrictive assumptions on inter-series relations (e.g., sparsity)~\citep{wang2025nonexchangeable}. Nevertheless, these assumptions may not hold in practice, and learning with joint non-exchangeability still remains challenging.

In this paper, we observe that the breakdown of joint exchangeability in graph-structured multivariate time series is closely tied to strong coupling.
Inspired by the theoretical works in spectral graph learning and graph signal processing~\citep{chen2023unified,isufi2024graphfilters}, we find that such coupling is mainly reflected by global trends, which are characterized by low-frequency components in the graph spectrum~\citep{dabush2024smoothness}. Meanwhile, the high-frequency components are verified to have weaker cross-node interaction and are more likely to be exchangeable. 
This discovery is shown in Figure~\ref{fig:motivation}, which illustrates a clear structure of exchangeability. Therefore, to ensure effective conformal prediction, instead of directly calibrating the original non-exchangeable MTS, we can apply conformal prediction in the spectral domain. 

Based on this insight, we introduce the concept of Spectral Graph Conditional Exchangeability (SGCE), which formalizes that the high-frequency components can be approximately exchangeable when conditioned on low-frequency ones. Here, the conditioning preserves the global trend in the evolution of time series, promoting a more precise prediction for each individual series. Building upon SGCE, we propose a practical implementation named Spectral Conformal prediction via wAveLEt transform (SCALE). SCALE adopts graph wavelets to decompose the original MTS into low-/high-frequency components. It then conformalizes high-frequency residual scores via adaptive gating on low-frequency components to produce prediction intervals. Compared with prior CP approaches~\citep{cini25relational} that pursue exchangeability via cross-variable encoding under the assumption of sparse dependence, our approach models joint exchangeability through spectral conditioning, providing a new paradigm for CP in graph-structured MTS.

Our main contributions are threefold:
\begin{itemize}[leftmargin=*, topsep=0pt, partopsep=0pt, itemsep=0pt]
\item Algorithmically, we develop SCALE, a graph-wavelet conformal prediction framework that adapts conformal quantiles to calibrate high-frequency components conditioned on low-frequency ones.
\item Theoretically, we formalize Spectral Graph Conditional Exchangeability (SGCE) and establish coverage guarantees for conformal prediction on graph-structured MTS.
\item Empirically, we demonstrate an improved coverage--efficiency trade-off on real-world traffic datasets when compared with state-of-the-art baseline methods for conformal prediction in graph-structured MTS forecasting.
\end{itemize}

\begin{figure}[t]
\begin{center}
\centerline{\safeincludegraphics[width=\columnwidth]{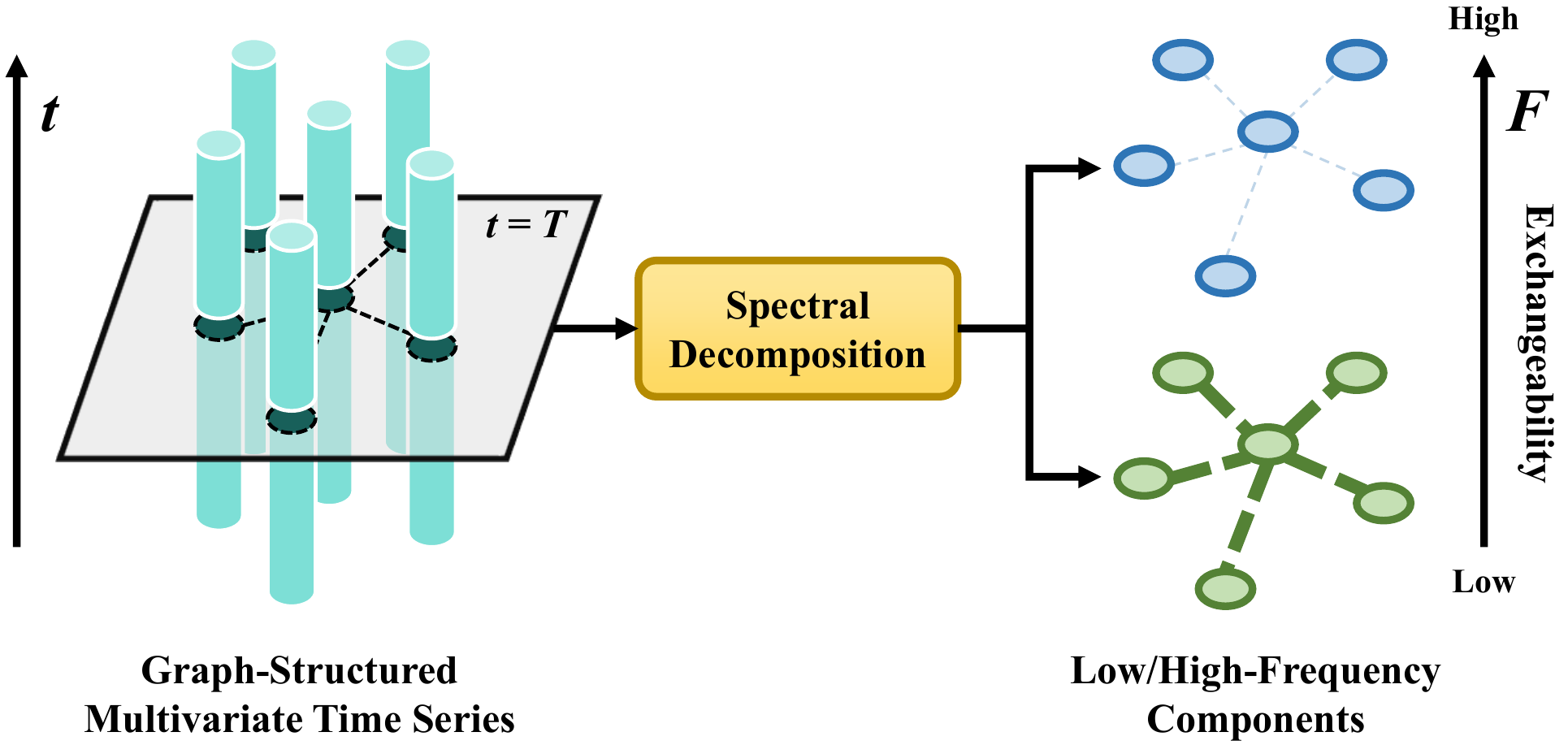}}
\caption{The discovery in our study. The left panel displays graph-structured MTS. At each time step, the corresponding snapshot is decomposed into multiple frequency bands (right panel), where components with higher frequencies tend to be more exchangeable due to weaker cross-node interactions.
}
\label{fig:motivation}
\end{center}
\vskip -0.2in
\end{figure}

\section{Related Work}

In this section, we review some prior works closely related to our study, i.e., spatio-temporal graph forecasting and conformal prediction.

\noindent\textbf{Spatio-temporal graph forecasting.}
Spatio-temporal graph neural networks (STGNNs) are standard paradigms for traffic and sensor-network forecasting, which combine temporal dynamics with graph-based spatial aggregation~\citep{li2018dcrnn,yu2018stgcn,wu2019graphwavenet}. Prior work on STGNNs explores heterogeneous and time-varying dependencies via adaptive/dynamic adjacency, attention-based spatial mixing, and multi-component spatio-temporal blocks~\citep{guo2019astgcn,zheng2020gman,wu2020mtgnn}. Benchmarking studies further document substantial heterogeneity across tasks and datasets for multivariate forecasting, motivating robust and interpretable uncertainty quantification~\citep{shao2024exploring}. These modeling choices complicate UQ: global regime shifts can drive coherent residual changes across nodes, which makes the joint exchangeability assumption in conformal calibration hard to satisfy in multivariate time series forecasting.

\noindent\textbf{Conformal prediction.}
Conformal prediction (CP) provides distribution-free uncertainty quantification by calibrating a nonconformity score and converting it into prediction sets. It has finite-sample coverage under the exchangeability assumption~\citep{vovk2005algorithmic,shafer2008tutorial}. A large body of work develops practical variants for regression and quantile-based prediction sets, including split conformal and conformalized quantile regression~\citep{lei2018distribution,romano2019conformalized}. These works also clarify what guarantees remain when exchangeability is violated~\citep{barber2023conformal,angelopoulos2021gentle}. Recent studies further examine conformal selection, calibration, and robustness in deep models~\citep{huang2024labelranking,xi2025calibrationconformal,xi2025noiserobust}. In particular, for time series, due to temporal dependence and cross-series couplings, exchangeability may not hold in general. This motivates sequential, online, and adaptive conformal procedures~\citep{10121511,xu2023sequentialpredictiveconformalinference,wu2025error,li2024neuralconformalcontroltime}. Closely related are conformal approaches for correlated data~\citep{cini25relational} and recent work on temporal graphs~\citep{wang2025nonexchangeable}. 
In our study, instead of directly applying CP to the original series, we argue that exchangeability is more likely to hold for high-frequency components in the spectral domain. 

\section{Preliminaries}

This section introduces the problem formulation and the foundational concepts utilized in our framework.

\subsection{Problem Formulation}

In this paper, we focus on the problem of uncertainty quantification for multivariate time series (MTS) forecasting on graphs. We use the set $\{\mathbf{x}^1, \ldots, \mathbf{x}^N\}$ to denote a sample of $N$ correlated time series and $\mathbf{X}_{t:t+T} := [\mathbf{x}^1_{t:t+T}, \ldots, \mathbf{x}^N_{t:t+T}]^\top \in \mathbb{R}^{N\times T}$ to denote the multivariate observation. Here, $\mathbf{x}^i_{t:t+T} := [x_t^i, \ldots, x_{t+T-1}^i]^\top \in \mathbb{R}^T$ is the $i$-th series observed over the discrete time indices from $t$ to $t+T-1$.
In addition, we denote the multivariate observation at time $t$ by the snapshot $\mathbf{X}_t:= [x_t^1, \ldots, x_{t}^N]^\top \in \mathbb{R}^{N\times 1}$. 
Due to severe inter-series coupling, such MTS is modeled in a graph structure $\mathcal{G} = (\mathcal{V}, \mathcal{E})$, where $\mathcal{V}$ denotes the set of $N$ nodes and $\mathcal{E}$ encodes pairwise correlations. Here, we represent $\mathcal{E}$ using an adjacency matrix $\mathbf{A} \in \mathbb{R}^{N \times N}$. Unlike approaches that treat correlations as auxiliary exogenous covariates~\citep{cini25relational}, we assume that the data-generating process is conditioned on the graph topology. Specifically, each observation is generated according to a spatio-temporal conditional distribution
\begin{equation}
x_t^i \sim p\left(x_t^i \mid \mathbf{X}_{<t}, \mathcal{N}_{\mathcal{G}}(i)\right),
\end{equation} 
where $\mathbf{X}_{<t}$ denotes historical observations prior to time $t$, and $\mathcal{N}_{\mathcal{G}}(i)$ denotes the neighborhood of node $i$ on graph $\mathcal{G}$.

\textbf{Point forecasting.} Point forecasting aims to learn a predictor $F_{\boldsymbol{\theta}}$, parameterized by $\boldsymbol{\theta}$, that maps past observations to future values. Given a look-back window of length $W$ and a prediction horizon $K$, the predictor produces
\begin{equation}
\widehat{\mathbf{X}}_{t+1:t+K} = F_{\boldsymbol{\theta}} \left(\mathbf{X}_{t-W+1:t}, \mathbf{A}\right).
\end{equation}
Here, $F_{\boldsymbol{\theta}}$ can be any graph-based forecasting model, such as spatio-temporal graph neural networks (STGNNs)~\citep{yu2018stgcn, li2018dcrnn, wu2019graphwavenet}. The model $F_{\boldsymbol{\theta}}$ is typically trained to minimize a deterministic loss function $\ell(\mathbf{X}_{t:t+T}, \widehat{\mathbf{X}}_{t:t+T})$, e.g., the mean squared error (MSE) loss or mean absolute error (MAE) loss~\citep{lim2021deep}.

Given a pre-trained predictor $F_{\boldsymbol{\theta}}$, our goal is to perform post-hoc uncertainty quantification. Specifically, for a miscoverage rate $\alpha \in (0, 1)$, we aim to construct a prediction interval $\mathcal{C}_{t+1:t+K}^\alpha$, which is required to satisfy the finite-sample coverage guarantee~\citep{vovk2005algorithmic}, namely
\begin{equation}
    \label{eq:coverage_guarantee}
    \mathbb{P}\left( \mathbf{X}_{t+1:t+K} \in \mathcal{C}_{t+1:t+K}^\alpha \right) \ge 1 - \alpha.
\end{equation}

To derive the prediction interval $\mathcal{C}_{t+1:t+K}^\alpha$, we briefly introduce the split conformal prediction method in the following.

\subsection{Split Conformal Prediction} 
\label{sec:scp}
Split Conformal Prediction (SCP)~\citep{xu2023sequentialpredictiveconformalinference} uses quantiles of prediction error to construct prediction intervals because these quantiles directly characterize the tails of the error distribution. In detail, SCP adopts an auxiliary calibration set $\mathcal{T}_{\text{cal}}$ for quantile estimation, and $x_t^{i,\text{cal}}$ is denoted as the $i$-th calibration example for time step $t$. At time step $t$, SCP computes the residual error $r_t^i = x_t^{i,\text{cal}} - \widehat{x}_t^{i,\text{cal}}$, where $\widehat{x}_t^{i,\text{cal}}$ is the output of the predictor $F_{\boldsymbol{\theta}}$.
These residual errors are then aggregated into residual snapshot $\mathbf{R}_t = [r_t^1, \dots, r_t^N]^\top \in \mathbb{R}^{N\times 1}$. In this way, we obtain the residual matrix $\mathbf{R}_{t-W+1:t} = [\mathbf{R}_{t-W+1}, \dots, \mathbf{R}_{t}] \in \mathbb{R}^{N\times W}$. Let the $\alpha/2$- and $1-\alpha/2$-quantiles of $\mathbf{R}_{t-W+1:t}$ be $\mathbf{Q}_{t+1:t+K}^{\alpha/2}$ and $\mathbf{Q}_{t+1:t+K}^{1-\alpha/2}$. The prediction interval $\mathcal{C}_{t+1:t+K}^\alpha$ is given by
\begin{equation}
\label{eq:scp_interval}
\resizebox{\linewidth}{!}{$
\mathcal{C}_{t+1:t+K}^{\alpha}
=
\Big[\widehat{\mathbf{X}}_{t+1:t+K} + \mathbf{Q}_{t+1:t+K}^{\alpha/2},
\widehat{\mathbf{X}}_{t+1:t+K} + \mathbf{Q}_{t+1:t+K}^{1-\alpha/2}\Big].
$}
\end{equation}
Furthermore, instead of directly using the quantiles of $\mathbf{R}_{t-W+1:t}$ to construct $\mathcal{C}_{t+1:t+K}^\alpha$, we can employ a network ${H}_{\boldsymbol{\phi}}$ (parameterized by $\boldsymbol{\phi}$) to adaptively learn such quantiles. Formally, given the residual $\mathbf{R}_{t-W+1:t}$ and the adjacency matrix $\mathbf{A}$, network ${H}_{\boldsymbol{\phi}}$ outputs the estimated lower and upper quantiles $\widehat{\mathbf{Q}}_{t+1:t+K}^{\alpha/2}$ and $\widehat{\mathbf{Q}}_{t+1:t+K}^{1-\alpha/2}$, namely
\begin{equation}
\widehat{\mathbf{Q}}_{t+1:t+K}^{\bar{\alpha}}={H}_{\boldsymbol{\phi}} \left(\mathbf{R}_{t-W+1:t},\mathbf{A},\bar{\alpha}\right),
\end{equation}
where $\bar{\alpha} \in \{\alpha/2, 1-\alpha/2\}$. The coverage guarantee of SCP fundamentally relies on the assumption that the joint distribution of data is invariant under permutation~\citep{angelopoulos2021gentle}.
However, this assumption is frequently violated in graph-structured MTS forecasting due to strong coupling.
In view of this, we resort to spectral graph theory to address this limitation, and the critical techniques used in our method are introduced in the next section.

\subsection{Spectral Graph Wavelet Transform}
\label{sec:sgwt}
In this section, we introduce the Spectral Graph Wavelet Transform (SGWT)~\citep{hammond2011wavelets, shuman2013emerging}, which will be used later to perform spectral decomposition. SGWT is designed to filter graph signals in the spectral domain. Such filtering relies on the normalized Laplacian matrix $\boldsymbol{\Delta} = \mathbf{I} - \mathbf{D}^{-\frac{1}{2}}\mathbf{A}\mathbf{D}^{-\frac{1}{2}}$, where $\mathbf{A}$ is the adjacency matrix and $\mathbf{D}$ is the degree matrix. In spectral graph theory, the eigendecomposition of $\boldsymbol{\Delta}$ is defined as $\boldsymbol{\Delta}=\mathbf{U}\mathbf{\Lambda} \mathbf{U}^\top$, where the columns of $\mathbf{U}$ are the orthonormal eigenvectors and $\mathbf{\Lambda}$ is the diagonal matrix of eigenvalues. These eigenvalues are typically interpreted as frequencies in the graph spectral domain.
In SGWT, by employing a set of band-pass kernels $\{g_s\}_{s=1}^{S}$ and a low-pass kernel $h$, the wavelet coefficients for a signal snapshot $\mathbf{X}_t \in \mathbb{R}^{N\times 1}$ at scale $s$ can be computed via spectral convolution
\begin{equation}
    \label{eq:wavelet_coeff}
    \mathbf{W}_{s,t} = \mathbf{U} g_s(\mathbf{\Lambda}) \mathbf{U}^\top \mathbf{X}_t,
\end{equation}
where the diagonal matrix $g_s(\mathbf{\Lambda})$ acts as a band-pass filter. Moreover, the low-pass kernel $h(\mathbf{\Lambda})$ extracts the global trends, yielding $\mathbf{V}_t = \mathbf{U} h(\mathbf{\Lambda}) \mathbf{U}^\top \mathbf{X}_t$. Formally, let $\mathbf{\Phi}$ denote the spectral operator representing this multi-scale transformation, and then the input signal $\mathbf{X}_t$ can be decomposed into global trends and multi-scale details, namely
\begin{equation}
    \label{eq:graph_wavelet_decomposition}
    \mathbf{\Phi}\,\mathbf{X}_t = \mathbf{V}_t + \sum_{s=1}^{S} \mathbf{W}_{s,t}.
\end{equation}
Based on Eq.~\eqref{eq:graph_wavelet_decomposition}, the estimated low-frequency and high-frequency components are typically denoted as $\widehat{\mathbf{L}}_t$ and $\widehat{\mathbf{H}}_t$, respectively. Given a cutoff scale $k$ (with $1\le k\le S$), the estimation $\widehat{\mathbf{L}}_t$ is constructed by combining the low-pass output with wavelet coefficients at scales $s\ge k$, i.e., $\widehat{\mathbf{L}}_t=\mathbf{V}_t+\sum_{s=k}^{S}\mathbf{W}_{s,t}$. The high-frequency component $\widehat{\mathbf{H}}_t$ is obtained from the remaining wavelet coefficients at scales $s<k$, namely $\widehat{\mathbf{H}}_t=\sum_{s=1}^{k-1}\mathbf{W}_{s,t}$.

\section{Methodology}

In this section, we present the motivation for our study and describe the proposed SCALE method in detail.

\subsection{Motivation: Exchangeability in Spectral Domain}
\label{sec:motivation}

In the previous section, we have presented the general procedure to derive prediction intervals via SCP. The success of SCP mainly relies on the assumption that the data distribution remains fixed under permutation. Formally, for a calibration set of variables $\{\mathbf{x}_1, \dots, \mathbf{x}_n\}$ and any permutation $\sigma$, the joint distribution must satisfy
\begin{equation}
p(\mathbf{x}_1, \dots, \mathbf{x}_n) = p(\mathbf{x}_{\sigma(1)}, \dots, \mathbf{x}_{\sigma(n)}).
\end{equation}
This strict exchangeability in data distribution also implies that the joint distribution of the variables remains invariant regardless of the ordering, which makes SCP satisfy the coverage guarantee in Eq.~\eqref{eq:coverage_guarantee}. However, such an assumption is often infeasible for graph-structured MTS, which have complex spatial couplings and temporal dependencies. Even more seriously, \textit{node-wise exchangeability} of individual time series $\mathbf{x}^{i}_t$ does not imply the \textit{system-wide exchangeability} of the snapshots $\mathbf{X}_t$. Formally, even if every time series satisfies exchangeability, the sequence of system-wide snapshots may not, namely
\begin{equation} 
\begin{aligned} 
& p(\mathbf{x}^{i}_1, \dots, \mathbf{x}^{i}_n)  = p(\mathbf{x}^{i}_{\sigma(1)}, \dots, \mathbf{x}^{i}_{\sigma(n)}),\,\,\forall\, i\in \{1,\cdots,N\} \\ 
&\not\Rightarrow \quad p(\mathbf{X}_1, \dots, \mathbf{X}_n)   = p(\mathbf{X}_{\sigma(1)}, \dots, \mathbf{X}_{\sigma(n)}).
\end{aligned} 
\end{equation}
Therefore, directly applying the standard conformal prediction method, such as SCP, to the forecasting of graph-structured MTS can lead to unreliable prediction intervals. 

Inspired by the theoretical works in spectral graph learning and graph signal processing~\citep{hammond2011wavelets, shen2021multi}, we find that although exchangeability can hardly hold for original graph-structured MTS, it can be easily satisfied in the spectral domain. Specifically, the spectral graph theory demonstrates that the main factors for non-exchangeability, namely the cross-node coupling and global trends, are essentially characterized as low-frequency components in the graph spectrum. Meanwhile, the high-frequency components typically encode local variations. Such local variations are more nearly exchangeable when compared with the low-frequency ones. 

\begin{figure}[t]
    \centering
    \safeincludegraphics[width=0.9\columnwidth]{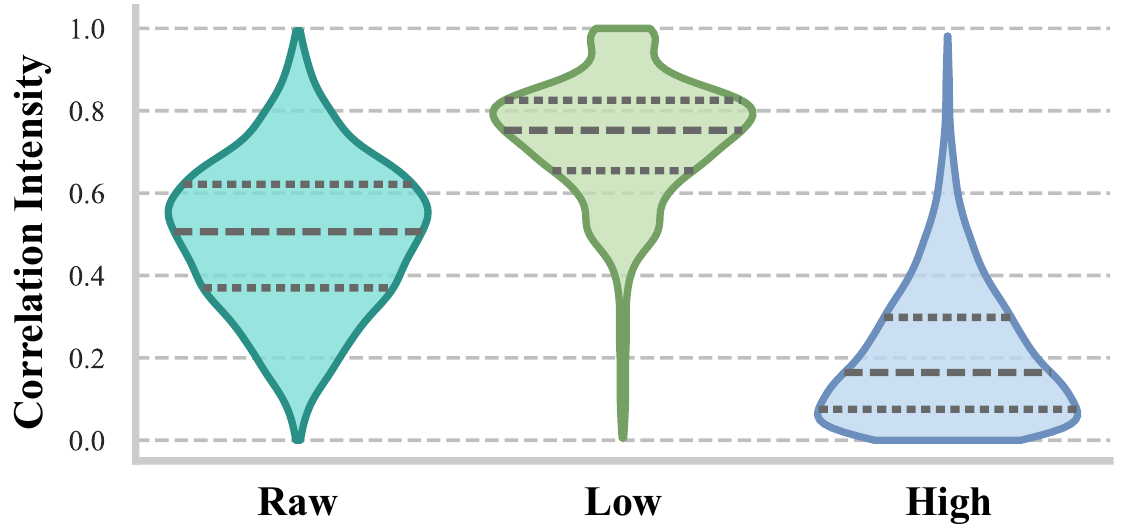}
    \caption{Distributions of the correlation intensity for the original MTS, the low-frequency components, and the high-frequency components on the METR-LA dataset.}
    \label{fig:corr-dist}
\end{figure}

\begin{figure*}[!t]
\begin{center}
\centerline{\safeincludegraphics[width=2.0\columnwidth]{figs/framework_v6.pdf}}
\caption{
\textbf{The framework of Spectral Conformal prediction via wAveLEt transform (SCALE).}
\textbf{(a) Workflow:} SCALE integrates into the standard split conformal prediction pipeline, serving as an estimator for the prediction intervals.
\textbf{(b) Spectral Decoupling:} Input residuals are decomposed via SGWT into low- and high-frequency components to recover exchangeability while encoding global trends.
\textbf{(c) Low-Frequency Channel:} A lightweight learnable encoder maps the low-frequency trends into a structural conditioning embedding $\mathbf{C}_t$.
\textbf{(d) High-Frequency Channel:} A parameter-free extractor computes temporal statistics $\mathbf{M}_t$, avoiding the reintroduction of spatial couplings.
Finally, the two representations are fused via an adaptive gated mechanism to output the calibrated prediction interval.
}
\label{fig:framework}
\end{center}
\vspace{-8pt}
\end{figure*}

To justify this point, we perform the correlation analysis on MTS from the METR-LA dataset~\citep{li2018dcrnn}. For time series $\{\mathbf{x}^1, \dots, \mathbf{x}^N\}$, we compute the Pearson correlation coefficients $\rho_{i,j}$ for $\mathbf{x}^i$ and $\mathbf{x}^j$, $\forall\; i,j\in \{1,\cdots,N\}$. For each time series $\mathbf{x}^i$, we further define its \emph{correlation intensity} $c_i$ as the average correlation with other series, namely $c_i=\frac{\sum_{j\neq i}\rho_{i,j}}{N-1}$. The calculation of intensity $c_i$ for low- and high-frequency components is analogous. We present in Figure~\ref{fig:corr-dist} the distributions of intensity $c_i$ for the original time series, the low-frequency components, and the high-frequency components, respectively. 
As shown in Figure~\ref{fig:corr-dist}, although the low- and high-frequency components exhibit comparable energy levels, their correlation intensities are significantly different. Particularly, the high-frequency components tend to have weaker correlations or couplings when compared with low-frequency ones, which means the high-frequency components are approximately exchangeable.

Based on the above observations, we can decompose the original MTS in the spectral domain to ensure exchangeability for conformal prediction. However, only using exchangeable high-frequency components for prediction may not be sufficient, as low-frequency components are also essential in capturing global trend information. Such information serves as guidance to promote accurate prediction for each individual series. Therefore, the low-frequency components are considered as conditions while the high-frequency components are adopted for reliable conformal calibration. 

As introduced in Section~\ref{sec:scp}, we denote the low- and high-frequency components of residual $\mathbf{R}_t $ as $\mathbf{L}_t$ and $\mathbf{H}_t$, respectively. Moreover, we also denote the windowed low- and high-frequency components by $\mathcal{L}_t \triangleq \mathbf{L}_{t-W+1:t}$ and $\mathcal{H}_{t} \triangleq \mathbf{H}_{t-W+1:t}$, respectively. Based on these definitions, we propose an important concept termed \textit{Spectral Graph Conditional Exchangeability} in Definition~\ref{def:sgce}.

\begin{definition}[Spectral Graph Conditional Exchangeability, SGCE]
\label{def:sgce}
The residual process satisfies \textit{Spectral Graph Conditional Exchangeability} if the high-frequency sequences $\{\mathcal{H}_{t}\}_{t}$ are exchangeable conditional on respective components $\{\mathcal{L}_{t}\}_{t}$.
Formally, for any sequence of time indices $\{t_1, \dots, t_n\}$ and any permutation $\sigma$, it holds that
\begin{equation}
\begin{aligned}
& p(\mathcal{H}_{t_1}, \dots, \mathcal{H}_{t_n} \mid \mathcal{L}_{t_1}, \dots, \mathcal{L}_{t_n}) \\
=\quad & p(\mathcal{H}_{t_{\sigma(1)}}, \dots, \mathcal{H}_{t_{\sigma(n)}} \mid \mathcal{L}_{t_{\sigma(1)}}, \dots, \mathcal{L}_{t_{\sigma(n)}}).
\end{aligned}
\end{equation}
\end{definition}
This definition is subsequently adopted to design practical algorithms for reliable conformal prediction.

\subsection{Implementation: Spectral Conformal Prediction via Wavelet Transform}

Based on the concept of SGCE in Definition~\ref{def:sgce}, we further provide a practical implementation termed Spectral Conformal prediction via wAveLEt transform (SCALE). The overall framework is illustrated in Figure~\ref{fig:framework}. In detail, we apply SGWT to the residuals and derive the low/high-frequency components $\widehat{\mathbf{L}}_t$ and $\widehat{\mathbf{H}}_t$. The calculation of them are provided in Section~\ref{sec:sgwt}. Moreover, the cutoff scale can be auto-selected by a lightweight SGWT diagnostic, and the pseudocode is provided in Appendix~\ref{app:sgwt_cutoff}. The frequency components subsequently serve as inputs in our method.



Following Definition~\ref{def:sgce}, we operationalize SGCE in the wavelet domain with two lightweight pathways: a learnable low-frequency encoder that maps $\widehat{\mathbf{L}}_{t-W+1:t}$ to the conditioning embedding $\mathbf{C}_{t}$, and a parameter-free high-frequency statistics extractor that maps $\widehat{\mathbf{H}}_{t-W+1:t}$ to the feature summary $\mathbf{M}_{t}$~\citep{liu2025dmsgwnn}.

\textbf{Processing of low/high-frequency components.} \label{sec:decouple_details}
The decomposition above yields two streams with distinct roles.
On the low-frequency side, we adopt an STID-style encoder to obtain the conditioning embedding~\citep{shao2022spatialtemporalidentitysimpleeffective}:
\begin{equation}
\label{eq:lowfreq_embed}
\mathbf{C}_{t} = \textsc{Mlp}\Big(
    E_x(\widehat{\mathbf{L}}_{t-W+1:t}) \,\big\|\, E_s(\mathbf{v}) \,\big\|\, E_p(t)
\Big),
\end{equation}
where $\|$ denotes concatenation, $\widehat{\mathbf{L}}_{t-W+1:t}$ is the low-frequency history, $\mathbf{v}$ is node identity, and $E_x$, $E_s$, $E_p$ are the low-frequency, spatial, and periodic-time embeddings, respectively (with $E_p$ concatenating time-of-day and day-of-week)~\citep{zheng2020gman,huang2024mgteformer}.

On the high-frequency side, $\widehat{\mathbf{H}}_t$ mainly captures localized transient fluctuations with weak cross-node dependence as illustrated in Figure~\ref{fig:corr-dist}.
Therefore, introducing an additional learnable spatio-temporal backbone on $\widehat{\mathbf{H}}_t$ is unnecessary and may even reintroduce spatial coupling into the residual representation, degrading calibration robustness.
Accordingly, we summarize $\widehat{\mathbf{H}}_{t-W+1:t}$ using lightweight per-node temporal statistics as the high-frequency representation for subsequent conditional quantile modulation:
\begin{subequations}
\label{eq:stat_feat}
\begin{flalign}
& \textsc{Std}(\widehat{\mathbf{H}}_{t-W+1:t})
= \sqrt{\frac{1}{W}\sum_{\tau=t-W+1}^{t}(\widehat{\mathbf{H}}_\tau - \bar{{\mathbf{H}}})^2}, \label{eq:stat_feat_a}\\
& \textsc{Rms}(\widehat{\mathbf{H}}_{t-W+1:t})
= \sqrt{\frac{1}{W}\sum_{\tau=t-W+1}^{t}\widehat{\mathbf{H}}_\tau^2}, \label{eq:stat_feat_b}\\
& \mathbf{M}_{t}
= \textsc{Std}(\widehat{\mathbf{H}}_{t-W+1:t}) \,\|\, \textsc{Rms}(\widehat{\mathbf{H}}_{t-W+1:t}), \label{eq:stat_feat_c}
\end{flalign}
\end{subequations}
where $\bar{{\mathbf{H}}} = \frac{1}{W}\sum_{\tau=t-W+1}^{t}\widehat{\mathbf{H}}_\tau$ is the temporal mean. Here, the standard deviation $\textsc{Std}$ over the look-back window captures fluctuations, and $\textsc{Rms}$ characterizes residual energy via the square-and-average magnitude.

Building upon these representations, SCALE introduces a \emph{low-frequency conditioned} quantile predictor. As shown in Figure~\ref{fig:framework}, the low-frequency branch outputs an embedding $\mathbf{C}_{t}$, which is then projected to a quantile-channel feature map $\mathbf{Z}_{t+1:t+K}^{\mathrm{L}}$. Moreover, a high-frequency branch maps $\mathbf{M}_{t}$ to a parallel feature map $\mathbf{Z}_{t+1:t+K}^{\mathrm{H}}$. Conditioning is implemented via an adaptive gate where the low-frequency embedding generates a gating map~\citep{xiong2024gated}:
\begin{subequations}\label{eq:quantile_fusion}
\begin{align}
\mathbf{G}_{t+1:t+K}
&= \textsc{Sigmoid}\big(\textsc{Proj}_{\mathrm{gate}}(\mathbf{C}_{t})\big), \label{eq:quantile_fusion_a}\\
\widehat{\mathbf{Q}}_{t+1:t+K}^{\alpha}
&= \mathbf{Z}_{t+1:t+K}^{\mathrm{L}}
 + \mathbf{G}_{t+1:t+K} \odot \mathbf{Z}_{t+1:t+K}^{\mathrm{H}}. \label{eq:quantile_fusion_b}
\end{align}
\end{subequations}
Here, $\widehat{\mathbf{Q}}_{t+1:t+K}^{\alpha}$ denotes the predicted quantile sequence conditioned on $(\mathbf{M}_{t}; \mathbf{C}_{t})$. The entire network is trained end-to-end to minimize the pinball loss, i.e., the quantile regression loss~\citep{koenker1978regression}
$\ell_\alpha(y,\widehat{q}) = \max\{\alpha (y-\widehat{q}), (\alpha-1)(y-\widehat{q})\}$,
which asymmetrically penalizes under- and over-estimation at miscoverage rate $\alpha$.

\section{Theoretical Analyses}
\label{sec:theory}

In this section, we provide theoretical analyses for the proposed SCALE method, including finite-sample coverage guarantees for prediction intervals under perfect and imperfect approximation of wavelet transform.

Based on SGCE (Definition~\ref{def:sgce}), we derive the following validity guarantee for the proposed SCALE method.

\begin{theorem}[Finite-Sample Coverage under SGCE]
\label{thm:coverage_main_paper}
Suppose that the frequency components of the residual $\mathbf{R}_{t-W+1:t}$ satisfy SGCE (Definition~\ref{def:sgce}). For $\forall\;\alpha \in (0, 1)$, let $\widehat{\mathcal{C}}_{t+1:t+K}^{\text{X},\alpha}$ denote the prediction interval of our SCALE method, and then it holds that
\begin{equation}
\label{eq:final_coverage}
    \mathbb{P}\left(\mathbf{X}_{t+1:t+K} \in \widehat{\mathcal{C}}_{t+1:t+K}^{\text{X},\alpha} \right) \ge 1 - \alpha.
\end{equation}
\end{theorem}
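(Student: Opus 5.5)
The plan is to reduce the statement to the standard split-conformal quantile lemma, applied conditionally on the low-frequency components, and then push the resulting coverage back to the original signal domain. Since the SCALE network $H_{\boldsymbol{\phi}}$ is trained on a separate split, I treat it as fixed. I also take the interval $\widehat{\mathcal{C}}_{t+1:t+K}^{\text{X},\alpha}$ to be the conformalized version of Eq.~\eqref{eq:quantile_fusion_b}. Concretely, I define a nonconformity score on the high-frequency channel,
\begin{equation*}
S_t = \max\big\{\widehat{\mathbf{Q}}^{\alpha/2}_{t+1:t+K} - \mathbf{H}_{t+1:t+K},\ \mathbf{H}_{t+1:t+K} - \widehat{\mathbf{Q}}^{1-\alpha/2}_{t+1:t+K}\big\},
\end{equation*}
with the maximum taken over all nodes and horizons. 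Because of the gate in Eq.~\eqref{eq:quantile_fusion_a}, $S_t$ depends on $(\mathcal{H}_t,\mathcal{L}_t)$ only through a fixed measurable map $f(\mathcal{H}_t;\mathcal{L}_t)$.

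\textbf{Step 1.} Conditional on $\mathcal{L}_{t_1},\dots,\mathcal{L}_{t_n},\mathcal{L}_{t}$, Definition~\ref{def:sgce} says the high-frequency blocks $\mathcal{H}_{t_1},\dots,\mathcal{H}_{t_n},\mathcal{H}_t$ (calibration indices together with the test index) are exchangeable. I will check that permuting the $\mathcal{H}$'s jointly with their own $\mathcal{L}$'s makes the scores $S_{t_1},\dots,S_{t_n},S_t$ exchangeable, since each score is the same function $f$ applied to a pair $(\mathcal{H},\mathcal{L})$.

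\textbf{Step 2.} Let $\hat q$ be the $\lceil (1-\alpha)(n+1)\rceil$-th smallest calibration score. The usual rank argument then gives $\mathbb{P}(S_t\le \hat q\mid \mathcal{L}) \ge 1-\alpha$, and the tower property removes the conditioning. The resulting high-frequency band is
\begin{equation*}
\big[\widehat{\mathbf{Q}}^{\alpha/2}-\hat q,\ \widehat{\mathbf{Q}}^{1-\alpha/2}+\hat q\big].
\end{equation*}

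\textbf{Step 3.} Map the band back to $\mathbf{X}$. Assuming perfect wavelet approximation (the theorem's setting, as opposed to the imperfect case treated later), Eq.~\eqref{eq:graph_wavelet_decomposition} with $\mathbf{\Phi}=\mathbf{I}$ gives $\mathbf{R}=\mathbf{L}+\mathbf{H}$, and so $\mathbf{X}_{t+1:t+K}=\widehat{\mathbf{X}}_{t+1:t+K}+\mathbf{L}_{t+1:t+K}+\mathbf{H}_{t+1:t+K}$. The low-frequency part is the conditioning (predictable) component. The interval is therefore defined as $\widehat{\mathbf{X}}$ plus the estimated low-frequency trend plus the high-frequency band. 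The event $\{\mathbf{X}\in\widehat{\mathcal{C}}^{\text{X},\alpha}\}$ then contains $\{S_t\le\hat q\}$, which yields Eq.~\eqref{eq:final_coverage}.

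The main obstacle is Step 3. The test-time low-frequency future $\mathbf{L}_{t+1:t+K}$ is not observed, so the inclusion needs it to be either absorbed into $\widehat{\mathbf{Q}}$ via $\mathbf{Z}^{\mathrm{L}}$ or treated as determined by the conditioning sigma-algebra. My first approach is to define the score directly on the residual $\mathbf{R}_{t+1:t+K}$ minus $\mathbf{Z}^{\mathrm{L}}$. Exchangeability of this score follows if the SGCE conditioning is read as including the future low-frequency block in $\mathcal{L}_t$. I will state that interpretation explicitly as the hypothesis under which the theorem holds.
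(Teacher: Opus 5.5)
Your skeleton (exchangeable scores from SGCE, the split-conformal rank lemma, then translating the high-frequency set back to $\mathbf{X}$) matches the paper's, but the step that carries the weight fails as you set it up.

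You make the score depend on each unit's own low-frequency block, $S_i=f(\mathcal{H}_i;\mathcal{L}_i)$. You then run the rank argument conditionally on $\mathcal{L}$. Once you condition on $(\mathcal{L}_{t_1},\dots,\mathcal{L}_{t_n},\mathcal{L}_t)=(\ell_1,\dots,\ell_{n+1})$, the $\ell_i$ are fixed and cannot be permuted ``jointly'' with the $\mathcal{H}$'s. Conditional exchangeability of the $\mathcal{H}_i$ gives only that $(f(\mathcal{H}_{\sigma(i)};\ell_i))_i$ has the law of $(f(\mathcal{H}_i;\ell_i))_i$. The rank lemma needs this for $(f(\mathcal{H}_{\sigma(i)};\ell_{\sigma(i)}))_i$, and the two differ as soon as the $\ell_i$ differ. (The periodic-time embedding $E_p(t)$ inside $\mathbf{C}_t$ makes $f$ unit-dependent in the same way.)

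This is not a technicality. Take the $\mathcal{H}_i$ i.i.d.\ and independent of deterministic $\ell_i$, so SGCE holds under any reading. Let $f(h;\ell)=h-c\,\ell$, and give every calibration unit $\ell=1$ and the test unit $\ell=0$. The coverage then tends to $0$ as $c\to\infty$, conditionally and marginally.

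Reading SGCE as a joint permutation of pairs rescues only the marginal bound. Even that requires the pairs $(\mathcal{H}_i,\mathcal{L}_i)$ to be exchangeable, hence the $\mathcal{L}$-sequence itself. That is not part of SGCE, and it is exactly the low-frequency shift the theorem claims to tolerate. Your proposed fix for Step~3 (scoring $\mathbf{R}_{t+1:t+K}-\mathbf{Z}^{\mathrm{L}}$ with the future low-frequency block added to the conditioning) puts even more $\mathcal{L}$-dependence into the score, so it hits the same wall.

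The paper avoids both problems. First, it scores only the high-frequency block, $r^{\mathrm{H}}_i=S(\mathcal{H}_i)$ with one fixed $S$. Conditional on the $\mathcal{L}$'s, the scores are then a fixed function of conditionally exchangeable inputs, and the rank lemma plus the tower property go through. Second, its Step~3 is an exact identity rather than an inclusion requiring the unobserved $\mathbf{L}_{t+1:t+K}$. It writes $\mathbf{X}_{t+1:t+K}=\widehat{\mathbf{X}}_{t+1:t+K}+\widehat{\mathbf{L}}_{t+1:t+K}+\mathbf{H}_{t+1:t+K}$, so the high-frequency block is whatever remains after the estimated trend. It defines the final set as the translate $\{\widehat{\boldsymbol{\Psi}}\}\oplus\widehat{\mathcal{C}}^{\mathrm{H},\alpha}$ with $\widehat{\boldsymbol{\Psi}}=\widehat{\mathbf{X}}+\widehat{\mathbf{L}}$, so the two coverage events coincide.

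To repair your proof, you have two options:
\begin{itemize}
\item Adopt that convention and keep every low-frequency-dependent quantity (the gate $\mathbf{G}$ and $\mathbf{Z}^{\mathrm{L}}$) out of the score.
\item Keep the gated score, assume the pairs $(\mathcal{H}_i,\mathcal{L}_i)$ are exchangeable, and run the rank argument unconditionally. In that case the low-frequency conditioning contributes nothing to validity.
\end{itemize}
Either way, SGCE alone supports an $\mathcal{H}$-only score, not SCALE's gated quantiles as such.
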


Theorem~\ref{thm:coverage_main_paper} guarantees that our framework maintains finite-sample validity regardless of the distribution shifts in the low-frequency trends $\mathcal{L}_t$, provided that the high-frequency components $\mathcal{H}_t$ remain conditionally exchangeable. 
The formal statement and proof are provided in Appendix~\ref{app:theory}.

\textbf{Robustness to wavelet approximation.} 
In practice, the components $\widehat{\mathbf{L}}_t$ and $\widehat{\mathbf{H}}_t$ obtained via SGWT are \textit{empirical estimates}. To bridge the gap between ideal spectral decoupling and practical implementation, we provide a robust validity guarantee that explicitly accounts for wavelet approximation error and high-frequency couplings.

\begin{theorem}[Informal: Approximate Coverage under Imperfect Spectral Decomposition]
\label{thm:approx_coverage_paper} 
Suppose that the spectral decomposition admits bounded reconstruction error and high-frequency components exhibit bounded deviation from exchangeability. For $\forall \alpha \in (0, 1)$, let $\widehat{\mathcal{C}}_{t+1:t+K}^{sgwt,\alpha}$ be the prediction interval constructed via graph wavelet transform and spectral-domain conformal calibration. Then, $\widehat{\mathcal{C}}_{t+1:t+K}^{sgwt,\alpha}$ satisfies the approximate coverage guarantee
\begin{equation}
\mathbb{P}\!\left(
\mathbf{X}_{t+1:t+K}
\in
\widehat{\mathcal{C}}_{t+1:t+K}^{\text{sgwt},\alpha}
\right)
\ge
1 - \alpha - \delta,
\end{equation}
where the coverage gap $\delta$ decomposes as $\delta=\delta_{\text{leak}}+\delta_{\text{dep}}$. Here, $\delta_{\text{leak}}$ and $\delta_{\text{dep}}$ capture spectral leakage effects and spatial couplings, respectively.
\end{theorem}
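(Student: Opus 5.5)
The plan is to reduce the approximate statement to Theorem~\ref{thm:coverage_main_paper} through an idealized interval, and then pay separately for the two departures from the ideal setting.

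\textbf{Formal assumptions.} I would make the informal hypotheses precise in two pieces.

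\emph{(A1) Bounded leakage.} Let $(\mathbf{L}_t,\mathbf{H}_t)$ be the ideal components, and write
\begin{equation*}
\widehat{\mathbf{H}}_t=\mathbf{H}_t+\mathbf{E}_t,
\end{equation*}
where $\mathbf{E}_t$ is the leakage error of SGWT. Since the kernels do not form an exact partition of unity, $\mathbf{E}_t$ is controlled by $\sup_\lambda|h(\lambda)+\sum_s g_s(\lambda)-1|$ times $\|\mathbf{R}_t\|$. I would assume $\mathbb{P}(\|\mathbf{E}_t\|_\infty>\varepsilon)\le\eta$. I would also assume the conditional distribution of the high-frequency score has density bounded by $f_{\max}$, so that an $\varepsilon$ shift moves quantiles by at most $f_{\max}\varepsilon$ in probability.

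\emph{(A2) Bounded dependence.} I would adopt the Barber et al.\ (2023) framework and define
\begin{equation*}
d_{\mathrm{TV}}\big(\mathcal{S},\mathcal{S}^{\sigma}\big)\le \epsilon_i
\end{equation*}
for the score vector $\mathcal{S}$ built from $\widehat{\mathcal{H}}_{t_1},\dots,\widehat{\mathcal{H}}_{t_n},\widehat{\mathcal{H}}_{t+1}$ (conditioned on the $\mathcal{L}$'s) and its version $\mathcal{S}^{\sigma}$ with the test point swapped with calibration point $i$. This quantifies the deviation from SGCE.

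\textbf{Steps.}

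\emph{Step 1: ideal interval.} Define the ideal interval $\mathcal{C}^{\star}$ from the exact components. Under exact SGCE, Theorem~\ref{thm:coverage_main_paper} gives coverage $\ge 1-\alpha$. The argument conditions on $\mathcal{L}$: the gated quantile map is then a fixed function of $\mathcal{H}$, and the test score's rank among calibration scores is uniform.

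\emph{Step 2: dependence term.} Replace exact SGCE by (A2) and follow the Barber et al.\ weighted-swap argument. The test rank is then uniform up to total variation, which yields coverage $\ge 1-\alpha-\sum_i w_i\epsilon_i$. Set $\delta_{\mathrm{dep}}=\sum_i w_i\epsilon_i$ with uniform weights $w_i=1/(n+1)$.

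\emph{Step 3: leakage term.} Compare $\widehat{\mathcal{C}}^{\mathrm{sgwt}}$ with the interval built from the true $\mathbf{H}$. Both the calibration scores and the test score shift by at most $\varepsilon$ on the event $\{\|\mathbf{E}\|_\infty\le\varepsilon\}$. Their empirical quantile therefore shifts by at most $\varepsilon$, and the set difference of the two intervals is a band of width $O(\varepsilon)$. With the density bound, the probability of the test residual landing in that band is at most $2f_{\max}\varepsilon$, so
\begin{equation*}
\delta_{\mathrm{leak}}=2f_{\max}\varepsilon+\eta .
\end{equation*}
The low-frequency reconstruction error enters only through the conditioning embedding $\mathbf{C}_t$. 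This is harmless because the score is a measurable function of $\mathbf{C}_t$ applied symmetrically to calibration and test points. I would check that carefully.

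\emph{Step 4: union bound.} A union bound over the two failure modes gives the bound $1-\alpha-\delta_{\mathrm{leak}}-\delta_{\mathrm{dep}}$.

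\textbf{Main obstacle.} The hardest step is Step 3. The interval must be mapped back from the spectral domain to $\mathbf{X}_{t+1:t+K}$, i.e., we need that coverage of the high-frequency residual, given the conditioned low-frequency quantile offset $\mathbf{Z}^{\mathrm{L}}$, implies coverage of $\mathbf{X}$ itself. Because $\mathbf{\Phi}$ is not exactly invertible, the reconstruction error $\mathbf{\Phi}\mathbf{R}-\mathbf{R}$ adds to the leakage. I would absorb it into $\mathbf{E}_t$ by defining the leakage relative to the identity decomposition $\mathbf{R}_t=\mathbf{L}_t+\mathbf{H}_t$ rather than relative to $\mathbf{\Phi}$.
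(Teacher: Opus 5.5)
Your route differs from the paper's. The paper posits an idealized exchangeable law $Q$ and sets $\delta=d_{\text{TV}}(\Pi_P(r),\Pi_Q(\tilde r))$. It then splits this by the triangle inequality into $\delta_{\text{leak}}=d_{\text{TV}}(\Pi_P(r),\Pi_P(\tilde r))$ and $\delta_{\text{dep}}=d_{\text{TV}}(\Pi_P(\tilde r),\Pi_Q(\tilde r))$. Leakage is bounded using a score that is $C_s$-Lipschitz in both arguments and a $C_e$-Lipschitz score CDF, giving $C_eC_s\,\mathbb{E}\|\mathcal{L}_t-\widehat{\mathcal{L}}_t\|$; the factor $2$ comes from $\widehat{\mathbf{H}}_t=\mathbf{X}_t-\widehat{\mathbf{L}}_t$. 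You instead use Barber et al.'s swap-TV bound for dependence and a perturbation/anti-concentration argument for leakage.

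Yours is the more defensible mechanism. A pointwise bound $|r_i-\tilde r_i|\le\varepsilon$ plus a Lipschitz CDF controls a Kolmogorov-type distance, not total variation, which can equal $1$ under arbitrarily small perturbations. So the paper's Step 2 does not actually produce a TV bound. Your band argument uses exactly what coverage needs: the chance that the test score lies within $O(\varepsilon)$ of the threshold.

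Barber's inequality also holds for any joint law, so you need neither $Q$ nor your Step 1, and you should drop Step 1. Its justification is off anyway: after conditioning on the $\mathcal{L}$'s, the gated map for unit $i$ depends on $\mathcal{L}_i$. It is therefore not one fixed function of $\mathcal{H}$, and conditional exchangeability of the $\mathcal{H}$'s does not pass to the scores. Your fix for the reconstruction problem, taking $\widehat{\mathbf{H}}_t:=\mathbf{R}_t-\widehat{\mathbf{L}}_t$, is exactly the paper's convention.

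As written, however, Step 3 does not yield $\delta_{\text{leak}}=2f_{\max}\varepsilon+\eta$ from your assumptions. There are four problems.

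(i) The empirical quantile moves by at most $\varepsilon$ only if \emph{every} calibration score does, so you need small leakage on all $n+1$ windows simultaneously. (A1) is per-$t$, and a union bound gives $(n+1)\eta$, which is vacuous unless $\eta\ll 1/n$. State (A1) jointly, as $\mathbb{P}(\max_i\|\mathbf{E}_{t_i}\|_\infty>\varepsilon)\le\eta$ over calibration and test windows. The paper's single-window expectation glosses over the same point.

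(ii) The band $(\hat q(\tilde r)-2\varepsilon,\hat q(\tilde r)]$ is random and determined by calibration windows that are temporally dependent on the test window. A density bound for the test score given the $\mathcal{L}$'s therefore does not control the probability of landing in it. You need the conditional density of $\tilde r_{\text{test}}$ given the calibration scores (or given $\hat q$) to be at most $f_{\max}$.

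(iii) (A2) is stated for scores built from $\widehat{\mathcal{H}}$, which already contain the leakage. Barber's bound would then apply directly to the actual scores, and Step 3 would double count. Instead, define $\epsilon_i$ on the ideal scores, obtain $\mathbb{P}(\tilde r_{\text{test}}\le\hat q(\tilde r))\ge1-\alpha-\delta_{\text{dep}}$, and then transfer.

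(iv) The low-frequency error is not free. Under the convention above, $\mathbf{E}_t=\mathbf{L}_t-\widehat{\mathbf{L}}_t$ enters the high-frequency argument directly. In the conditioning slot it must either be paid for by a Lipschitz condition on the score in $\mathbf{C}_t$, which is the role of the paper's two-argument Lipschitz assumption, or be placed inside the ideal score so that it is absorbed into $\epsilon_i$. Applying a fixed map symmetrically yields exchangeable scores only when its inputs are exchangeable, and that is exactly what fails when the trends shift.
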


\textbf{Remark.} The formal statement of Theorem~\ref{thm:approx_coverage_paper} is provided in Appendix~\ref{app:theory_robustness}, which formalizes the trade-off between implementation constraints and theoretical validity. It demonstrates that the coverage gap is controllable: as the spectral filter approaches an ideal band-pass ($\delta_{\text{leak}} \to 0$) and the high-frequency couplings vanish ($\delta_{\text{dep}} \to 0$), the total gap disappears ($\delta \to 0$). This implies that our framework \textit{asymptotically recovers} the finite-sample coverage guarantee.

\begin{table*}[t]
    \centering
    \small
    \setlength{\tabcolsep}{3pt}
    \renewcommand{\arraystretch}{1.1}
    \caption{Residual-interval results using a GRU + DiffConv template backbone on METR-LA, PEMS07, PEMS08, and PEMS04 at $\alpha \in \{0.05,0.1,0.2\}$. Coverage, PI-Width, and Winkler are reported as mean$\pm$std across seeds for trainable methods. }
    \label{tab:main_results_combined}
    \resizebox{\textwidth}{!}{%
    \begin{tabular}{l ccc ccc ccc ccc}
        \toprule
        \rowcolor{gray!15}
        \textbf{Method} & \multicolumn{3}{c}{\textbf{METR-LA}} & \multicolumn{3}{c}{\textbf{PEMS07}} & \multicolumn{3}{c}{\textbf{PEMS08}} & \multicolumn{3}{c}{\textbf{PEMS04}} \\
        \cmidrule(lr){2-4} \cmidrule(lr){5-7} \cmidrule(lr){8-10} \cmidrule(lr){11-13}
        \rowcolor{gray!15}
         & Coverage $\textcolor{black}{\boldsymbol{\uparrow}}$ & PI-Width $\textcolor{black}{\boldsymbol{\downarrow}}$ & Winkler $\textcolor{black}{\boldsymbol{\downarrow}}$ & Coverage $\textcolor{black}{\boldsymbol{\uparrow}}$ & PI-Width $\textcolor{black}{\boldsymbol{\downarrow}}$ & Winkler $\textcolor{black}{\boldsymbol{\downarrow}}$ & Coverage $\textcolor{black}{\boldsymbol{\uparrow}}$ & PI-Width $\textcolor{black}{\boldsymbol{\downarrow}}$ & Winkler $\textcolor{black}{\boldsymbol{\downarrow}}$ & Coverage $\textcolor{black}{\boldsymbol{\uparrow}}$ & PI-Width $\textcolor{black}{\boldsymbol{\downarrow}}$ & Winkler $\textcolor{black}{\boldsymbol{\downarrow}}$ \\
        \midrule
        \multicolumn{13}{l}{\textcolor{gray!80}{\textit{$\alpha=0.05$}}} \\
        \mSCP & 0.9470\,$\pm$\,0.0000\,\textcolor{green!60!black}{\ding{51}} & 13.62\,$\pm$\,0.00 & 21.98\,$\pm$\,0.00 & 0.9459\,$\pm$\,0.0000\,\textcolor{green!60!black}{\ding{51}} & 95.52\,$\pm$\,0.00 & 139.31\,$\pm$\,0.00 & 0.9493\,$\pm$\,0.0000\,\textcolor{green!60!black}{\ding{51}} & 72.59\,$\pm$\,0.00 & 102.42\,$\pm$\,0.00 & 0.9533\,$\pm$\,0.0000\,\textcolor{green!60!black}{\ding{51}} & 101.38\,$\pm$\,0.00 & 135.44\,$\pm$\,0.00 \\
        \mSeqCP & 0.9097\,$\pm$\,0.0000\,\textcolor{red}{\ding{55}} & 12.02\,$\pm$\,0.00 & 22.24\,$\pm$\,0.00 & 0.9195\,$\pm$\,0.0000\,\textcolor{red}{\ding{55}} & 88.21\,$\pm$\,0.00 & 139.82\,$\pm$\,0.00 & 0.9202\,$\pm$\,0.0000\,\textcolor{red}{\ding{55}} & 66.15\,$\pm$\,0.00 & 104.98\,$\pm$\,0.00 & 0.9067\,$\pm$\,0.0000\,\textcolor{red}{\ding{55}} & 86.36\,$\pm$\,0.00 & 141.68\,$\pm$\,0.00 \\
        \mCoREL & 0.9509\,$\pm$\,0.0041\,\textcolor{green!60!black}{\ding{51}} & 12.19\,$\pm$\,0.44 & 18.10\,$\pm$\,0.24 & 0.9497\,$\pm$\,0.0009\,\textcolor{green!60!black}{\ding{51}} & 86.08\,$\pm$\,0.91 & 115.58\,$\pm$\,0.93 & 0.9504\,$\pm$\,0.0012\,\textcolor{green!60!black}{\ding{51}} & 64.55\,$\pm$\,0.73 & 84.58\,$\pm$\,0.48 & 0.9538\,$\pm$\,0.0028\,\textcolor{green!60!black}{\ding{51}} & 84.22\,$\pm$\,1.12 & \textbf{108.33\,$\pm$\,0.23} \\
        \mNexCP & 0.9433\,$\pm$\,0.0000\,\textcolor{green!60!black}{\ding{51}} & 13.33\,$\pm$\,0.00 & 21.20\,$\pm$\,0.00 & 0.9452\,$\pm$\,0.0000\,\textcolor{green!60!black}{\ding{51}} & 94.63\,$\pm$\,0.00 & 132.92\,$\pm$\,0.00 & 0.9458\,$\pm$\,0.0000\,\textcolor{green!60!black}{\ding{51}} & 70.98\,$\pm$\,0.00 & 99.02\,$\pm$\,0.00 & 0.9438\,$\pm$\,0.0000\,\textcolor{green!60!black}{\ding{51}} & 95.35\,$\pm$\,0.00 & 132.17\,$\pm$\,0.00 \\
        \mEnbPI & 0.9488\,$\pm$\,0.0000\,\textcolor{green!60!black}{\ding{51}} & 13.65\,$\pm$\,0.00 & 21.90\,$\pm$\,0.00 & 0.9485\,$\pm$\,0.0000\,\textcolor{green!60!black}{\ding{51}} & 95.19\,$\pm$\,0.00 & 136.59\,$\pm$\,0.00 & 0.9501\,$\pm$\,0.0000\,\textcolor{green!60!black}{\ding{51}} & 72.02\,$\pm$\,0.02 & 101.08\,$\pm$\,0.01 & 0.9509\,$\pm$\,0.0000\,\textcolor{green!60!black}{\ding{51}} & 98.98\,$\pm$\,0.01 & 134.52\,$\pm$\,0.01 \\
        \mHopCPT & 0.9349\,$\pm$\,0.0025\,\textcolor{green!60!black}{\ding{51}} & 13.16\,$\pm$\,0.35 & 21.32\,$\pm$\,0.60 & 0.9301\,$\pm$\,0.0254\,\textcolor{green!60!black}{\ding{51}} & 97.46\,$\pm$\,3.57 & 144.38\,$\pm$\,19.38 & 0.9432\,$\pm$\,0.0012\,\textcolor{green!60!black}{\ding{51}} & 70.76\,$\pm$\,2.24 & 99.95\,$\pm$\,1.49 & 0.9425\,$\pm$\,0.0012\,\textcolor{green!60!black}{\ding{51}} & 94.18\,$\pm$\,1.20 & 131.32\,$\pm$\,2.45 \\
        \mConForME & 0.9471\,$\pm$\,0.0000\,\textcolor{green!60!black}{\ding{51}} & 13.84\,$\pm$\,0.00 & 22.61\,$\pm$\,0.00 & 0.9463\,$\pm$\,0.0000\,\textcolor{green!60!black}{\ding{51}} & 95.89\,$\pm$\,0.00 & 139.56\,$\pm$\,0.00 & 0.9496\,$\pm$\,0.0000\,\textcolor{green!60!black}{\ding{51}} & 72.89\,$\pm$\,0.00 & 102.92\,$\pm$\,0.00 & 0.9529\,$\pm$\,0.0000\,\textcolor{green!60!black}{\ding{51}} & 101.48\,$\pm$\,0.00 & 136.33\,$\pm$\,0.00 \\
        \midrule
        SCALE w/o LF & 0.9473\,$\pm$\,0.0010\,\textcolor{green!60!black}{\ding{51}} & 13.22\,$\pm$\,0.07 & 20.14\,$\pm$\,0.01 & 0.9370\,$\pm$\,0.0010\,\textcolor{green!60!black}{\ding{51}} & 96.15\,$\pm$\,0.55 & 151.31\,$\pm$\,0.19 & 0.9207\,$\pm$\,0.0027\,\textcolor{red}{\ding{55}} & 60.15\,$\pm$\,0.75 & 105.46\,$\pm$\,0.56 & 0.8995\,$\pm$\,0.0049\,\textcolor{red}{\ding{55}} & 70.15\,$\pm$\,1.44 & 152.71\,$\pm$\,2.04 \\
        SCALE w/o SGWT & 0.9448\,$\pm$\,0.0061\,\textcolor{green!60!black}{\ding{51}} & 11.99\,$\pm$\,0.43 & 18.76\,$\pm$\,0.06 & 0.9514\,$\pm$\,0.0063\,\textcolor{green!60!black}{\ding{51}} & 88.63\,$\pm$\,3.21 & 118.84\,$\pm$\,0.36 & 0.9497\,$\pm$\,0.0087\,\textcolor{green!60!black}{\ding{51}} & 64.84\,$\pm$\,2.85 & 86.20\,$\pm$\,0.20 & 0.9579\,$\pm$\,0.0041\,\textcolor{green!60!black}{\ding{51}} & 89.73\,$\pm$\,2.25 & 113.23\,$\pm$\,0.47 \\
        \rowcolor[HTML]{D7F6FF} SCALE & 0.9436\,$\pm$\,0.0030\,\textcolor{green!60!black}{\ding{51}} & \textbf{11.69\,$\pm$\,0.23} & \textbf{17.92\,$\pm$\,0.07} & 0.9481\,$\pm$\,0.0023\,\textcolor{green!60!black}{\ding{51}} & \textbf{81.64\,$\pm$\,0.92} & \textbf{115.09\,$\pm$\,0.11} & 0.9551\,$\pm$\,0.0023\,\textcolor{green!60!black}{\ding{51}} & \textbf{61.16\,$\pm$\,0.90} & \textbf{81.63\,$\pm$\,0.14} & 0.9486\,$\pm$\,0.0023\,\textcolor{green!60!black}{\ding{51}} & \textbf{82.80\,$\pm$\,0.92} & 109.66\,$\pm$\,0.10 \\
        \midrule
        \multicolumn{13}{l}{\textcolor{gray!80}{\textit{$\alpha=0.10$}}} \\
        \mSCP & 0.8950\,$\pm$\,0.0000\,\textcolor{green!60!black}{\ding{51}} & 9.46\,$\pm$\,0.00 & 16.79\,$\pm$\,0.00 & 0.8947\,$\pm$\,0.0000\,\textcolor{green!60!black}{\ding{51}} & 73.88\,$\pm$\,0.00 & 111.84\,$\pm$\,0.00 & 0.9003\,$\pm$\,0.0000\,\textcolor{green!60!black}{\ding{51}} & 56.27\,$\pm$\,0.00 & 82.50\,$\pm$\,0.00 & 0.9055\,$\pm$\,0.0000\,\textcolor{green!60!black}{\ding{51}} & 78.82\,$\pm$\,0.00 & 110.69\,$\pm$\,0.00 \\
        \mSeqCP & 0.8554\,$\pm$\,0.0000\,\textcolor{red}{\ding{55}} & 8.79\,$\pm$\,0.00 & 16.65\,$\pm$\,0.00 & 0.8677\,$\pm$\,0.0000\,\textcolor{red}{\ding{55}} & 70.16\,$\pm$\,0.00 & 111.52\,$\pm$\,0.00 & 0.8682\,$\pm$\,0.0000\,\textcolor{red}{\ding{55}} & 52.77\,$\pm$\,0.00 & 83.68\,$\pm$\,0.00 & 0.8524\,$\pm$\,0.0000\,\textcolor{red}{\ding{55}} & 69.23\,$\pm$\,0.00 & 113.52\,$\pm$\,0.00 \\
        \mCoREL & 0.9031\,$\pm$\,0.0084\,\textcolor{green!60!black}{\ding{51}} & 9.15\,$\pm$\,0.34 & 14.30\,$\pm$\,0.14 & 0.8990\,$\pm$\,0.0034\,\textcolor{green!60!black}{\ding{51}} & 69.64\,$\pm$\,0.78 & 96.29\,$\pm$\,0.67 & 0.8987\,$\pm$\,0.0038\,\textcolor{green!60!black}{\ding{51}} & 52.25\,$\pm$\,0.61 & 71.02\,$\pm$\,0.37 & 0.9056\,$\pm$\,0.0039\,\textcolor{green!60!black}{\ding{51}} & 68.91\,$\pm$\,0.70 & \textbf{91.50\,$\pm$\,0.10} \\
        \mNexCP & 0.8934\,$\pm$\,0.0000\,\textcolor{green!60!black}{\ding{51}} & 9.33\,$\pm$\,0.00 & 16.07\,$\pm$\,0.00 & 0.8953\,$\pm$\,0.0000\,\textcolor{green!60!black}{\ding{51}} & 73.49\,$\pm$\,0.00 & 107.59\,$\pm$\,0.00 & 0.8966\,$\pm$\,0.0000\,\textcolor{green!60!black}{\ding{51}} & 55.35\,$\pm$\,0.00 & 80.32\,$\pm$\,0.00 & 0.8935\,$\pm$\,0.0000\,\textcolor{green!60!black}{\ding{51}} & 74.37\,$\pm$\,0.00 & 107.96\,$\pm$\,0.00 \\
        \mEnbPI & 0.8986\,$\pm$\,0.0000\,\textcolor{green!60!black}{\ding{51}} & 9.45\,$\pm$\,0.00 & 16.67\,$\pm$\,0.00 & 0.8978\,$\pm$\,0.0000\,\textcolor{green!60!black}{\ding{51}} & 73.75\,$\pm$\,0.00 & 110.25\,$\pm$\,0.00 & 0.9004\,$\pm$\,0.0000\,\textcolor{green!60!black}{\ding{51}} & 55.90\,$\pm$\,0.01 & 81.77\,$\pm$\,0.01 & 0.9021\,$\pm$\,0.0000\,\textcolor{green!60!black}{\ding{51}} & 77.08\,$\pm$\,0.01 & 110.01\,$\pm$\,0.00 \\
        \mHopCPT & 0.8832\,$\pm$\,0.0040\,\textcolor{green!60!black}{\ding{51}} & 9.84\,$\pm$\,0.13 & 16.52\,$\pm$\,0.40 & 0.8822\,$\pm$\,0.0245\,\textcolor{green!60!black}{\ding{51}} & 78.22\,$\pm$\,5.18 & 115.98\,$\pm$\,13.61 & 0.8937\,$\pm$\,0.0010\,\textcolor{green!60!black}{\ding{51}} & 56.04\,$\pm$\,2.22 & 81.27\,$\pm$\,1.72 & 0.8925\,$\pm$\,0.0015\,\textcolor{green!60!black}{\ding{51}} & 74.48\,$\pm$\,1.31 & 107.51\,$\pm$\,0.98 \\
        \mConForME & 0.8955\,$\pm$\,0.0000\,\textcolor{green!60!black}{\ding{51}} & 9.34\,$\pm$\,0.00 & 17.09\,$\pm$\,0.00 & 0.8951\,$\pm$\,0.0000\,\textcolor{green!60!black}{\ding{51}} & 74.30\,$\pm$\,0.00 & 112.08\,$\pm$\,0.00 & 0.9002\,$\pm$\,0.0000\,\textcolor{green!60!black}{\ding{51}} & 56.50\,$\pm$\,0.00 & 82.85\,$\pm$\,0.00 & 0.9049\,$\pm$\,0.0000\,\textcolor{green!60!black}{\ding{51}} & 78.83\,$\pm$\,0.00 & 111.31\,$\pm$\,0.00 \\
        \midrule
        SCALE w/o LF & 0.8968\,$\pm$\,0.0019\,\textcolor{green!60!black}{\ding{51}} & 9.56\,$\pm$\,0.07 & 15.72\,$\pm$\,0.00 & 0.8988\,$\pm$\,0.0003\,\textcolor{green!60!black}{\ding{51}} & 78.38\,$\pm$\,0.11 & 119.84\,$\pm$\,0.06 & 0.8968\,$\pm$\,0.0013\,\textcolor{green!60!black}{\ding{51}} & 53.95\,$\pm$\,0.27 & 82.56\,$\pm$\,0.07 & 0.8829\,$\pm$\,0.0033\,\textcolor{green!60!black}{\ding{51}} & \textbf{65.32\,$\pm$\,0.83} & 112.39\,$\pm$\,0.40 \\
        SCALE w/o SGWT & 0.8939\,$\pm$\,0.0106\,\textcolor{green!60!black}{\ding{51}} & \textbf{8.88\,$\pm$\,0.37} & 14.66\,$\pm$\,0.02 & 0.9029\,$\pm$\,0.0105\,\textcolor{green!60!black}{\ding{51}} & 71.21\,$\pm$\,2.51 & 98.34\,$\pm$\,0.25 & 0.8990\,$\pm$\,0.0137\,\textcolor{green!60!black}{\ding{51}} & 52.27\,$\pm$\,2.34 & 72.11\,$\pm$\,0.12 & 0.9144\,$\pm$\,0.0074\,\textcolor{green!60!black}{\ding{51}} & 72.71\,$\pm$\,1.80 & 94.88\,$\pm$\,0.35 \\
        \rowcolor[HTML]{D7F6FF} SCALE & 0.8913\,$\pm$\,0.0039\,\textcolor{green!60!black}{\ding{51}} & 8.89\,$\pm$\,0.16 & \textbf{14.27\,$\pm$\,0.03} & 0.9015\,$\pm$\,0.0038\,\textcolor{green!60!black}{\ding{51}} & \textbf{66.25\,$\pm$\,0.76} & \textbf{94.47\,$\pm$\,0.08} & 0.9098\,$\pm$\,0.0048\,\textcolor{green!60!black}{\ding{51}} & \textbf{49.41\,$\pm$\,0.77} & \textbf{67.92\,$\pm$\,0.11} & 0.9005\,$\pm$\,0.0036\,\textcolor{green!60!black}{\ding{51}} & 67.63\,$\pm$\,0.76 & 91.97\,$\pm$\,0.07 \\
        \midrule
        \multicolumn{13}{l}{\textcolor{gray!80}{\textit{$\alpha=0.20$}}} \\
        \mSCP & 0.7933\,$\pm$\,0.0000\,\textcolor{green!60!black}{\ding{51}} & 5.86\,$\pm$\,0.00 & 12.18\,$\pm$\,0.00 & 0.7941\,$\pm$\,0.0000\,\textcolor{green!60!black}{\ding{51}} & 53.13\,$\pm$\,0.00 & 87.30\,$\pm$\,0.00 & 0.8017\,$\pm$\,0.0000\,\textcolor{green!60!black}{\ding{51}} & 40.38\,$\pm$\,0.00 & 64.65\,$\pm$\,0.00 & 0.8076\,$\pm$\,0.0000\,\textcolor{green!60!black}{\ding{51}} & 55.74\,$\pm$\,0.00 & 87.27\,$\pm$\,0.00 \\
        \mSeqCP & 0.7573\,$\pm$\,0.0000\,\textcolor{red}{\ding{55}} & 5.87\,$\pm$\,0.00 & 11.99\,$\pm$\,0.00 & 0.7726\,$\pm$\,0.0000\,\textcolor{red}{\ding{55}} & 51.84\,$\pm$\,0.00 & 86.78\,$\pm$\,0.00 & 0.7741\,$\pm$\,0.0000\,\textcolor{red}{\ding{55}} & 39.11\,$\pm$\,0.00 & 65.13\,$\pm$\,0.00 & 0.7572\,$\pm$\,0.0000\,\textcolor{red}{\ding{55}} & 51.08\,$\pm$\,0.00 & 88.27\,$\pm$\,0.00 \\
        \mCoREL & 0.8061\,$\pm$\,0.0054\,\textcolor{green!60!black}{\ding{51}} & 6.57\,$\pm$\,0.11 & \textbf{10.96\,$\pm$\,0.06} & 0.7982\,$\pm$\,0.0064\,\textcolor{green!60!black}{\ding{51}} & 52.63\,$\pm$\,0.68 & 78.26\,$\pm$\,0.49 & 0.7978\,$\pm$\,0.0041\,\textcolor{green!60!black}{\ding{51}} & 39.71\,$\pm$\,0.41 & 58.20\,$\pm$\,0.30 & 0.8080\,$\pm$\,0.0053\,\textcolor{green!60!black}{\ding{51}} & 52.66\,$\pm$\,0.56 & 75.37\,$\pm$\,0.06 \\
        \mNexCP & 0.7930\,$\pm$\,0.0000\,\textcolor{green!60!black}{\ding{51}} & 5.95\,$\pm$\,0.00 & 11.67\,$\pm$\,0.00 & 0.7961\,$\pm$\,0.0000\,\textcolor{green!60!black}{\ding{51}} & 53.18\,$\pm$\,0.00 & 84.81\,$\pm$\,0.00 & 0.7974\,$\pm$\,0.0000\,\textcolor{green!60!black}{\ding{51}} & 40.07\,$\pm$\,0.00 & 63.50\,$\pm$\,0.00 & 0.7935\,$\pm$\,0.0000\,\textcolor{green!60!black}{\ding{51}} & 53.10\,$\pm$\,0.00 & 85.40\,$\pm$\,0.00 \\
        \mEnbPI & 0.7986\,$\pm$\,0.0000\,\textcolor{green!60!black}{\ding{51}} & 5.87\,$\pm$\,0.00 & 12.05\,$\pm$\,0.00 & 0.7973\,$\pm$\,0.0000\,\textcolor{green!60!black}{\ding{51}} & 52.94\,$\pm$\,0.00 & 86.38\,$\pm$\,0.00 & 0.8008\,$\pm$\,0.0001\,\textcolor{green!60!black}{\ding{51}} & 40.02\,$\pm$\,0.00 & 64.25\,$\pm$\,0.00 & 0.8029\,$\pm$\,0.0000\,\textcolor{green!60!black}{\ding{51}} & 54.62\,$\pm$\,0.00 & 86.83\,$\pm$\,0.00 \\
        \mHopCPT & 0.7796\,$\pm$\,0.0067\,\textcolor{red}{\ding{55}} & 6.88\,$\pm$\,0.05 & 12.46\,$\pm$\,0.25 & 0.7857\,$\pm$\,0.0219\,\textcolor{green!60!black}{\ding{51}} & 58.16\,$\pm$\,5.11 & 91.46\,$\pm$\,9.77 & 0.7937\,$\pm$\,0.0010\,\textcolor{green!60!black}{\ding{51}} & 40.98\,$\pm$\,2.37 & 64.54\,$\pm$\,1.99 & 0.7916\,$\pm$\,0.0021\,\textcolor{green!60!black}{\ding{51}} & 54.02\,$\pm$\,2.24 & 85.60\,$\pm$\,0.75 \\
        \mConForME & 0.7939\,$\pm$\,0.0000\,\textcolor{green!60!black}{\ding{51}} & \textbf{5.80\,$\pm$\,0.00} & 12.28\,$\pm$\,0.00 & 0.7947\,$\pm$\,0.0000\,\textcolor{green!60!black}{\ding{51}} & 53.53\,$\pm$\,0.00 & 87.55\,$\pm$\,0.00 & 0.8017\,$\pm$\,0.0000\,\textcolor{green!60!black}{\ding{51}} & 40.55\,$\pm$\,0.00 & 64.92\,$\pm$\,0.00 & 0.8065\,$\pm$\,0.0000\,\textcolor{green!60!black}{\ding{51}} & 55.71\,$\pm$\,0.00 & 87.70\,$\pm$\,0.00 \\
        \midrule
        SCALE w/o LF & 0.7973\,$\pm$\,0.0031\,\textcolor{green!60!black}{\ding{51}} & 6.54\,$\pm$\,0.06 & 11.83\,$\pm$\,0.00 & 0.8009\,$\pm$\,0.0003\,\textcolor{green!60!black}{\ding{51}} & 54.02\,$\pm$\,0.04 & 92.20\,$\pm$\,0.03 & 0.8159\,$\pm$\,0.0006\,\textcolor{green!60!black}{\ding{51}} & 40.68\,$\pm$\,0.07 & 64.79\,$\pm$\,0.03 & 0.8243\,$\pm$\,0.0009\,\textcolor{red}{\ding{55}} & 52.76\,$\pm$\,0.15 & 86.23\,$\pm$\,0.10 \\
        SCALE w/o SGWT & 0.7928\,$\pm$\,0.0173\,\textcolor{green!60!black}{\ding{51}} & 6.30\,$\pm$\,0.29 & 11.15\,$\pm$\,0.01 & 0.8035\,$\pm$\,0.0147\,\textcolor{green!60!black}{\ding{51}} & 53.36\,$\pm$\,1.86 & 79.39\,$\pm$\,0.14 & 0.7974\,$\pm$\,0.0211\,\textcolor{green!60!black}{\ding{51}} & 39.23\,$\pm$\,1.94 & 58.79\,$\pm$\,0.10 & 0.8214\,$\pm$\,0.0124\,\textcolor{red}{\ding{55}} & 54.85\,$\pm$\,1.45 & 77.40\,$\pm$\,0.28 \\
        \rowcolor[HTML]{D7F6FF} SCALE & 0.7877\,$\pm$\,0.0045\,\textcolor{green!60!black}{\ding{51}} & 6.37\,$\pm$\,0.13 & 10.99\,$\pm$\,0.02 & 0.8077\,$\pm$\,0.0059\,\textcolor{green!60!black}{\ding{51}} & \textbf{50.19\,$\pm$\,0.59} & \textbf{76.07\,$\pm$\,0.07} & 0.8157\,$\pm$\,0.0065\,\textcolor{green!60!black}{\ding{51}} & \textbf{37.35\,$\pm$\,0.60} & \textbf{55.19\,$\pm$\,0.09} & 0.8028\,$\pm$\,0.0055\,\textcolor{green!60!black}{\ding{51}} & \textbf{51.54\,$\pm$\,0.62} & \textbf{75.31\,$\pm$\,0.06} \\
        \bottomrule
    \end{tabular}}
\end{table*}

\section{Experimental Results}

In this section, we show experimental results on various traffic datasets to show the effectiveness of our SCALE and the validity of the prediction intervals.

\subsection{Experimental Settings}
\textbf{Experimental Setup.} Our experimental pipeline consists of two stages. In the first stage, residual sequences are obtained as the difference between the predictions of a backbone forecaster and the ground truth. In the second stage, prediction intervals are evaluated post hoc on these residuals.
We adopt the common 40\%/40\%/20\% splits for training, calibration, and testing, respectively. All baseline methods operate on the same residual tensors produced by an identical backbone run, with all other settings held fixed for fairness. In the first stage, we adopt STGNN~\citep{cini25relational} for alignment, and we report results with the alternative backbones in Appendix~\ref{sec:appendix_stage1_backbone_ext} to demonstrate the generality of our method.
Across datasets, we keep the backbone forecasting settings unchanged, e.g., $W=12$ and $K=1$.
For SGWT, we fix the kernel family and use the common scale count $S=4$ and select the cutoff $k$ following Appendix~\ref{app:sgwt_cutoff}.

\textbf{Baseline methods.} We compare SCALE against the following baselines: 1) \textbf{SCP}~\citep{vovk2005algorithmic}, the standard split conformal predictor using empirical residual quantiles; 2) \textbf{SeqCP}~\citep{xu2023sequentialpredictiveconformalinference}, which computes empirical quantiles using only the most recent $K$ residuals at each step; 3) \textbf{NexCP}~\citep{barber2023conformal}, which assigns exponentially decaying weights to past residuals; 4) \textbf{EnbPI}~\citep{10121511}, which builds prediction intervals from bootstrap ensembles and an online residual buffer; 5) \textbf{HopCPT}~\citep{auer2023conformalpredictiontimeseries}, which reweights past residuals using a Hopfield-style memory; and 6) \textbf{CoREL}~\citep{cini25relational}, which models cross-series dependence via a relational quantile predictor. For the multi-step analysis, we additionally include \textbf{ConForME}~\citep{pmlr-v230-galvao-lopes24a}, a multi-horizon conditional conformal method. 

\textbf{Datasets.} We evaluate on widely used traffic datasets: \textbf{METR-LA}~\citep{li2018dcrnn}, \textbf{PEMS04}, \textbf{PEMS07}, and \textbf{PEMS08}~\citep{yu2018stgcn}. These datasets are standard benchmarks in spatio-temporal graph forecasting and cover diverse graph sizes and traffic dynamics, which makes them a reasonable test bed for assessing both coverage validity and sharpness across settings.

\textbf{Evaluation Metrics.} We report three metrics at $\alpha \in \{0.05, 0.1, 0.2\}$: empirical \textbf{Coverage}, \textbf{PI-Width}  and \textbf{Winkler}.
PI-Width measures interval sharpness (lower is better)~\citep{gneiting2007}, while Winkler balances coverage and width (lower is better)~\citep{winkler1972}.
Coverage entries include a check or cross depending on whether the absolute coverage error is within $0.02$.
For clarity, we focus on these four representative datasets in the main text.

\subsection{Comparison Experimental Results}

Table~\ref{tab:main_results_combined} shows that SCALE maintains near-nominal coverage, empirically supporting the finite-sample guarantee in Theorem~\ref{thm:coverage_main_paper}. Simultaneously, it achieves the strongest efficiency; notably, on METR-LA ($\alpha=0.05$), SCALE reduces PI width by $\approx$14.4\% compared to EnbPI. This gain validates our motivation (Section~\ref{sec:motivation}) that spectral decomposition effectively decouples spatial correlations, enabling tighter calibration on high-frequency components compared to conservative baselines. Furthermore, the superiority over CoREL highlights the advantage of our SGCE framework: explicitly conditioning on low-frequency trends ensures sharper intervals compared to implicit dependency modeling. This suggests that addressing non-exchangeability in the spectral domain offers a novel perspective for reliable uncertainty quantification beyond purely time-domain adaptations.


\textbf{Ablation studies.} Table~\ref{tab:main_results_combined} also reports \textbf{SCALE w/o SGWT} and \textbf{SCALE w/o LF}. Removing LF conditioning introduces clear coverage violations, while removing SGWT weakens efficiency and stability across settings, which can be attributed to losing the spectral separation that underpins SGCE. Since our method jointly performs SGWT decomposition and low-frequency gating, it maintains reliable coverage while avoiding overly conservative intervals.

\begin{figure}[t]
\centering
\includegraphics[width=\columnwidth]{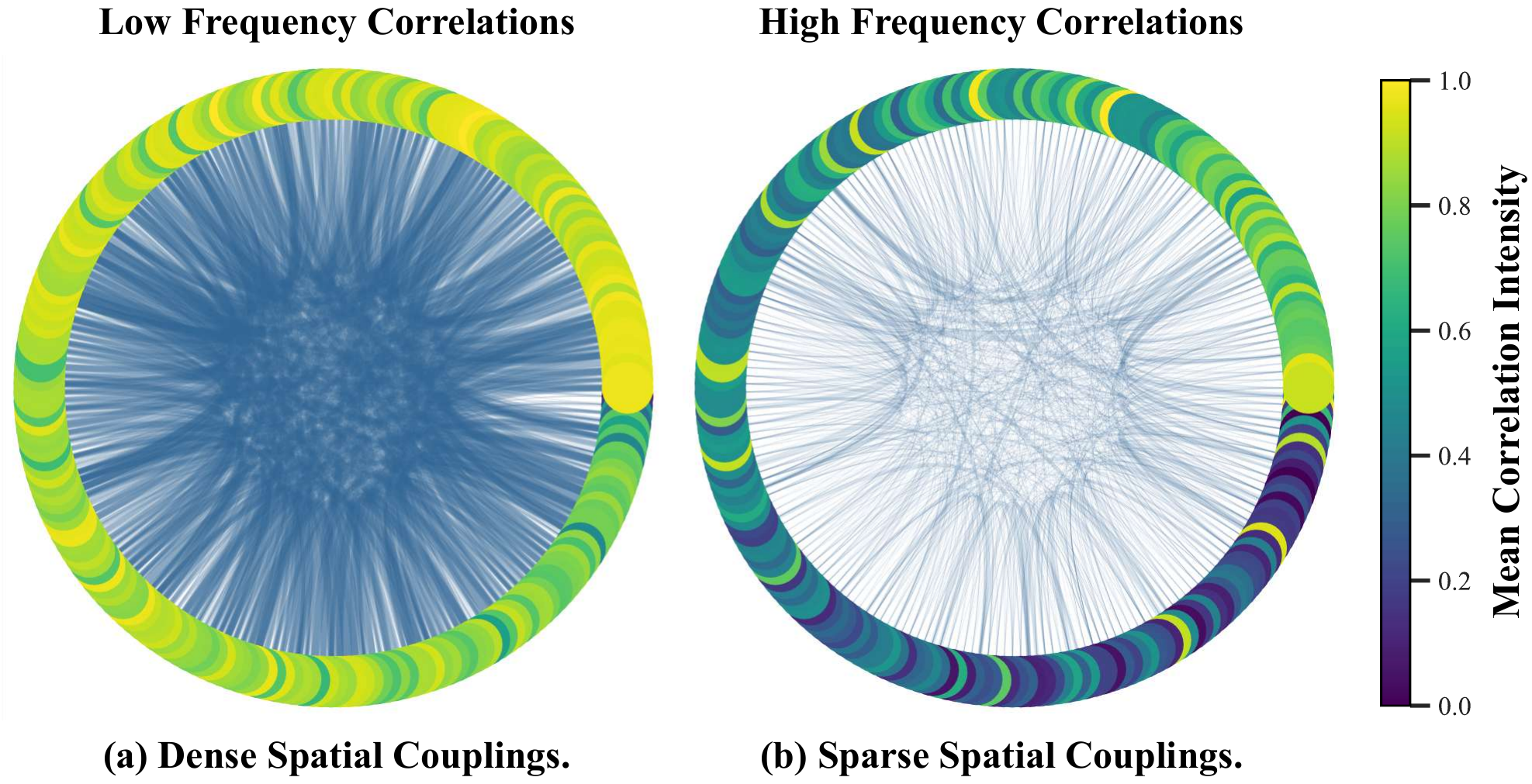}
\caption{
\textbf{Spectral decoupling of spatial correlations on METR-LA.}
Node color shows mean correlation while edge density reflects spatial coupling.
\textbf{(a)} Low-frequency components form dense dependencies.
\textbf{(b)} High-frequency components exhibit sparse connectivity, approaching weak spatial coupling for valid calibration.
}
\label{fig:exp2-corr-chord}
\end{figure}

\textbf{Spectral Decoupling Analysis.}
The success of SCALE can be interpreted through a spectral decoupling perspective. 
Following the definition of correlation intensity $c_i$ in Section~\ref{sec:motivation}, we separately visualize $c_i$ of the low- and high-frequency components. Figure~\ref{fig:exp2-corr-chord} presents the chord diagrams, where node color indicates mean correlation intensity and edge density reflects the degree of spatial coupling. Low-frequency components exhibit consistently stronger correlations and significantly denser topological connectivity than high-frequency components. This empirical observation supports the rationale behind Definition~\ref{def:sgce} and provides mechanistic insight into the performance of SCALE.

\subsection{Performance on Multi-step Forecasting}
We evaluate horizon-wise interval quality on METR-LA at $\alpha=0.1$ using SCALE, CoREL, and ConForME in Figure~\ref{fig:exp3-multistep}. These methods output full-horizon quantiles in one model, enabling a clean per-horizon comparison, and single-step baselines are omitted to avoid extra design choices. ConForME exhibits pronounced oscillations in both coverage and width, whereas SCALE and CoREL are stable across horizons. CoREL is especially flat because it relies on a single latent state that is taken from the last encoder step and uses a shared linear readout for all horizons, which homogenizes per-step behavior. SCALE is stronger at short horizons and remains slightly better than CoREL at longer horizons with smaller widths, yielding the effectiveness of SCALE in delivering reliable long-horizon prediction intervals.

\begin{figure}[t]
    \centering
    \begin{minipage}[t]{0.49\columnwidth}
        \centering
        \safeincludegraphics[width=\linewidth]{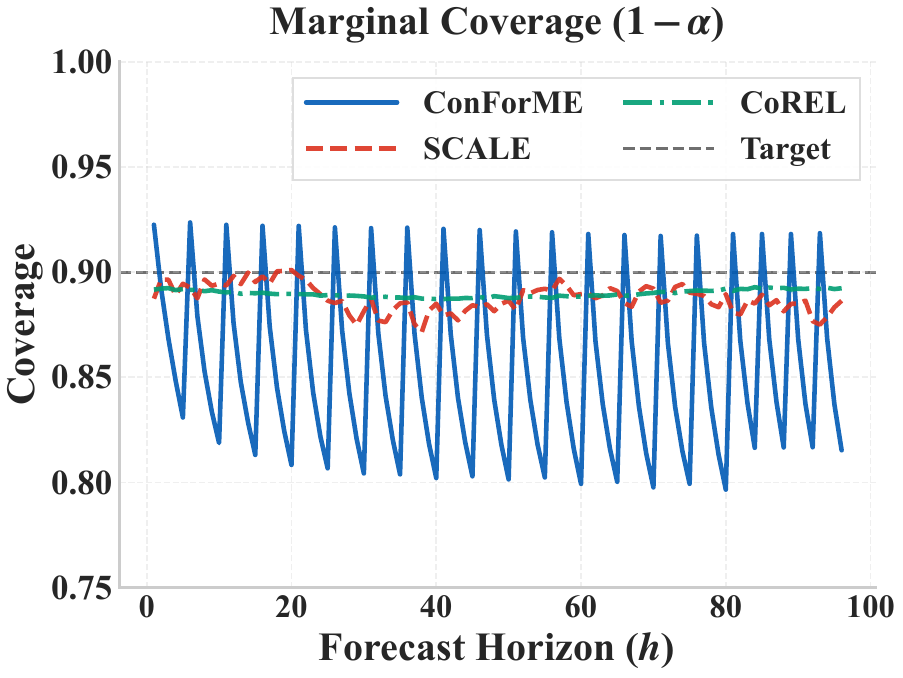}
        {\scriptsize (a) Marginal coverage.}
    \end{minipage}
    \hfill
    \begin{minipage}[t]{0.49\columnwidth}
        \centering
        \safeincludegraphics[width=\linewidth]{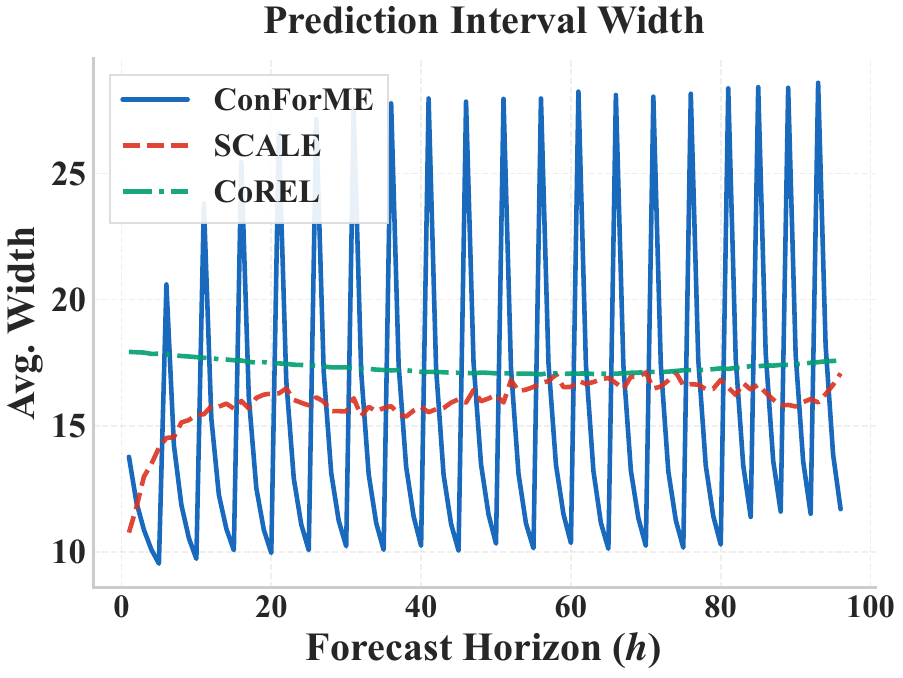}
        {\scriptsize (b) PI width.}
    \end{minipage}
    \caption{Multi-step interval diagnostics on METR-LA ($\alpha=0.1$). We report marginal coverage and prediction interval width across horizons for SCALE, CoREL, and ConForME.}
    \label{fig:exp3-multistep}
    \vskip 0.15in
\end{figure}
\subsection{Parameter Sensitivity Analysis}
We study the sensitivity of SCALE to the number of SGWT wavelet scales $S$, which controls the granularity of the spectral decomposition.
On METR-LA with $\alpha=0.1$, we sweep $S \in \{2,4,6,8,12,16,24,32,48,64,96,128,160,200\}$ while keeping all other settings fixed.
Figure~\ref{fig:exp1-nscales} shows that coverage remains close to the nominal level across a wide range of $S$ (mean coverage in $[0.887, 0.900]$), and efficiency metrics vary only mildly (mean PI width in $[8.72, 9.22]$ and mean Winkler in $[14.15, 14.46]$).
This suggests that SCALE is not overly sensitive to this hyperparameter. In practice, a small-to-moderate value already provides a stable trade-off between validity and sharpness, while avoiding unnecessary computation from very large decompositions.


\begin{figure}[t]
    \centering
    \begin{minipage}[t]{0.49\columnwidth}
        \centering
        \safeincludegraphics[width=\linewidth]{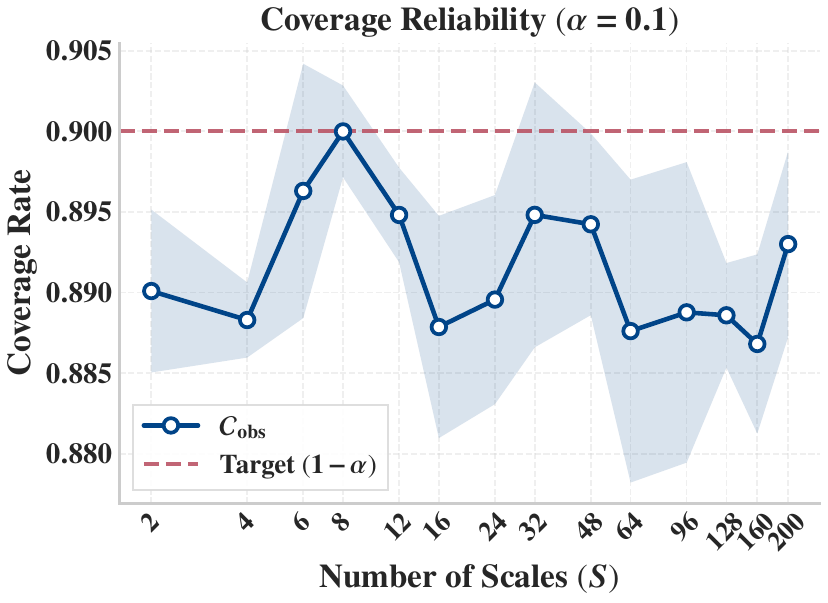}
        {\scriptsize (a) Coverage ($\alpha=0.1$).}
    \end{minipage}
    \hfill
    \begin{minipage}[t]{0.49\columnwidth}
        \centering
        \safeincludegraphics[width=\linewidth]{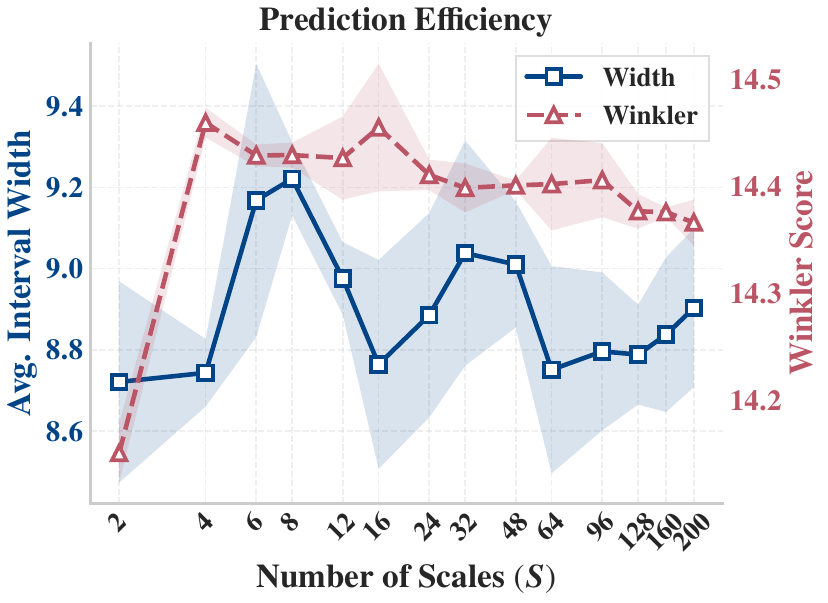}
        {\scriptsize (b) Efficiency (PI width, Winkler).}
    \end{minipage}
   \caption{Parameter sensitivity. This figure shows the effect of the SGWT scale count $S$ on SCALE for METR-LA at $\alpha=0.1$.}
    \label{fig:exp1-nscales}
    \vskip 0.1in
\end{figure}

\section{Conclusion}
In this paper, we introduce a novel concept termed Spectral Graph Conditional Exchangeability (SGCE) for reliable conformal prediction on graph-structured multivariate time series. SGCE ensures exchangeability in the high-frequency components while conditioning on the low-frequency components to preserve the global trend. Building on SGCE, we propose a method named Spectral Conformal prediction via wAveLEt transform (SCALE). It leverages the spectral graph wavelet transform to derive frequency components and uses adaptive gating to encode the low-frequency components. Extensive experiments on traffic forecasting benchmarks demonstrate that SCALE achieves near-nominal coverage with competitive efficiency. We hope our study can inspire future work on exchangeability in the spectral domain, more sophisticated instantiations of SGCE, and stronger coverage guarantees for time series with complex couplings.





\newpage
\bibliography{references}

@inproceedings{wu2025error,
  title     = {Error-Quantified Conformal Inference for Time Series},
  author    = {Junxi Wu and Dongjian Hu and Yajie Bao and Shu-Tao Xia and Changliang Zou},
  booktitle = {Proceedings of the International Conference on Learning Representations},
  year      = {2025}
}

@inproceedings{wu2019graphwavenet,
  title     = {Graph WaveNet for Deep Spatial-Temporal Graph Modeling},
  author    = {Zonghan Wu and Shirui Pan and Guodong Long and Jing Jiang and Chengqi Zhang},
  booktitle = {Proceedings of the 28th International Joint Conference on Artificial Intelligence},
  pages     = {1907--1913},
  year      = {2019}
}

@inproceedings{fey2019pyg,
  title     = {Fast Graph Representation Learning with PyTorch Geometric},
  author    = {Matthias Fey and Jan Eric Lenssen},
  booktitle = {ICLR Workshop on Representation Learning on Graphs and Manifolds},
  year      = {2019}
}

@inproceedings{pmlr-v230-galvao-lopes24a,
  title = 	 {ConForME: Multi-horizon conditional conformal time series forecasting},
  author =       {Galv\~ao Lopes, Aloysio and Goubault, Eric and Putot, Sylvie and Pautet, Laurent},
  booktitle = 	 {Proceedings of the Thirteenth Symposium on Conformal and Probabilistic Prediction with Applications},
  pages = 	 {345--365},
  year = 	 {2024},
  volume = 	 {230},
}

@inproceedings{auer2023conformalpredictiontimeseries,
  title     = {Conformal Prediction for Time Series with Modern Hopfield Networks},
  author    = {Andreas Auer and Martin Gauch and Daniel Klotz and Sepp Hochreiter},
  booktitle = {Advances in Neural Information Processing Systems},
  volume    = {36},
  pages     = {56027--56074},
  year      = {2023}
}

@inproceedings{cini25relational,
  title     = {Relational Conformal Prediction for Correlated Time Series},
  author    = {Andrea Cini and Alexander Jenkins and Danilo P. Mandic and Cesare Alippi and Filippo Maria Bianchi},
  booktitle = {Proceedings of the 42nd International Conference on Machine Learning},
  year      = {2025}
}

@inproceedings{huang2024labelranking,
  title     = {Conformal Prediction for Deep Classifier via Label Ranking},
  author    = {Jianguo Huang and Huajun Xi and Linjun Zhang and Huaxiu Yao and Yue Qiu and Hongxin Wei},
  booktitle = {Proceedings of the 41st International Conference on Machine Learning},
  year      = {2024}
}

@inproceedings{xu2023sequentialpredictiveconformalinference,
  title     = {Sequential Predictive Conformal Inference for Time Series},
  author    = {Chen Xu and Yao Xie},
  booktitle = {Proceedings of the 40th International Conference on Machine Learning},
  volume    = {202},
  pages     = {38707--38727},
  year      = {2023}
}

@inproceedings{li2024neuralconformalcontroltime,
  title     = {Neural Conformal Control for Time Series Forecasting},
  author    = {Ruipu Li and Alexander Rodr{\'\i}guez},
  booktitle = {Proceedings of the AAAI Conference on Artificial Intelligence},
  volume    = {39},
  pages     = {18439--18447},
  year      = {2025}
}

@inproceedings{guo2019astgcn,
  title     = {Attention Based Spatial-Temporal Graph Convolutional Networks for Traffic Flow Forecasting},
  author    = {Shengnan Guo and Youfang Lin and Ning Feng and Chao Song and Huaiyu Wan},
  booktitle = {Proceedings of the AAAI Conference on Artificial Intelligence},
  volume    = {33},
  pages     = {922--929},
  year      = {2019}
}

@inproceedings{zheng2020gman,
  title     = {GMAN: A Graph Multi-Attention Network for Traffic Prediction},
  author    = {Chuanpan Zheng and Xiaoliang Fan and Cheng Wang and Jianzhong Qi},
  booktitle = {Proceedings of the AAAI Conference on Artificial Intelligence},
  volume    = {34},
  pages     = {1234--1241},
  year      = {2020}
}

@inproceedings{li2018dcrnn,
  title     = {Diffusion Convolutional Recurrent Neural Network: Data-Driven Traffic Forecasting},
  author    = {Yaguang Li and Rose Yu and Cyrus Shahabi and Yan Liu},
  booktitle = {Proceedings of the International Conference on Learning Representations},
  year      = {2018}
}

@inproceedings{shao2022spatialtemporalidentitysimpleeffective,
  title     = {Spatial-Temporal Identity: A Simple yet Effective Baseline for Multivariate Time Series Forecasting},
  author    = {Zezhi Shao and Zhao Zhang and Fei Wang and Wei Wei and Yongjun Xu},
  booktitle = {Proceedings of the 31st ACM International Conference on Information and Knowledge Management},
  pages     = {4454--4458},
  year      = {2022}
}

@inproceedings{wang2025nonexchangeable,
  title     = {Non-Exchangeable Conformal Prediction for Temporal Graph Neural Networks},
  author    = {Tuo Wang and Jian Kang and Yujun Yan and Adithya Kulkarni and Dawei Zhou},
  booktitle = {Proceedings of the 31st ACM SIGKDD Conference on Knowledge Discovery and Data Mining},
  pages     = {3031--3042},
  year      = {2025}
}

@inproceedings{wu2020mtgnn,
  title     = {Connecting the Dots: Multivariate Time Series Forecasting with Graph Neural Networks},
  author    = {Zonghan Wu and Shirui Pan and Guodong Long and Jing Jiang and Xiaojun Chang and Chengqi Zhang},
  booktitle = {Proceedings of the 26th ACM SIGKDD International Conference on Knowledge Discovery and Data Mining},
  pages     = {753--763},
  year      = {2020}
}

@inproceedings{yu2018stgcn,
  title     = {Spatio-Temporal Graph Convolutional Networks: A Deep Learning Framework for Traffic Forecasting},
  author    = {Bing Yu and Haoteng Yin and Zhanxing Zhu},
  booktitle = {Proceedings of the 27th International Joint Conference on Artificial Intelligence},
  pages     = {3634--3640},
  year      = {2018}
}

@inproceedings{xi2025noiserobust,
  title     = {Exploring the Noise Robustness of Online Conformal Prediction},
  author    = {Huajun Xi and Kangdao Liu and Hao Zeng and Wenguang Sun and Hongxin Wei},
  booktitle = {Advances in Neural Information Processing Systems},
  year      = {2025}
}

@inproceedings{romano2019conformalized,
  title     = {Conformalized Quantile Regression},
  author    = {Yaniv Romano and Evan Patterson and Emmanuel J. Cand\`es},
  booktitle = {Advances in Neural Information Processing Systems},
  year      = {2019}
}

@inproceedings{paszke2019pytorch,
  title     = {PyTorch: An Imperative Style, High-Performance Deep Learning Library},
  author    = {Adam Paszke and others},
  booktitle = {Advances in Neural Information Processing Systems},
  year      = {2019}
}

@article{shao2024exploring,
  title   = {Exploring Progress in Multivariate Time Series Forecasting: Comprehensive Benchmarking and Heterogeneity Analysis},
  author  = {Zezhi Shao and Fei Wang and Yongjun Xu and Wei Wei and Chengqing Yu and Zhao Zhang and Di Yao and Tao Sun and Guangyin Jin and Xin Cao and others},
  journal = {IEEE Transactions on Knowledge and Data Engineering},
  volume  = {37},
  number  = {1},
  pages   = {291--305},
  year    = {2025}
}

@article{angelopoulos2021gentle,
  author    = {Anastasios N. Angelopoulos and Stephen Bates},
  title     = {A Gentle Introduction to Conformal Prediction and Distribution-Free Uncertainty Quantification},
  journal   = {arXiv preprint arXiv:2107.07511},
  year      = {2021}
}

@article{barber2023conformal,
  title   = {Conformal Prediction Beyond Exchangeability},
  author  = {Rina Foygel Barber and Emmanuel J. Cand{\`e}s and Aaditya Ramdas and Ryan J. Tibshirani},
  journal = {The Annals of Statistics},
  volume  = {51},
  number  = {2},
  pages   = {816--845},
  year    = {2023}
}

@article{capone2025stgnnSurvey,
  title   = {Spatio-Temporal Prediction Using Graph Neural Networks: A Survey},
  author  = {Vincenzo Capone and Angelo Casolaro and Francesco Camastra},
  journal = {Neurocomputing},
  volume  = {643},
  pages   = {130400},
  year    = {2025}
}

@article{chen2023unified,
  title   = {Bridging the Gap Between Spatial and Spectral Domains: A Unified Framework for Graph Neural Networks},
  author  = {Zhiqian Chen and others},
  journal = {ACM Computing Surveys},
  volume  = {56},
  number  = {5},
  pages   = {1--42},
  year    = {2023}
}

@article{dabush2024smoothness,
  title   = {Verifying the Smoothness of Graph Signals: A Graph Signal Processing Approach},
  author  = {Lital Dabush and Tirza Routtenberg},
  journal = {IEEE Transactions on Signal Processing},
  volume  = {72},
  pages   = {4349--4365},
  year    = {2024}
}

@article{dong2025stelfSurvey,
  title   = {Short-Term Electricity-Load Forecasting by Deep Learning: A Comprehensive Survey},
  author  = {Qi Dong and others},
  journal = {Engineering Applications of Artificial Intelligence},
  volume  = {154},
  pages   = {110980},
  year    = {2025}
}

@article{gneiting2007,
  title   = {Strictly Proper Scoring Rules, Prediction, and Estimation},
  author  = {Tilmann Gneiting and Adrian E. Raftery},
  journal = {Journal of the American Statistical Association},
  volume  = {102},
  number  = {477},
  pages   = {359--378},
  year    = {2007}
}

@article{hammond2011wavelets,
  title   = {Wavelets on Graphs via Spectral Graph Theory},
  author  = {David K. Hammond and Pierre Vandergheynst and R{\'e}mi Gribonval},
  journal = {Applied and Computational Harmonic Analysis},
  volume  = {30},
  number  = {2},
  pages   = {129--150},
  year    = {2011}
}

@article{harris2020array,
  title   = {Array Programming with NumPy},
  author  = {Charles R. Harris and others},
  journal = {Nature},
  volume  = {585},
  number  = {7825},
  pages   = {357--362},
  year    = {2020}
}

@article{huang2024mgteformer,
  title   = {Multi-Granularity Temporal Embedding Transformer Network for Traffic Flow Forecasting},
  author  = {Jiani Huang and He Yan and Qixiu Chen and Yingan Liu},
  journal = {Sensors},
  volume  = {24},
  number  = {24},
  pages   = {8106},
  year    = {2024}
}

@article{isufi2024graphfilters,
  title   = {Graph Filters for Signal Processing and Machine Learning on Graphs},
  author  = {Elvin Isufi and Fernando Gama and David I. Shuman and Santiago Segarra},
  journal = {IEEE Transactions on Signal Processing},
  volume  = {72},
  pages   = {4745--4781},
  year    = {2024}
}

@article{jin2024gnn4ts,
  title   = {A Survey on Graph Neural Networks for Time Series: Forecasting, Classification, Imputation, and Anomaly Detection},
  author  = {Ming Jin and others},
  journal = {IEEE Transactions on Pattern Analysis and Machine Intelligence},
  volume  = {46},
  number  = {12},
  pages   = {10466--10485},
  year    = {2024}
}

@article{koenker1978regression,
  title   = {Regression Quantiles},
  author  = {Roger Koenker and Gilbert {Bassett Jr.}},
  journal = {Econometrica},
  pages   = {33--50},
  year    = {1978}
}

@article{lei2018distribution,
  title   = {Distribution-Free Predictive Inference for Regression},
  author  = {Jing Lei and Max G'Sell and Alessandro Rinaldo and Ryan J. Tibshirani and Larry Wasserman},
  journal = {Journal of the American Statistical Association},
  volume  = {113},
  number  = {523},
  pages   = {1094--1111},
  year    = {2018}
}

@article{lim2021deep,
  title   = {Deep Learning for Time Series Forecasting: A Survey},
  author  = {Bryan Lim and Stefan Zohren},
  journal = {Philosophical Transactions of the Royal Society A},
  volume  = {379},
  number  = {2194},
  pages   = {20200209},
  year    = {2021}
}

@article{mallick2024uncertainty,
  title   = {Uncertainty Quantification for Traffic Forecasting Using Deep-Ensemble-Based Spatiotemporal Graph Neural Networks},
  author  = {Tanwi Mallick and Jane Macfarlane and Prasanna Balaprakash},
  journal = {IEEE Transactions on Intelligent Transportation Systems},
  volume  = {25},
  number  = {8},
  pages   = {9141--9152},
  year    = {2024}
}

@article{shafer2008tutorial,
  title   = {A Tutorial on Conformal Prediction},
  author  = {Glenn Shafer and Vladimir Vovk},
  journal = {Journal of Machine Learning Research},
  volume  = {9},
  pages   = {371--421},
  year    = {2008}
}

@article{shen2021multi,
  title   = {Multi-Scale Graph Convolutional Network with Spectral Graph Wavelet Frame},
  author  = {Yangmei Shen and Wenrui Dai and Chenglin Li and Junni Zou and Hongkai Xiong},
  journal = {IEEE Transactions on Signal and Information Processing over Networks},
  volume  = {7},
  pages   = {595--610},
  year    = {2021}
}

@article{shuman2013emerging,
  title   = {The Emerging Field of Signal Processing on Graphs: Extending High-Dimensional Data Analysis to Networks and Other Irregular Domains},
  author  = {David I. Shuman and others},
  journal = {IEEE Signal Processing Magazine},
  volume  = {30},
  number  = {3},
  pages   = {83--98},
  year    = {2013}
}

@article{winkler1972,
  title   = {A Decision-Theoretic Approach to Interval Estimation},
  author  = {Robert L. Winkler},
  journal = {Journal of the American Statistical Association},
  volume  = {67},
  number  = {337},
  pages   = {187--191},
  year    = {1972}
}

@article{xi2025calibrationconformal,
  title   = {Does Confidence Calibration Help Conformal Prediction?},
  author  = {Huajun Xi and Jianguo Huang and Kangdao Liu and Lei Feng and Hongxin Wei},
  journal = {Transactions on Machine Learning Research},
  year    = {2025}
}

@article{xiong2024gated,
  title   = {Gated Fusion Adaptive Graph Neural Network for Urban Road Traffic Flow Prediction},
  author  = {Liyan Xiong and others},
  journal = {Neural Processing Letters},
  volume  = {56},
  number  = {1},
  pages   = {9},
  year    = {2024}
}

@article{10121511,
  title   = {Conformal Prediction for Time Series},
  author  = {Chen Xu and Yao Xie},
  journal = {IEEE Transactions on Pattern Analysis and Machine Intelligence},
  volume  = {45},
  number  = {10},
  pages   = {11575--11587},
  year    = {2023}
}

@article{liu2025dmsgwnn,
  title   = {Denoising Multiscale Spectral Graph Wavelet Neural Networks for Gas Utilization Ratio Prediction in Blast Furnace},
  author  = {Liu, Chengbao and Li, Jingwei and Li, Yuan and Tan, Jie},
  journal = {IEEE Transactions on Neural Networks and Learning Systems},
  volume  = {36},
  number  = {6},
  pages   = {11369--11383},
  year    = {2025},
}

@book{vovk2005algorithmic,
  title     = {Algorithmic Learning in a Random World},
  author    = {Vladimir Vovk and Alexander Gammerman and Glenn Shafer},
  year      = {2005},
  publisher = {Springer}
}

@software{cini2022tsl,
    author = {Cini, Andrea and Marisca, Ivan},
    title = {{Torch Spatiotemporal}},
    url = {https://github.com/TorchSpatiotemporal/tsl},
    year = {2022}
}

@software{falcon2025lightning,
  title     = {PyTorch Lightning},
  author    = {William Falcon and {The PyTorch Lightning Team}},
  year      = {2025},
  version   = {2.5.6},
  url = {https://github.com/Lightning-AI/pytorch-lightning},
}

@manual{pythonref2026,
  title  = {The Python Language Reference},
  author = {{Python Software Foundation}},
  year   = {2026},
  note   = {Available at python.org}
}
\bibliographystyle{icml2026}

\newpage
\appendix
\onecolumn
\setcounter{equation}{0}
\renewcommand{\theequation}{\thesection.\arabic{equation}}
\renewcommand{\theHequation}{\theequation}
\renewcommand{\theHfigure}{\thesection.\arabic{figure}}
\renewcommand{\theHtable}{\thesection.\arabic{table}}
\section*{Appendix}

\section{SGWT Cutoff Auto-Selection}
\label{app:sgwt_cutoff}

\begin{algorithm}[H]
\caption{SGWT cutoff auto-selection}
\label{alg:sgwt_cutoff}
\begin{algorithmic}[1]
\Require adjacency matrix $\mathbf{A}$, signal samples $\mathbf{X}$, max scales $S$, kernel type, max samples $T_{\max}$, optional correlation threshold $\tau$
\Ensure suggested high-frequency scale count $k$
\State Flatten $\mathbf{X}$ to $(T,N)$ and subsample to at most $T_{\max}$ rows.
\State Initialize SGWT with $\mathbf{A}$; obtain kernels $g(\cdot)$, scale list $\{s_i\}_{i=0}^{S-1}$, and eigenvectors $\mathbf{U}$.
\State Transform to the spectral domain: $\widehat{\mathbf{X}}=\mathbf{X}\mathbf{U}$.
\For{each scale $s_i$}
  \State $\widehat{\mathbf{W}}_i=\widehat{\mathbf{X}}\odot g(s_i,\boldsymbol{\lambda})$.
  \State $\mathbf{W}_i=\widehat{\mathbf{W}}_i\mathbf{U}^\top$.
  \State Record (i) spatial correlation (mean absolute off-diagonal), (ii) distribution discrepancy (mean KS over random node pairs), (iii) energy (mean square).
\EndFor
\State Smooth the correlation curve with a short kernel.
\If{$\tau$ is given}
  \State Set $k$ to the first index with smoothed correlation $>\tau$.
\Else
  \State Compute candidate cut points: first index where cumulative energy $\ge 0.9$, steepest correlation increase, and maximal negative curvature.
  \State Set $k$ to the median of candidates.
\EndIf
\State Clamp $k$ to $[0, S]$ and return.
\end{algorithmic}
\end{algorithm}

\section{Theoretical Analysis of Theorem~\ref{thm:coverage_main_paper}}
\label{app:theory}

In this section, we provide the formal statement and proof for the validity of the SGCE framework. We proceed by first establishing the exchangeability of the non-conformity scores in the spectral domain and subsequently proving that the coverage guarantee is preserved during the reconstruction of the original graph signals.

\subsection{Formal Statement}

\begin{theorem}[Finite-Sample Coverage under SGCE]
\label{thm:coverage_app}
Suppose the frequency components of the residual $\mathbf{R}_{t-W+1:t}$ satisfy SGCE (Definition~\ref{def:sgce}). For a target miscoverage rate $\alpha \in (0, 1)$, let $\widehat{\mathcal{C}}_{t+1:t+K}^{\text{H}, \alpha}$ be the prediction set constructed via split conformal prediction on the high-frequency components. Then, the marginal coverage satisfies
\begin{equation}
\label{eq:hf_coverage}
    \mathbb{P}\left(\mathbf{H}_{t+1:t+K} \in \widehat{\mathcal{C}}_{t+1:t+K}^{\text{H}, \alpha} \right) \ge 1 - \alpha.
\end{equation}
Moreover, let the final prediction set for the original signal be constructed via the deterministic mapping $\widehat{\mathcal{C}}_{t+1:t+K}^{\text{X}, \alpha} \triangleq \{ \widehat{\mathbf{X}}_{t+1:t+K} + \widehat{\mathbf{L}}_{t+1:t+K} + \mathbf{h} \mid \mathbf{h} \in \mathcal{C}_{t+1:t+K}^{\text{H}, \alpha} \}$.
Given the base prediction $\widehat{\mathbf{X}}$ and the low-frequency components $\widehat{\mathbf{L}}$ derived from the calibration procedure, this set satisfies the equivalent validity guarantee
\begin{equation}
\label{eq:final_coverage_app}
    \mathbb{P}\left(\mathbf{X}_{t+1:t+K} \in \widehat{\mathcal{C}}_{t+1:t+K}^{\text{X}, \alpha} \right) \ge 1 - \alpha.
\end{equation}
\end{theorem}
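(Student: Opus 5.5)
The proof has two parts: the high-frequency coverage bound in \eqref{eq:hf_coverage}, then its transfer to the original signal in \eqref{eq:final_coverage_app}.

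\textbf{Step 1: exchangeable scores.} Let the calibration windows be indexed by $t_1,\dots,t_n$ and the test window by $t_{n+1}$. For each index define a nonconformity score
\[
s_j = \mathcal{S}\bigl(\mathcal{H}_{t_j};\mathcal{L}_{t_j}\bigr),
\]
for instance the pinball-type exceedance of $\mathbf{H}_{t_j+1:t_j+K}$ over the gated quantile $\widehat{\mathbf{Q}}$ computed from $(\mathbf{M}_{t_j},\mathbf{C}_{t_j})$. The score function $\mathcal{S}$ is fixed by the quantile network $H_{\boldsymbol{\phi}}$, which is trained on data disjoint from the calibration set. Hence $\mathcal{S}$ is a deterministic, symmetric-in-index map applied coordinatewise. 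Definition~\ref{def:sgce} gives that $(\mathcal{H}_{t_1},\dots,\mathcal{H}_{t_{n+1}})$ is exchangeable conditionally on $(\mathcal{L}_{t_1},\dots,\mathcal{L}_{t_{n+1}})$. It follows that $(s_1,\dots,s_{n+1})$ is exchangeable conditionally on the low-frequency sequence.

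\textbf{Step 2: high-frequency coverage.} Apply the standard split-conformal quantile lemma conditionally on the low-frequency sequence. Take $\widehat q$ to be the $\lceil(1-\alpha)(n+1)\rceil$-th smallest of $s_1,\dots,s_n$. The rank of $s_{n+1}$ among the $n+1$ scores is conditionally uniform, with ties broken randomly, which gives
\[
\mathbb{P}\bigl(s_{n+1}\le \widehat q \mid \mathcal{L}\bigr)\ge 1-\alpha .
\]
The set $\widehat{\mathcal{C}}^{\text{H},\alpha}=\{\mathbf{h}:\mathcal{S}(\mathbf{h};\mathcal{L}_{t_{n+1}})\le\widehat q\}$ contains $\mathbf{H}_{t+1:t+K}$ exactly when $s_{n+1}\le\widehat q$. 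Taking the expectation over $\mathcal{L}$ by the tower property yields \eqref{eq:hf_coverage}. Conditioning on $\mathcal{L}$ is what lets shifts in the low-frequency trend be arbitrary.

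\textbf{Step 3: transfer to $\mathbf{X}$.} I use the decomposition
\[
\mathbf{X}_{t+1:t+K}=\widehat{\mathbf{X}}_{t+1:t+K}+\mathbf{R}_{t+1:t+K},\qquad \mathbf{R}=\mathbf{L}+\mathbf{H}.
\]
This relies on the perfect-reconstruction assumption that $\mathbf{\Phi}$ in \eqref{eq:graph_wavelet_decomposition} is the identity, or invertible, and is applied linearly to the residual. I also take $\widehat{\mathbf{L}}_{t+1:t+K}=\mathbf{L}_{t+1:t+K}$ to be the low-frequency component furnished by the conditioning, treated as known given $\mathcal{L}$. Then the map $\mathbf{h}\mapsto\widehat{\mathbf{X}}+\widehat{\mathbf{L}}+\mathbf{h}$ is a deterministic translation given $(\widehat{\mathbf{X}},\widehat{\mathbf{L}})$. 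This gives the event identity
\[
\{\mathbf{X}\in\widehat{\mathcal{C}}^{\text{X},\alpha}\}=\{\mathbf{H}\in\widehat{\mathcal{C}}^{\text{H},\alpha}\},
\]
and \eqref{eq:final_coverage_app} follows with the same probability bound.

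\textbf{Main obstacle.} Step 3 is the delicate point. It requires that the low-frequency part of the future residual is measurable with respect to the conditioning information, or equals its estimate exactly. Without that, the translation is not exact and one only gets an inequality with an error term; that is the leakage term handled in Theorem~\ref{thm:approx_coverage_paper}. I would make this an explicit assumption: exact reconstruction, and $\widehat{\mathbf{L}}$ coinciding with $\mathbf{L}$ given the conditioning. A secondary subtlety is that windows $\mathcal{H}_{t_j}$ overlap in time. SGCE is stated directly on the windowed objects, so no further independence is needed.
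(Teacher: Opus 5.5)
\textbf{There is a genuine gap in your Steps 1--2.} You let the score depend on the low-frequency window, $s_j=\mathcal{S}(\mathcal{H}_{t_j};\mathcal{L}_{t_j})$. You then claim that conditional exchangeability of $(\mathcal{H}_{t_j})_j$ given $(\mathcal{L}_{t_j})_j$ makes $(s_1,\dots,s_{n+1})$ exchangeable conditionally on $\mathcal{L}$. That implication is false.

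Once you condition on $\mathcal{L}=(l_1,\dots,l_{n+1})$, the scores are $f_j(\mathcal{H}_{t_j})$ with index-dependent maps $f_j=\mathcal{S}(\cdot\,;l_j)$. Exchangeability is not preserved under coordinatewise maps that differ across indices. A concrete counterexample: take scalar $\mathcal{H}_{t_j}$ i.i.d.\ $N(0,1)$ and independent of $\mathcal{L}$, so $\mathcal{H}$ is exchangeable given $\mathcal{L}$ in your sense. Set $\mathcal{S}(h;l)=|h|/l$ with $l_{n+1}\ll\min_{j\le n}l_j$. Conditionally on $\mathcal{L}$, the test score is then stochastically far larger than the calibration scores, and $\mathbb{P}(s_{n+1}\le\widehat q\mid\mathcal{L})$ can be made arbitrarily small. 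Because SGCE is meant to tolerate arbitrary drift in $\mathcal{L}$, averaging over $\mathcal{L}$ does not rescue the bound.

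A fixed score applied to exchangeable \emph{pairs} $(\mathcal{H}_{t_j},\mathcal{L}_{t_j})$ does give exchangeable scores, but only unconditionally. That route requires $\mathcal{L}$ itself to be exchangeable, which contradicts your remark that conditioning is what makes low-frequency shifts harmless.

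\textbf{How the paper avoids this, and how to repair your argument.} The paper uses a score $S(\mathcal{H}_i)$ that depends only on the high-frequency component. Conditional exchangeability of $\mathcal{H}$ given $\mathcal{L}$ then passes directly to the scores. The split-conformal quantile lemma applies, conditionally and then marginally by the tower property exactly as in your Step 2, with the set $\{\mathbf{h}: S(\mathbf{h})\le\widehat q\}$. You have two ways to repair your proof:
\begin{itemize}
\item drop the $\mathcal{L}$-argument from the score; or
\item keep it, assume exchangeability of the pairs, and argue unconditionally, giving up the robustness-to-trend claim.
\end{itemize}
Your motivation, covering SCALE's $\mathbf{C}_t$-gated quantile, is reasonable. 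But the theorem as proved covers only an $\mathcal{H}$-only score, and the gated case is precisely where your argument breaks.

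\textbf{Step 3 is fine.} It is the same translation (Minkowski-sum) argument as the paper's. Your explicit assumption of exact reconstruction with $\widehat{\mathbf{L}}_{t+1:t+K}=\mathbf{L}_{t+1:t+K}$ is what the paper uses implicitly when it writes $\mathbf{X}_{t+1:t+K}=\widehat{\mathbf{X}}_{t+1:t+K}+\widehat{\mathbf{L}}_{t+1:t+K}+\mathbf{H}_{t+1:t+K}$ and treats $\widehat{\mathbf{L}}$ as fixed. If anything, your treatment is more careful than the paper's.
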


\subsection{Proof of Theorem \ref{thm:coverage_main_paper}}

\begin{proof}
The proof relies on the property that conformal validity is preserved under deterministic transformations. We structure the argument in three steps: (1) establishing exchangeability in the residual high-frequency subspace, (2) invoking the conformal lemma, and (3) demonstrating the invertibility of the spectral reconstruction.

\textbf{Step 1: Conditional Exchangeability of Scores.} 
Let $\mathcal{D}_{cal} = \{(\mathcal{H}_i, \mathcal{L}_i)\}_{i=1}^M$ be a calibration set of size $M$. 
We define a permutation-invariant non-conformity scoring function $S: \mathbb{R}^{N \times K} \to \mathbb{R}$. Let $r^\text{H}_i = S(\mathcal{H}_i)$ denote the score computed for the $i$-th calibration unit. By SGCE (Definition \ref{def:sgce}), the test high-frequency $\mathcal{H}_{test}$ and the calibration residuals $\{\mathcal{H}_i\}_{i=1}^M$ are exchangeable conditional on respective low-frequency components $\mathcal{L}_i$. Consequently, the sequence of non-conformity scores $\mathcal{R} = \{r^\text{H}_1, \dots, r^\text{H}_M, r^\text{H}_{test}\}$ is also exchangeable.

\textbf{Step 2: Validity in the High-Frequency Domain.} 
Given the exchangeability of $\mathcal{R}$, we apply the standard lemma of split conformal prediction~\citep{vovk2005algorithmic}. Let $\widehat{q}_{1-\alpha}$ be the $\lceil (M+1)(1-\alpha) \rceil / M$-th empirical quantile of the calibration scores $\{r^\text{H}_1, \dots, r^\text{H}_M\}$. The probability that the test score $s_{test}$ falls within this quantile is lower-bounded by:
\begin{equation}
    \mathbb{P}(r^\text{H}_{test} \le \widehat{q}_{1-\alpha}) \ge 1 - \alpha.
\end{equation}
Defining the high-frequency prediction set as $\widehat{\mathcal{C}}^{\text{H}, \alpha}_{t+1:t+K} = \{ \mathbf{h} \mid S(\mathbf{h}) \le \widehat{q}_{1-\alpha} \}$, this directly implies Eq.~\eqref{eq:hf_coverage}.

\textbf{Step 3: Preservation via Deterministic Reconstruction.} 
To recover the prediction set for the original target $\mathbf{X}_{t+1:t+K}$, we utilize the additive composition of the signal
\begin{equation}
    \mathbf{X}_{t+1:t+K} = \widehat{\mathbf{X}}_{t+1:t+K} + \widehat{\mathbf{L}}_{t+1:t+K} + \mathbf{H}_{t+1:t+K}.
\end{equation}
In the inference phase, both the base prediction $\widehat{\mathbf{X}}_{t+1:t+K}$ and the estimated low-frequency trend $\widehat{\mathbf{L}}_{t+1:t+K}$ are fixed deterministic terms (conditioned on the training/calibration data). Let $\widehat{\boldsymbol{\Psi}}_{t+1:t+K} = \widehat{\mathbf{X}}_{t+1:t+K} + \widehat{\mathbf{L}}_{t+1:t+K}$. The final set is constructed as the Minkowski sum $\widehat{\mathcal{C}}^{\text{X}, \alpha}_{t+1:t+K} = \{\widehat{\boldsymbol{\Psi}}_{t+1:t+K}\} \oplus \widehat{\mathcal{C}}^{\text{H}, \alpha}_{t+1:t+K}$.
Crucially, the event of the true signal falling into the prediction set is logically equivalent to the high-frequency residual falling into its corresponding set:
\begin{equation}
\begin{aligned}
    \mathbf{X}_{t+1:t+K} \in \widehat{\mathcal{C}}^{\text{X}, \alpha}_{t+1:t+K} 
    &\iff \widehat{\boldsymbol{\Psi}}_{t+1:t+K} + \mathbf{H}_{t+1:t+K} \in \{ \widehat{\boldsymbol{\Psi}}_{t+1:t+K} + \mathbf{h} \mid \mathbf{h} \in \widehat{\mathcal{C}}^{\text{H}, \alpha}_{t+1:t+K} \} \\
    &\iff \mathbf{H}_{t+1:t+K} \in \widehat{\mathcal{C}}^{\text{H}, \alpha}_{t+1:t+K}.
\end{aligned}
\end{equation}
Since the events are identical, their probabilities are equal. Substituting the bound from Step 2 yields the final guarantee:
\begin{equation}
    \mathbb{P}\left(\mathbf{X}_{t+1:t+K} \in \widehat{\mathcal{C}}_{t+1:t+K}^{\text{X}, \alpha} \right) = \mathbb{P}\left(\mathbf{H}_{t+1:t+K} \in \widehat{\mathcal{C}}^{\text{H}, \alpha}_{t+1:t+K}\right) \ge 1 - \alpha.
\end{equation}
This concludes the proof.
\end{proof}

\section{Theoretical Analysis of Theorem~\ref{thm:approx_coverage_paper}}
\label{app:theory_robustness}

In this section, we provide the formal statement and detailed derivation of the robust coverage guarantee presented in Theorem~\ref{thm:approx_coverage_paper}. We explicitly quantify the finite-sample validity deviations arising from the spectral graph wavelet transform, characterizing them in terms of wavelet approximation errors and residual spatial dependencies.

\subsection{Formal Statement}
\label{app:theory_formal}

We first provide the rigorous version of Theorem~\ref{thm:approx_coverage_paper}, which explicitly characterizes the coverage gap $\delta$.

\begin{theorem}[Robust Finite-Sample Coverage]
\label{thm:approx_coverage_formal}
Consider the setup of Theorem~\ref{thm:approx_coverage_paper}. Let $\widehat{\mathcal{L}}_t$ be the estimated low-frequency sequences and $\mathcal{L}_t$ be the ground truth. Assume the non-conformity scoring function $\tilde{S}(\cdot, \cdot)$ is $C_s$-Lipschitz continuous with respect to both arguments. The prediction set $\widehat{\mathcal{C}}_{t+1:t+K}^{sgwt, \alpha}$ satisfies the following robust marginal coverage bound
\begin{equation}
    \mathbb{P}(\mathbf{X}_{t+1:t+K} \in \widehat{\mathcal{C}}_{t+1:t+K}^{sgwt, \alpha}) \ge 1 - \alpha - \left( C_{e} C_s \cdot \mathbb{E}[\|\mathcal{L}_t - \widehat{\mathcal{L}}_t\|] + \delta_{\text{dep}} \right),
\end{equation}
where:
\begin{itemize}
    \item $\mathbb{E}[\|\mathcal{L}_t - \widehat{\mathcal{L}}_t\|]$ represents the expected spectral approximation error, e.g., due to polynomial wavelet truncation.
    \item $C_{e}$ denotes the Lipschitz constant of the cumulative distribution function (CDF) on the score distribution.
    \item $\delta_{\text{dep}}$ quantifies the deviation from exchangeability caused by residual spatial coupling, measured by the Total Variation distance between the true residual process and an idealized exchangeable process.
\end{itemize}
\end{theorem}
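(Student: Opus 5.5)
The plan is to reduce to the exact-SGCE result, Theorem~\ref{thm:coverage_app}, via two perturbation steps. The first step handles the spectral approximation error. The second handles the residual spatial coupling. Each step pays a separate additive price in coverage.

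\textbf{Step 1: an idealized process.} Let $\widetilde{\mathcal{H}}$ be a surrogate sequence that satisfies SGCE exactly, conditional on the \emph{true} low-frequency components $\mathcal{L}_t$. Let $r_i=\tilde S(\mathcal{H}_i,\mathcal{L}_i)$ be the scores computed from the true components, and $\tilde r_i$ the same scores computed from the surrogate. The surrogate scores $\{\tilde r_1,\dots,\tilde r_M,\tilde r_{test}\}$ are exchangeable, so Step~2 of the proof of Theorem~\ref{thm:coverage_app} gives $\mathbb{P}(\tilde r_{test}\le \widehat q_{1-\alpha})\ge 1-\alpha$. Here $\widehat q_{1-\alpha}$ is the usual $\lceil (M+1)(1-\alpha)\rceil/M$ empirical quantile.

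\textbf{Step 2: coupling to the real process.} The coverage event is a measurable function of the joint law of the calibration and test data. Its probability therefore changes by at most the total variation distance between the real and idealized joint laws, which is $\delta_{\text{dep}}$ by definition. This gives $\mathbb{P}(r_{test}\le \widehat q_{1-\alpha})\ge 1-\alpha-\delta_{\text{dep}}$ for the scores built from the true $\mathcal{L}$.

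\textbf{Step 3: replacing $\mathcal{L}$ by $\widehat{\mathcal{L}}$.} The implemented scores use $\widehat r_i=\tilde S(\widehat{\mathcal{H}}_i,\widehat{\mathcal{L}}_i)$. Additive reconstruction gives $\mathcal{H}-\widehat{\mathcal{H}}=\widehat{\mathcal{L}}-\mathcal{L}$. Lipschitz continuity in both arguments then yields
\[
|\widehat r_i-r_i|\le C_s\,\|\mathcal{L}_i-\widehat{\mathcal{L}}_i\|,
\]
absorbing a factor of two into $C_s$ if needed. Let $\varepsilon=\|\mathcal{L}_t-\widehat{\mathcal{L}}_t\|$ at the test point. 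If $\widehat r_{test}>\widehat q$ while $r_{test}\le \widehat q$, then $r_{test}\in(\widehat q-C_s\varepsilon,\widehat q]$. Condition on the calibration data, so that $\widehat q$ is fixed, and use the $C_e$-Lipschitz CDF of the score. This bounds the probability of that band by $C_eC_s\,\mathbb{E}[\varepsilon]$.

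\textbf{Step 4: passing back to $\mathbf{X}$.} Combining Steps 2 and 3 through a union bound gives the high-frequency coverage bound with gap $C_eC_s\mathbb{E}\|\mathcal{L}_t-\widehat{\mathcal{L}}_t\|+\delta_{\text{dep}}$. As in Step~3 of Theorem~\ref{thm:coverage_app}, the event $\mathbf{X}_{t+1:t+K}\in\widehat{\mathcal{C}}^{sgwt,\alpha}$ is identical to the high-frequency event under the deterministic shift $\widehat{\mathbf{X}}+\widehat{\mathbf{L}}$. This transfers the bound to $\mathbf{X}$.

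\textbf{Main obstacle.} The hard part is Step 3, because the calibration scores are perturbed too, so $\widehat q$ itself moves. My first attempt would be to bound the quantile shift by $C_s\max_i\|\mathcal{L}_i-\widehat{\mathcal{L}}_i\|$. That bound, however, yields a max-type rather than an expectation-type error. To get the stated expectation form, I would instead assume the approximation error is identically distributed across time indices, which follows from the same deterministic SGWT filter applied to a stationary input. I would then symmetrize the argument, perturbing only the test score relative to an exchangeable reference.
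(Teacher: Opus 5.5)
Your decomposition is the paper's: an exchangeable reference law handled by the split-conformal lemma, a dependence gap paid in total variation on the joint law, a leakage gap paid through Lipschitz scores and a Lipschitz score CDF, and the deterministic shift back to $\mathbf{X}$. The genuine gap is the one you flag in Step 3 and do not close. Your band argument compares $\widehat r_{test}$ and $r_{test}$ against a single threshold. The implemented threshold, however, is $\widehat q(\widehat r_{1:M})$, while Step 2 only controls $r_{test}$ against $\widehat q(r_{1:M})$. An empirical quantile is only $1$-Lipschitz in the sup norm, so the argument can honestly give a gap of $C_eC_s\,\mathbb{E}[\varepsilon_{test}+\max_{i\le M}\varepsilon_i]+\delta_{\text{dep}}$, not the stated one. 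Your proposed repair does not produce the expectation form either. Stationarity makes the errors identically distributed, but that does not make $(\widehat r_1,\dots,\widehat r_M,\widehat r_{test})$ exchangeable. If you strengthen it to joint exchangeability (a fixed deterministic SGWT filter applied to windows that are exchangeable under the reference law), then the implemented scores are themselves exchangeable under that law. You then get $1-\alpha$ minus a total-variation term measured on the implemented scores, with no $C_eC_s$ term at all, so this route removes the leakage term rather than deriving it. Separately, bounding $\mathbb{P}(\widehat q-C_s\varepsilon<r_{test}\le\widehat q)$ by $C_eC_s\,\mathbb{E}[\varepsilon]$ needs the CDF of $r_{test}$ to be $C_e$-Lipschitz \emph{conditionally} on $\varepsilon$ and the calibration data. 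The marginal assumption in the statement is not enough, since $\varepsilon$ may be large exactly when $r_{test}$ sits near the threshold.

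The paper handles the moving threshold differently. It writes the leakage term as $d_{\text{TV}}(\Pi_P(r),\Pi_P(\tilde r))$ (its $r$ is your $\widehat r$ and its $\tilde r$ is your $r$), keeps one $\widehat q_{1-\alpha}$ on both sides, and applies the triangle inequality. Read as joint laws of all $M+1$ scores, the conformal event is a function of the whole score vector, so the quantile shift is absorbed. The price is the unproved claim that this total variation is at most $C_eC_s\,\mathbb{E}\|\mathcal{L}_t-\widehat{\mathcal{L}}_t\|$. A small Lipschitz perturbation plus a bounded score density controls a Kolmogorov-type discrepancy at a fixed threshold, which is exactly what your band argument uses, but it does not control total variation. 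For example, take a density equal to $C_e$ on alternating intervals of length $\epsilon$ and shift it by $\epsilon$: the total variation is $1$ while $C_e\epsilon$ is small. So your route is the sound one for the test score, and the paper's is the one that accounts for the calibration scores. Neither, as written, delivers the expectation-form gap. To finish rigorously you must either accept the max-type term or add an assumption that directly controls how the perturbation moves the conformal event, e.g.\ joint exchangeability of the error-augmented data, under which the leakage term drops out.
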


\subsection{Proof of Theorem~\ref{thm:approx_coverage_formal}}

\begin{proof}
Let $\mathcal{D}_{cal} = \{(\mathcal{H}_i, \mathcal{L}_i)\}_{i=1}^M$ be a calibration set of size $M$. Let $\Pi_P(r)$ denote the distribution of the observed scores (computed using estimated components) under the true data distribution $P$. Let $\Pi_Q(\tilde{r})$ denote the distribution of the ideal scores (computed using true components) under the idealized reference distribution $Q$ where residuals are exchangeable. We aim to bound the coverage gap by analyzing the Total Variation (TV) distance between these two laws:
\begin{equation}
    \delta \triangleq d_{\text{TV}}(\Pi_P(r), \Pi_Q(\tilde{r})).
\end{equation}
By the triangle inequality for the TV metric, we can decompose this distance into a stability term and a dependence term:
\begin{equation}
\label{eq:triangle_inequality}
    d_{\text{TV}}(\Pi_P(r), \Pi_Q(\tilde{r})) \le \underbrace{d_{\text{TV}}(\Pi_P(r), \Pi_P(\tilde{r}))}_{\text{Spectral Leakage Gap } (\delta_{\text{leak}})} + \underbrace{d_{\text{TV}}(\Pi_P(\tilde{r}), \Pi_Q(\tilde{r}))}_{\text{Dependence Gap } (\delta_{\text{dep}})}.
\end{equation}

\textbf{Step 1: The Ideal Reference Case.} 
Consider the ideal non-conformity scores derived from the true spectral components
\begin{equation}
    \tilde{r}_i = \tilde{S}(\mathcal{H}_i, \mathcal{L}_i), \quad \forall i \in \mathcal{D}_{cal} \cup \{test\}.
\end{equation}
Under the ideal distribution $Q$, Theorem~\ref{thm:coverage_main_paper} ensures that the sequence of scores is exchangeable. Consequently, standard conformal prediction guarantees hold
\begin{equation}
    \label{eq:appendix_ideal_coverage}
    \mathbb{Q}(\tilde{r}_{\text{test}} \le \widehat{q}_{1-\alpha}) \ge 1 - \alpha,
\end{equation}
where $\widehat{q}_{1-\alpha}$ is the empirical quantile of the calibration scores.

\textbf{Step 2: Stability under Spectral Approximation.} 
In practice, we compute scores using the estimated high-frequency components $\widehat{\mathbf{H}}_t = \mathbf{X}_t - \widehat{\mathbf{L}}_t=\mathbf{H}_t+\mathbf{L}_t-\widehat{\mathbf{L}}_t$ and low-frequency components $\widehat{\mathbf{L}}_t$. The observed score is 
\begin{equation}
r_i = \tilde{S}(\widehat{\mathcal{H}}_i, \widehat{\mathcal{L}}_i) = \tilde{S}(\mathcal{H}_i + (\mathcal{L}_i - \widehat{\mathcal{L}}_i), \widehat{\mathcal{L}}_i).
\end{equation}
Given the $C_s$-Lipschitz continuity of $\tilde{S}(\cdot, \cdot)$, the pointwise error is bounded by
\begin{equation}
    |r_i - \tilde{r}_i| \le C_s \left( \|\widehat{\mathcal{H}}_i - \mathcal{H}_i\| + \|\widehat{\mathcal{L}}_i - \mathcal{L}_i\| \right) \le 2 C_s \|\mathcal{L}_i - \widehat{\mathcal{L}}_i\|.
\end{equation}
\textit{(Note: We absorb the factor of 2 into $C_s$ for notational simplicity in the final bound).} 

To map this value perturbation to a probability shift, we assume the CDF of the scores is $C_{e}$-Lipschitz. Denoting the laws of $r, \tilde{r}$ as $\Pi_P(r), \Pi_P(\tilde{r})$, the Wasserstein distance between these laws is bounded by  the Total Variation (TV) distance
\begin{equation}
    \delta_{\text{leak}} \triangleq d_{\text{TV}}(\Pi_P(r), \Pi_P(\tilde{r})) \le C_{e} C_s \cdot \mathbb{E}[\|\mathcal{L}_t - \widehat{\mathcal{L}}_t\|].
\end{equation}

\textbf{Step 3: Bounding Spatial Dependence.} 
Even with perfect filters, the true high-frequency residuals may exhibit weak residual dependencies. We define $\delta_{\text{dep}}$ as the TV distance between the joint distribution of the actual residuals $\Pi_P(\tilde{r})$ and the closest exchangeable product measure $\Pi_Q(\tilde{r})$, namely
\begin{equation}
    \delta_{\text{dep}} \triangleq d_{\text{TV}}(\Pi_P(\tilde{r}), \Pi_Q(\tilde{r})).
\end{equation}

\textbf{Step 4: Synthesis via Triangle Inequality.} 
We seek to lower bound the coverage under the true distribution $P$. Using the triangle inequality on the TV distances derived in~\eqref{eq:triangle_inequality}.
Since $\mathbb{Q}(\tilde{r}_{\text{test}} \le \widehat{q}_{1-\alpha}) \ge 1 - \alpha$ in~\eqref{eq:appendix_ideal_coverage}, we have:
\begin{equation}
\begin{aligned}
    \mathbb{P}(r_{\text{test}} \le \widehat{q}_{1-\alpha}) & \ge \mathbb{Q}(\tilde{r}_{\text{test}} \le \widehat{q}_{1-\alpha}) - d_{\text{TV}}(\Pi_P(r), \Pi_Q(\tilde{r})) \\
         & \ge (1 - \alpha) - (\delta_{\text{leak}} + \delta_{\text{dep}})\\
         & \ge 1 - \alpha - \left( C_{e} C_s \mathbb{E}[\|\mathcal{L}_t - \widehat{\mathcal{L}}_t\|] + \delta_{\text{dep}} \right).
\end{aligned}
\end{equation}
This completes the proof.
\end{proof}

\section{Experimental Setup and Hyperparameters}
\label{app:exp_details}

This section reports the experimental setup in an appendix style consistent with related forecasting and conformal works. 

\subsection{Hardware and Software}
All experiments are implemented in Python~\citep{pythonref2026} using standard scientific and deep-learning libraries, including NumPy~\citep{harris2020array}, PyTorch~\citep{paszke2019pytorch}, PyTorch Lightning~\citep{falcon2025lightning}, and PyTorch Geometric~\citep{fey2019pyg}. We use Torch Spatio-temporal for spatio-temporal data handling~\citep{cini2022tsl}. Baseline reference implementations are publicly available, including HopCPT\footnote{\url{https://github.com/ml-jku/HopCPT}}, CoREL\footnote{\url{https://github.com/andreacini/corel}}, ConForME\footnote{\url{https://github.com/aloysiogl/conforme}}, and EnbPI\footnote{\url{https://github.com/hamrel-cxu/EnbPI}}.

\subsection{Datasets and Splits}
We evaluate on four standard traffic benchmarks: METR-LA, PEMS04, PEMS07, and PEMS08. We adopt a temporal split with train/calibration/test ratios of 0.4/0.4/0.2. All interval methods operate on the same residual tensors produced by the first stage backbone and share the same calibration/test split. For trainable interval predictors, we further reserve a small validation subset (10\% of the calibration portion) for early stopping.

\subsection{Dataset Details}
We summarize dataset statistics following the Torch Spatiotemporal dataset documentation\footnote{\url{https://torch-spatiotemporal.readthedocs.io/en/latest/modules/datasets.html}}. Graph edges are computed from the provided distance matrices using a thresholded Gaussian kernel with $\theta=0.1$ and no self-loops (matching the distance-based connectivity used in our configs); edges are reported as undirected nonzero weights.

\begin{table}[t]
    \centering
    \small
    \setlength{\tabcolsep}{5pt}
    \renewcommand{\arraystretch}{1.2}
    \begin{tabular}{l l l c r r p{0.36\textwidth}}
        \toprule
        Dataset & Region & Time span & Interval & \#Nodes & \#Edges & Graph construction \\
        \midrule
        METR-LA & Los Angeles County & 2012-03--2012-06 & 5-min & 207 & 674 & Distance-based adjacency (Gaussian kernel) \\
        PEMS04 & District 4, CA & 2018-01--2018-02 & 5-min & 307 & 104 & Distance-based adjacency (Gaussian kernel) \\
        PEMS07 & District 7, CA & 2017-05--2017-08 & 5-min & 883 & 395 & Distance-based adjacency (Gaussian kernel) \\
        PEMS08 & District 8, CA & 2016-07--2016-08 & 5-min & 170 & 68 & Distance-based adjacency (Gaussian kernel) \\
        \bottomrule
    \end{tabular}
    \caption{Traffic datasets used in this work. Edges are computed from distance matrices with a thresholded Gaussian kernel ($\theta=0.1$), without self-loops, and reported as undirected nonzero weights.}
\end{table}

\subsection{The first stage Backbone and Residual Generation}
\noindent\textbf{Backbone architecture.} The point forecaster uses a GRU + DiffConv template with one GRU-based temporal layer and two graph message-passing layers, using hidden size 32 and node embedding size 16 with elu activation. This backbone is directly adopted from CoREL for aligned comparison, and Appendix~\ref{sec:appendix_stage1_backbone_ext} reports results with alternative backbones (Transformer, GWNet, DCRNN). The architecture consists of a linear input encoder, a GRU temporal encoder, stacked DiffConv layers, and a linear readout. The adjacency is constructed from distance-based connectivity with threshold 0.1 and without self-loops.

\noindent\textbf{Input/output protocol.} Unless otherwise stated, we use an input window of $W=12$ and single-step forecasting horizon $H=1$ with stride 1. For the multi-step analysis, we set $H=96$ on METR-LA.

\noindent\textbf{Optimization.} The backbone is trained with Adam (learning rate $3\times 10^{-3}$, weight decay 0) for up to 200 epochs, batch size 32, and gradient clipping 5. Inputs are standardized (graph-level standard scaling) and exogenous time features (time-of-day and day-of-week) are included when available.

\begin{table}[t]
    \centering
    \small
    \setlength{\tabcolsep}{4pt}
    \renewcommand{\arraystretch}{1.1}
    \begin{tabular}{l l}
        \toprule
        Item & Setting \\
        \midrule
        Split & temporal 0.4/0.4/0.2 (train/calibration/test) \\
        Window / horizon & $W=12$, $H=1$ (main); $H=96$ (multi-step on METR-LA) \\
        Backbone & GRU + DiffConv template (2 message-passing layers) \\
        Hidden / embedding & 32 / 16 \\
        Optimizer & Adam ($3\times 10^{-3}$, weight decay 0) \\
        Training & 200 epochs, batch size 32, 100 train batches per epoch \\
        Early stopping & patience 50 on the validation portion of the temporal split \\
        Scaling & standard scaling across nodes (graph axis) \\
        \bottomrule
    \end{tabular}
    \caption{The first stage backbone configuration used to generate residuals.}
\end{table}

\subsection{Evaluation Protocol}
We evaluate coverage and efficiency at $\alpha\in\{0.05,0.1,0.2\}$. We report empirical \textbf{Coverage}, \textbf{PI-Width} and \textbf{Winkler} (lower is better for efficiency metrics). In the main table, a coverage checkmark indicates an absolute coverage error no larger than 0.02.

Trainable interval predictors are run with five random seeds (0--4) and we report mean$\pm$std. Non-trainable baselines are deterministic given residuals and are run once.

\subsection{Hyperparameters and Experimental Setup}
Hyperparameters are either taken from the corresponding reference implementations or tuned on a validation subset of the calibration data, following common appendix styles in conformal time-series works. All interval baselines are applied to the same residual tensors produced by the first stage point forecaster and share the same temporal calibration/test split. For trainable interval predictors, we further reserve a validation subset from the calibration portion for early stopping (method-specific choices are stated below).

\paragraph{SCP.}
SCP~\citep{vovk2005algorithmic} is the standard split-conformal baseline: it calibrates empirical quantiles of residual-based nonconformity scores on the calibration split and forms two-sided prediction intervals by adding the corresponding lower/upper residual quantiles to the point forecasting, for each $\alpha\in\{0.05,0.1,0.2\}$.

\paragraph{SeqCP.}
SeqCP~\citep{xu2023sequentialpredictiveconformalinference} is a sequential conformal variant that adapts to temporal dependence and drift by recalibrating on a recent window of residuals. We use a sliding-window procedure with window length $K=100$, i.e., the interval quantiles are computed from the most recent $K$ residuals in the calibration stream.

\paragraph{NexCP.}
NexCP~\citep{barber2023conformal} performs conformal calibration using exponentially decaying weights over past residuals, which emphasizes recent observations while still leveraging a longer history. We use decay parameter $\rho=0.99$ when computing the (weighted) residual quantiles.

\paragraph{EnbPI.}
EnbPI~\citep{10121511} constructs prediction intervals from an ensemble of bootstrap predictors and an online residual buffer that is updated over time; at each step, intervals are formed by combining the ensemble prediction with empirical quantiles of the residual buffer. We follow the official implementation and use the default protocol throughout all datasets: $B=30$ bootstrap models, stride 1, lag 20, and RidgeCV as the base regressor. We evaluate EnbPI in the oracle online-update setting (i.e., using true residuals for online updates), and do not introduce any dataset-specific changes beyond the original implementation.

\paragraph{HopCPT.}
HopCPT~\citep{auer2023conformalpredictiontimeseries} employs a Hopfield-style associative memory to calibrate node-wise quantiles over time. We follow the reference configuration in the original implementation: original mode with batch size 4 time series, trained for $n_{\mathrm{epochs}}=3000$ using AdamW with learning rate 0.01, weight decay 0.001, $(\beta_1,\beta_2)=(0.9,0.999)$, and $\epsilon=0.01$. We validate every 5 epochs and use early-stopping patience 10000 (effectively disabling early stopping) to match the reference training protocol.

\paragraph{ConForME.}
ConForME~\citep{pmlr-v230-galvao-lopes24a} constructs multi-horizon prediction intervals by conformalizing a learned interval predictor via conditional calibration. We use the conforme variant with approximate partition size 5 and train for 100 epochs with learning rate $10^{-6}$. We enable the quantile-offset optimization with binary-search tolerance 0.01 (as in the reference implementation) to reduce interval width while targeting the same nominal coverage.

\paragraph{CoREL.}
CoREL~\citep{cini25relational} fits a \emph{relational quantile predictor} that models cross-series dependence via a latent graph and estimates a discrete grid of conditional residual quantiles; two-sided prediction intervals are obtained by selecting the appropriate lower/upper quantiles from this grid. We follow the original paper \emph{exactly} and use the same training and model-selection procedure (including tuning the GRU + DiffConv template width on 10\% of the calibration data). For real-world datasets, the model is trained on the calibration set for a maximum of 100 epochs, where each epoch consists of at most 50 mini-batches of size 64. We use Adam with initial learning rate 0.003 and a stepwise decay that reduces the learning rate by 75\% every 20 epochs, and we set the graph neighborhood size to $K=20$. Following the reference implementation, the readout predicts 39 equally spaced quantiles and we use the first 25\% of the calibration split as a validation set for early stopping.

\paragraph{SCALE.}
We use SGWT with $S=4$ wavelet scales and mexican-hat kernels, together with the adaptive gated conditioning mechanism. Training uses learning rate $8\times 10^{-4}$, weight decay $10^{-4}$, and MultiStepLR with gamma 0.5 and milestones at epochs 5 and 8. We train for 30 epochs on datasets with batch size 128. The crossing penalty is set to 15 on all datasets.

\section{Additional Results}
This section provides complementary results supporting the main claims.

\subsection{Robustness to the first stage Backbone}
\label{sec:appendix_stage1_backbone_ext}
To highlight the model-free nature of our conformal layer, we additionally repeat the main interval evaluation by replacing the first stage point forecaster with standard spatio-temporal forecasting backbones implemented in the Torch Spatiotemporal library ({tsl}). We include Transformer, GWNet, and DCRNN in this robustness study. All conformal methods operate on the resulting residual tensors and use the same calibration/test split and evaluation protocol. Table~\ref{tab:stage1_backbone_ext} summarizes the results for different backbones.

\paragraph{Backbone architecture.}
We summarize the first stage backbones used to generate residuals in Table~\ref{tab:stage1_backbone_cfg}. For non-template backbones (e.g., Transformer/GWNet/DCRNN), we use the default {tsl} implementations and their standard hyperparameters.
\vspace{2pt}

\begin{table}[!htbp]
    \centering
    \small
    \setlength{\tabcolsep}{6pt}
    \renewcommand{\arraystretch}{1.15}
    \caption{The first stage backbone configuration used to generate residuals.}
    \label{tab:stage1_backbone_cfg}
    \begin{tabular}{l l l}
        \toprule
        Backbone & Source & Key configuration \\
        \midrule
        GRU + DiffConv template & Our default & See Appendix~\ref{app:exp_details} \\
        Transformer & {tsl} & Default settings from {tsl} \\
        DCRNN & {tsl} & Default settings from {tsl} \\
        GWNet & {tsl} & Default settings from {tsl} \\
        \bottomrule
    \end{tabular}
\end{table}
\vspace{5pt}

  \begin{table*}[!htbp]
      \centering
      \small
      \setlength{\tabcolsep}{3pt}
      \renewcommand{\arraystretch}{1.1}
      \caption{Interval results on METR-LA, PEMS07, PEMS08, PEMS04 at $\alpha \in \{0.05,0.1,0.2\}$ for the first stage point forecasting backbone (Transformer). We report Coverage, PI-Width, and Winkler (mean$\pm$std over seeds for trainable methods); non-parametric residual-quantile baselines (SCP/SeqCP/NexCP) are deterministic given residuals.}
      \label{tab:stage1_backbone_ext_transformer}
      \resizebox{\textwidth}{!}{%
      \begin{tabular}{l ccc ccc ccc ccc}
          \toprule
          \rowcolor{gray!15}
          Method & \multicolumn{3}{c}{METR-LA} & \multicolumn{3}{c}{PEMS07} & \multicolumn{3}{c}{PEMS08} & \multicolumn{3}{c}{PEMS04} \\
          \cmidrule(lr){2-4} \cmidrule(lr){5-7} \cmidrule(lr){8-10} \cmidrule(lr){11-13}
          \rowcolor{gray!15}
           & Coverage $\textcolor{black}{\boldsymbol{\uparrow}}$ & PI-Width $\textcolor{black}{\boldsymbol{\downarrow}}$ & Winkler $\textcolor{black}{\boldsymbol{\downarrow}}$ & Coverage $\textcolor{black}{\boldsymbol{\uparrow}}$ & PI-Width $\textcolor{black}{\boldsymbol{\downarrow}}$ & Winkler $\textcolor{black}{\boldsymbol{\downarrow}}$ & Coverage $\textcolor{black}{\boldsymbol{\uparrow}}$ & PI-Width $\textcolor{black}{\boldsymbol{\downarrow}}$ & Winkler $\textcolor{black}{\boldsymbol{\downarrow}}$ & Coverage $\textcolor{black}{\boldsymbol{\uparrow}}$ & PI-Width $\textcolor{black}{\boldsymbol{\downarrow}}$ & Winkler $\textcolor{black}{\boldsymbol{\downarrow}}$ \\
          \midrule
          \multicolumn{13}{l}{\textcolor{gray!80}{\textit{$\alpha=0.05$}}} \\
          SCP &  0.9484\,$\pm$\,0.0000\,\textcolor{green!60!black}{\ding{51}} & 14.97\,$\pm$\,0.00 & 25.00\,$\pm$\,0.00 & 0.9456\,$\pm$\,0.0000\,\textcolor{green!60!black}{\ding{51}} & 101.49\,$\pm$\,0.00 & 146.60\,$\pm$\,0.00 & 0.9517\,$\pm$\,0.0000\,\textcolor{green!60!black}{\ding{51}} & 76.93\,$\pm$\,0.00 & 106.78\,$\pm$\,0.00 & 0.9537\,$\pm$\,0.0000\,\textcolor{green!60!black}{\ding{51}} & 107.74\,$\pm$\,0.00 & 142.83\,$\pm$\,0.00 \\
          SeqCP &  0.9144\,$\pm$\,0.0000\,\textcolor{red}{\ding{55}} & 13.29\,$\pm$\,0.00 & 25.36\,$\pm$\,0.00 & 0.9202\,$\pm$\,0.0000\,\textcolor{red}{\ding{55}} & 93.76\,$\pm$\,0.00 & 147.59\,$\pm$\,0.00 & 0.9206\,$\pm$\,0.0000\,\textcolor{red}{\ding{55}} & 69.83\,$\pm$\,0.00 & 110.03\,$\pm$\,0.00 & 0.9105\,$\pm$\,0.0000\,\textcolor{red}{\ding{55}} & 92.33\,$\pm$\,0.00 & 150.05\,$\pm$\,0.00 \\
          CoREL &  0.9491\,$\pm$\,0.0009\,\textcolor{green!60!black}{\ding{51}} & 13.09\,$\pm$\,0.17 & \textbf{19.07\,$\pm$\,0.03} & 0.9479\,$\pm$\,0.0023\,\textcolor{green!60!black}{\ding{51}} & 88.02\,$\pm$\,0.58 & 118.97\,$\pm$\,0.01 & 0.9524\,$\pm$\,0.0030\,\textcolor{green!60!black}{\ding{51}} & 65.60\,$\pm$\,1.43 & 85.80\,$\pm$\,0.07 & 0.9529\,$\pm$\,0.0024\,\textcolor{green!60!black}{\ding{51}} & 87.30\,$\pm$\,1.13 & \textbf{112.07\,$\pm$\,0.26} \\
          NexCP &  0.9438\,$\pm$\,0.0000\,\textcolor{green!60!black}{\ding{51}} & 14.68\,$\pm$\,0.00 & 24.18\,$\pm$\,0.00 & 0.9463\,$\pm$\,0.0000\,\textcolor{green!60!black}{\ding{51}} & 100.51\,$\pm$\,0.00 & 140.29\,$\pm$\,0.00 & 0.9461\,$\pm$\,0.0000\,\textcolor{green!60!black}{\ding{51}} & 74.60\,$\pm$\,0.00 & 103.79\,$\pm$\,0.00 & 0.9441\,$\pm$\,0.0000\,\textcolor{green!60!black}{\ding{51}} & 101.52\,$\pm$\,0.00 & 139.94\,$\pm$\,0.00 \\
          EnbPI &  0.9491\,$\pm$\,0.0000\,\textcolor{green!60!black}{\ding{51}} & 14.97\,$\pm$\,0.00 & 24.77\,$\pm$\,0.00 & 0.9484\,$\pm$\,0.0000\,\textcolor{green!60!black}{\ding{51}} & 100.03\,$\pm$\,0.01 & 143.18\,$\pm$\,0.00 & 0.9505\,$\pm$\,0.0000\,\textcolor{green!60!black}{\ding{51}} & 75.46\,$\pm$\,0.02 & 105.40\,$\pm$\,0.00 & 0.9510\,$\pm$\,0.0000\,\textcolor{green!60!black}{\ding{51}} & 104.44\,$\pm$\,0.01 & 141.23\,$\pm$\,0.00 \\
          ConForME &  0.9488\,$\pm$\,0.0000\,\textcolor{green!60!black}{\ding{51}} & 15.26\,$\pm$\,0.00 & 25.41\,$\pm$\,0.00 & 0.9461\,$\pm$\,0.0000\,\textcolor{green!60!black}{\ding{51}} & 103.90\,$\pm$\,0.00 & 148.21\,$\pm$\,0.00 & 0.9518\,$\pm$\,0.0000\,\textcolor{green!60!black}{\ding{51}} & 77.11\,$\pm$\,0.00 & 106.92\,$\pm$\,0.00 & 0.9543\,$\pm$\,0.0000\,\textcolor{green!60!black}{\ding{51}} & 109.28\,$\pm$\,0.00 & 144.27\,$\pm$\,0.00 \\
          \midrule
          \rowcolor[HTML]{D7F6FF} SCALE &  0.9449\,$\pm$\,0.0020\,\textcolor{green!60!black}{\ding{51}} & \textbf{12.57\,$\pm$\,0.11} & 19.22\,$\pm$\,0.05 & 0.9499\,$\pm$\,0.0028\,\textcolor{green!60!black}{\ding{51}} & \textbf{84.15\,$\pm$\,0.87} & \textbf{118.41\,$\pm$\,0.04} & 0.9573\,$\pm$\,0.0016\,\textcolor{green!60!black}{\ding{51}} & \textbf{63.57\,$\pm$\,0.12} & \textbf{83.97\,$\pm$\,0.28} & 0.9442\,$\pm$\,0.0025\,\textcolor{green!60!black}{\ding{51}} & \textbf{83.79\,$\pm$\,0.87} & 113.53\,$\pm$\,0.16 \\
          \midrule
          \multicolumn{13}{l}{\textcolor{gray!80}{\textit{$\alpha=0.10$}}} \\
          SCP &  0.8994\,$\pm$\,0.0000\,\textcolor{green!60!black}{\ding{51}} & 10.15\,$\pm$\,0.00 & 18.70\,$\pm$\,0.00 & 0.8945\,$\pm$\,0.0000\,\textcolor{green!60!black}{\ding{51}} & 78.66\,$\pm$\,0.00 & 118.08\,$\pm$\,0.00 & 0.9043\,$\pm$\,0.0000\,\textcolor{green!60!black}{\ding{51}} & 59.50\,$\pm$\,0.00 & 86.05\,$\pm$\,0.00 & 0.9065\,$\pm$\,0.0000\,\textcolor{green!60!black}{\ding{51}} & 84.19\,$\pm$\,0.00 & 117.13\,$\pm$\,0.00 \\
          SeqCP &  0.8618\,$\pm$\,0.0000\,\textcolor{red}{\ding{55}} & 9.53\,$\pm$\,0.00 & 18.67\,$\pm$\,0.00 & 0.8678\,$\pm$\,0.0000\,\textcolor{red}{\ding{55}} & 74.73\,$\pm$\,0.00 & 118.08\,$\pm$\,0.00 & 0.8698\,$\pm$\,0.0000\,\textcolor{red}{\ding{55}} & 55.66\,$\pm$\,0.00 & 87.97\,$\pm$\,0.00 & 0.8572\,$\pm$\,0.0000\,\textcolor{red}{\ding{55}} & 74.16\,$\pm$\,0.00 & 120.62\,$\pm$\,0.00 \\
          CoREL &  0.9009\,$\pm$\,0.0023\,\textcolor{green!60!black}{\ding{51}} & 9.99\,$\pm$\,0.24 & \textbf{15.24\,$\pm$\,0.01} & 0.8979\,$\pm$\,0.0037\,\textcolor{green!60!black}{\ding{51}} & 71.45\,$\pm$\,0.43 & 99.31\,$\pm$\,0.04 & 0.9049\,$\pm$\,0.0052\,\textcolor{green!60!black}{\ding{51}} & 53.72\,$\pm$\,1.09 & 72.10\,$\pm$\,0.06 & 0.9041\,$\pm$\,0.0029\,\textcolor{green!60!black}{\ding{51}} & 71.59\,$\pm$\,0.72 & \textbf{94.79\,$\pm$\,0.15} \\
          NexCP &  0.8945\,$\pm$\,0.0000\,\textcolor{green!60!black}{\ding{51}} & 10.01\,$\pm$\,0.00 & 18.00\,$\pm$\,0.00 & 0.8966\,$\pm$\,0.0000\,\textcolor{green!60!black}{\ding{51}} & 78.30\,$\pm$\,0.00 & 113.86\,$\pm$\,0.00 & 0.8972\,$\pm$\,0.0000\,\textcolor{green!60!black}{\ding{51}} & 58.10\,$\pm$\,0.00 & 84.25\,$\pm$\,0.00 & 0.8943\,$\pm$\,0.0000\,\textcolor{green!60!black}{\ding{51}} & 79.45\,$\pm$\,0.00 & 114.71\,$\pm$\,0.00 \\
          EnbPI &  0.8999\,$\pm$\,0.0000\,\textcolor{green!60!black}{\ding{51}} & 10.17\,$\pm$\,0.00 & 18.56\,$\pm$\,0.00 & 0.8974\,$\pm$\,0.0000\,\textcolor{green!60!black}{\ding{51}} & 77.58\,$\pm$\,0.01 & 115.75\,$\pm$\,0.00 & 0.9016\,$\pm$\,0.0000\,\textcolor{green!60!black}{\ding{51}} & 58.58\,$\pm$\,0.01 & 85.31\,$\pm$\,0.00 & 0.9020\,$\pm$\,0.0000\,\textcolor{green!60!black}{\ding{51}} & 81.53\,$\pm$\,0.01 & 115.85\,$\pm$\,0.00 \\
          ConForME &  0.8997\,$\pm$\,0.0000\,\textcolor{green!60!black}{\ding{51}} & 10.13\,$\pm$\,0.00 & 18.94\,$\pm$\,0.00 & 0.8948\,$\pm$\,0.0000\,\textcolor{green!60!black}{\ding{51}} & 81.34\,$\pm$\,0.00 & 120.10\,$\pm$\,0.00 & 0.9048\,$\pm$\,0.0000\,\textcolor{green!60!black}{\ding{51}} & 59.67\,$\pm$\,0.00 & 86.20\,$\pm$\,0.00 & 0.9067\,$\pm$\,0.0000\,\textcolor{green!60!black}{\ding{51}} & 85.53\,$\pm$\,0.00 & 118.45\,$\pm$\,0.00 \\
          \midrule
          \rowcolor[HTML]{D7F6FF} SCALE &  0.8927\,$\pm$\,0.0031\,\textcolor{green!60!black}{\ding{51}} & \textbf{9.61\,$\pm$\,0.14} & 15.34\,$\pm$\,0.02 & 0.9058\,$\pm$\,0.0050\,\textcolor{green!60!black}{\ding{51}} & \textbf{68.23\,$\pm$\,0.65} & \textbf{97.14\,$\pm$\,0.09} & 0.9138\,$\pm$\,0.0022\,\textcolor{green!60!black}{\ding{51}} & \textbf{51.19\,$\pm$\,0.09} & \textbf{69.69\,$\pm$\,0.25} & 0.8937\,$\pm$\,0.0045\,\textcolor{green!60!black}{\ding{51}} & \textbf{68.30\,$\pm$\,0.62} & 95.22\,$\pm$\,0.09 \\
          \midrule
          \multicolumn{13}{l}{\textcolor{gray!80}{\textit{$\alpha=0.20$}}} \\
          SCP &  0.8031\,$\pm$\,0.0000\,\textcolor{green!60!black}{\ding{51}} & 6.17\,$\pm$\,0.00 & 13.25\,$\pm$\,0.00 & 0.7944\,$\pm$\,0.0000\,\textcolor{green!60!black}{\ding{51}} & 56.53\,$\pm$\,0.00 & 92.41\,$\pm$\,0.00 & 0.8070\,$\pm$\,0.0000\,\textcolor{green!60!black}{\ding{51}} & 42.71\,$\pm$\,0.00 & 67.48\,$\pm$\,0.00 & 0.8094\,$\pm$\,0.0000\,\textcolor{green!60!black}{\ding{51}} & 59.81\,$\pm$\,0.00 & 92.66\,$\pm$\,0.00 \\
          SeqCP &  0.7662\,$\pm$\,0.0000\,\textcolor{red}{\ding{55}} & 6.21\,$\pm$\,0.00 & 13.18\,$\pm$\,0.00 & 0.7714\,$\pm$\,0.0000\,\textcolor{red}{\ding{55}} & 55.35\,$\pm$\,0.00 & 92.22\,$\pm$\,0.00 & 0.7733\,$\pm$\,0.0000\,\textcolor{red}{\ding{55}} & 41.16\,$\pm$\,0.00 & 68.43\,$\pm$\,0.00 & 0.7622\,$\pm$\,0.0000\,\textcolor{red}{\ding{55}} & 54.70\,$\pm$\,0.00 & 94.01\,$\pm$\,0.00 \\
          CoREL &  0.8016\,$\pm$\,0.0049\,\textcolor{green!60!black}{\ding{51}} & 7.20\,$\pm$\,0.18 & \textbf{11.83\,$\pm$\,0.00} & 0.7974\,$\pm$\,0.0024\,\textcolor{green!60!black}{\ding{51}} & 54.08\,$\pm$\,0.05 & 80.81\,$\pm$\,0.07 & 0.8029\,$\pm$\,0.0051\,\textcolor{green!60!black}{\ding{51}} & \textbf{40.59\,$\pm$\,0.58} & \textbf{59.07\,$\pm$\,0.05} & 0.8085\,$\pm$\,0.0031\,\textcolor{green!60!black}{\ding{51}} & 54.97\,$\pm$\,0.43 & 78.11\,$\pm$\,0.10 \\
          NexCP &  0.7946\,$\pm$\,0.0000\,\textcolor{green!60!black}{\ding{51}} & 6.18\,$\pm$\,0.00 & 12.83\,$\pm$\,0.00 & 0.7974\,$\pm$\,0.0000\,\textcolor{green!60!black}{\ding{51}} & 56.78\,$\pm$\,0.00 & 90.00\,$\pm$\,0.00 & 0.7979\,$\pm$\,0.0000\,\textcolor{green!60!black}{\ding{51}} & 42.08\,$\pm$\,0.00 & 66.61\,$\pm$\,0.00 & 0.7947\,$\pm$\,0.0000\,\textcolor{green!60!black}{\ding{51}} & 56.85\,$\pm$\,0.00 & 90.99\,$\pm$\,0.00 \\
          EnbPI &  0.8021\,$\pm$\,0.0000\,\textcolor{green!60!black}{\ding{51}} & 6.22\,$\pm$\,0.00 & 13.21\,$\pm$\,0.00 & 0.7969\,$\pm$\,0.0000\,\textcolor{green!60!black}{\ding{51}} & 55.80\,$\pm$\,0.00 & 90.82\,$\pm$\,0.00 & 0.8028\,$\pm$\,0.0000\,\textcolor{green!60!black}{\ding{51}} & 42.01\,$\pm$\,0.01 & 67.03\,$\pm$\,0.00 & 0.8035\,$\pm$\,0.0000\,\textcolor{green!60!black}{\ding{51}} & 57.96\,$\pm$\,0.01 & 91.61\,$\pm$\,0.00 \\
          ConForME &  0.8033\,$\pm$\,0.0000\,\textcolor{green!60!black}{\ding{51}} & \textbf{6.12\,$\pm$\,0.00} & 13.35\,$\pm$\,0.00 & 0.7948\,$\pm$\,0.0000\,\textcolor{green!60!black}{\ding{51}} & 59.16\,$\pm$\,0.00 & 94.69\,$\pm$\,0.00 & 0.8073\,$\pm$\,0.0000\,\textcolor{green!60!black}{\ding{51}} & 42.76\,$\pm$\,0.00 & 67.57\,$\pm$\,0.00 & 0.8100\,$\pm$\,0.0000\,\textcolor{green!60!black}{\ding{51}} & 60.89\,$\pm$\,0.00 & 93.87\,$\pm$\,0.00 \\
          \midrule
          \rowcolor[HTML]{D7F6FF} SCALE &  0.7895\,$\pm$\,0.0071\,\textcolor{green!60!black}{\ding{51}} & 6.91\,$\pm$\,0.15 & 11.87\,$\pm$\,0.02 & 0.8158\,$\pm$\,0.0075\,\textcolor{green!60!black}{\ding{51}} & \textbf{51.57\,$\pm$\,0.51} & \textbf{78.13\,$\pm$\,0.12} & 0.8232\,$\pm$\,0.0057\,\textcolor{red}{\ding{55}} & 38.60\,$\pm$\,0.08 & 56.48\,$\pm$\,0.21 & 0.7940\,$\pm$\,0.0065\,\textcolor{green!60!black}{\ding{51}} & \textbf{51.95\,$\pm$\,0.50} & \textbf{77.92\,$\pm$\,0.05} \\
          \bottomrule
      \end{tabular}}
  \end{table*}
  \begin{table*}[!htbp]
      \centering
      \small
      \setlength{\tabcolsep}{3pt}
      \renewcommand{\arraystretch}{1.1}
      \caption{Interval results on METR-LA, PEMS07, PEMS08, PEMS04 at $\alpha \in \{0.05,0.1,0.2\}$ for the first stage point forecasting backbone (GWNet). We report Coverage, PI-Width, and Winkler (mean$\pm$std over seeds for trainable methods); non-parametric residual-quantile baselines (SCP/SeqCP/NexCP) are deterministic given residuals.}
      \label{tab:stage1_backbone_ext_gwnet}
      \resizebox{\textwidth}{!}{%
      \begin{tabular}{l ccc ccc ccc ccc}
          \toprule
          \rowcolor{gray!15}
          Method & \multicolumn{3}{c}{METR-LA} & \multicolumn{3}{c}{PEMS07} & \multicolumn{3}{c}{PEMS08} & \multicolumn{3}{c}{PEMS04} \\
          \cmidrule(lr){2-4} \cmidrule(lr){5-7} \cmidrule(lr){8-10} \cmidrule(lr){11-13}
          \rowcolor{gray!15}
           & Coverage $\textcolor{black}{\boldsymbol{\uparrow}}$ & PI-Width $\textcolor{black}{\boldsymbol{\downarrow}}$ & Winkler $\textcolor{black}{\boldsymbol{\downarrow}}$ & Coverage $\textcolor{black}{\boldsymbol{\uparrow}}$ & PI-Width $\textcolor{black}{\boldsymbol{\downarrow}}$ & Winkler $\textcolor{black}{\boldsymbol{\downarrow}}$ & Coverage $\textcolor{black}{\boldsymbol{\uparrow}}$ & PI-Width $\textcolor{black}{\boldsymbol{\downarrow}}$ & Winkler $\textcolor{black}{\boldsymbol{\downarrow}}$ & Coverage $\textcolor{black}{\boldsymbol{\uparrow}}$ & PI-Width $\textcolor{black}{\boldsymbol{\downarrow}}$ & Winkler $\textcolor{black}{\boldsymbol{\downarrow}}$ \\
          \midrule
          \multicolumn{13}{l}{\textcolor{gray!80}{\textit{$\alpha=0.05$}}} \\
          SCP &  0.9471\,$\pm$\,0.0000\,\textcolor{green!60!black}{\ding{51}} & 13.27\,$\pm$\,0.00 & 21.95\,$\pm$\,0.00 & 0.9443\,$\pm$\,0.0000\,\textcolor{green!60!black}{\ding{51}} & 89.04\,$\pm$\,0.00 & 131.04\,$\pm$\,0.00 & 0.9491\,$\pm$\,0.0000\,\textcolor{green!60!black}{\ding{51}} & 69.59\,$\pm$\,0.00 & 98.61\,$\pm$\,0.00 & 0.9534\,$\pm$\,0.0000\,\textcolor{green!60!black}{\ding{51}} & 99.22\,$\pm$\,0.00 & 134.41\,$\pm$\,0.00 \\
          SeqCP &  0.9093\,$\pm$\,0.0000\,\textcolor{red}{\ding{55}} & 11.82\,$\pm$\,0.00 & 21.98\,$\pm$\,0.00 & 0.9181\,$\pm$\,0.0000\,\textcolor{red}{\ding{55}} & 82.54\,$\pm$\,0.00 & 131.06\,$\pm$\,0.00 & 0.9194\,$\pm$\,0.0000\,\textcolor{red}{\ding{55}} & 63.37\,$\pm$\,0.00 & 100.99\,$\pm$\,0.00 & 0.9088\,$\pm$\,0.0000\,\textcolor{red}{\ding{55}} & 84.43\,$\pm$\,0.00 & 140.18\,$\pm$\,0.00 \\
          CoREL &  0.9462\,$\pm$\,0.0006\,\textcolor{green!60!black}{\ding{51}} & \textbf{11.62\,$\pm$\,0.05} & 17.92\,$\pm$\,0.09 & 0.9502\,$\pm$\,0.0023\,\textcolor{green!60!black}{\ding{51}} & 81.69\,$\pm$\,1.36 & \textbf{110.10\,$\pm$\,0.20} & 0.9473\,$\pm$\,0.0007\,\textcolor{green!60!black}{\ding{51}} & 61.33\,$\pm$\,1.05 & 81.77\,$\pm$\,0.34 & 0.9533\,$\pm$\,0.0024\,\textcolor{green!60!black}{\ding{51}} & 83.11\,$\pm$\,0.79 & \textbf{106.90\,$\pm$\,0.05} \\
          NexCP &  0.9437\,$\pm$\,0.0000\,\textcolor{green!60!black}{\ding{51}} & 13.11\,$\pm$\,0.00 & 20.94\,$\pm$\,0.00 & 0.9449\,$\pm$\,0.0000\,\textcolor{green!60!black}{\ding{51}} & 88.68\,$\pm$\,0.00 & 124.65\,$\pm$\,0.00 & 0.9452\,$\pm$\,0.0000\,\textcolor{green!60!black}{\ding{51}} & 68.09\,$\pm$\,0.00 & 95.49\,$\pm$\,0.00 & 0.9442\,$\pm$\,0.0000\,\textcolor{green!60!black}{\ding{51}} & 93.24\,$\pm$\,0.00 & 131.07\,$\pm$\,0.00 \\
          EnbPI &  0.9484\,$\pm$\,0.0000\,\textcolor{green!60!black}{\ding{51}} & 13.36\,$\pm$\,0.00 & 21.73\,$\pm$\,0.00 & 0.9478\,$\pm$\,0.0000\,\textcolor{green!60!black}{\ding{51}} & 88.91\,$\pm$\,0.00 & 128.29\,$\pm$\,0.00 & 0.9489\,$\pm$\,0.0000\,\textcolor{green!60!black}{\ding{51}} & 68.97\,$\pm$\,0.02 & 97.63\,$\pm$\,0.01 & 0.9511\,$\pm$\,0.0000\,\textcolor{green!60!black}{\ding{51}} & 97.11\,$\pm$\,0.01 & 133.58\,$\pm$\,0.01 \\
          ConForME &  0.9473\,$\pm$\,0.0000\,\textcolor{green!60!black}{\ding{51}} & 13.57\,$\pm$\,0.00 & 22.67\,$\pm$\,0.00 & 0.9447\,$\pm$\,0.0000\,\textcolor{green!60!black}{\ding{51}} & 89.86\,$\pm$\,0.00 & 132.09\,$\pm$\,0.00 & 0.9488\,$\pm$\,0.0000\,\textcolor{green!60!black}{\ding{51}} & 70.38\,$\pm$\,0.00 & 99.34\,$\pm$\,0.00 & 0.9538\,$\pm$\,0.0000\,\textcolor{green!60!black}{\ding{51}} & 99.55\,$\pm$\,0.00 & 135.10\,$\pm$\,0.00 \\
          \midrule
          \rowcolor[HTML]{D7F6FF} SCALE &  0.9457\,$\pm$\,0.0023\,\textcolor{green!60!black}{\ding{51}} & 11.77\,$\pm$\,0.13 & \textbf{17.81\,$\pm$\,0.05} & 0.9482\,$\pm$\,0.0025\,\textcolor{green!60!black}{\ding{51}} & \textbf{78.57\,$\pm$\,1.46} & 110.39\,$\pm$\,0.27 & 0.9467\,$\pm$\,0.0046\,\textcolor{green!60!black}{\ding{51}} & \textbf{59.04\,$\pm$\,1.30} & \textbf{80.33\,$\pm$\,0.11} & 0.9476\,$\pm$\,0.0040\,\textcolor{green!60!black}{\ding{51}} & \textbf{82.10\,$\pm$\,1.46} & 108.82\,$\pm$\,0.11 \\
          \midrule
          \multicolumn{13}{l}{\textcolor{gray!80}{\textit{$\alpha=0.10$}}} \\
          SCP &  0.8972\,$\pm$\,0.0000\,\textcolor{green!60!black}{\ding{51}} & 9.21\,$\pm$\,0.00 & 16.61\,$\pm$\,0.00 & 0.8928\,$\pm$\,0.0000\,\textcolor{green!60!black}{\ding{51}} & 69.12\,$\pm$\,0.00 & 105.16\,$\pm$\,0.00 & 0.8998\,$\pm$\,0.0000\,\textcolor{green!60!black}{\ding{51}} & 53.96\,$\pm$\,0.00 & 79.26\,$\pm$\,0.00 & 0.9059\,$\pm$\,0.0000\,\textcolor{green!60!black}{\ding{51}} & 76.89\,$\pm$\,0.00 & 109.11\,$\pm$\,0.00 \\
          SeqCP &  0.8552\,$\pm$\,0.0000\,\textcolor{red}{\ding{55}} & 8.59\,$\pm$\,0.00 & 16.40\,$\pm$\,0.00 & 0.8649\,$\pm$\,0.0000\,\textcolor{red}{\ding{55}} & 65.75\,$\pm$\,0.00 & 104.52\,$\pm$\,0.00 & 0.8689\,$\pm$\,0.0000\,\textcolor{red}{\ding{55}} & 50.58\,$\pm$\,0.00 & 80.33\,$\pm$\,0.00 & 0.8553\,$\pm$\,0.0000\,\textcolor{red}{\ding{55}} & 67.58\,$\pm$\,0.00 & 111.75\,$\pm$\,0.00 \\
          CoREL &  0.8962\,$\pm$\,0.0022\,\textcolor{green!60!black}{\ding{51}} & \textbf{8.67\,$\pm$\,0.14} & \textbf{14.08\,$\pm$\,0.04} & 0.9004\,$\pm$\,0.0031\,\textcolor{green!60!black}{\ding{51}} & 66.03\,$\pm$\,0.91 & 91.58\,$\pm$\,0.05 & 0.8949\,$\pm$\,0.0022\,\textcolor{green!60!black}{\ding{51}} & 49.80\,$\pm$\,0.77 & 68.64\,$\pm$\,0.27 & 0.9053\,$\pm$\,0.0020\,\textcolor{green!60!black}{\ding{51}} & 68.10\,$\pm$\,0.39 & \textbf{90.19\,$\pm$\,0.05} \\
          NexCP &  0.8934\,$\pm$\,0.0000\,\textcolor{green!60!black}{\ding{51}} & 9.11\,$\pm$\,0.00 & 15.83\,$\pm$\,0.00 & 0.8950\,$\pm$\,0.0000\,\textcolor{green!60!black}{\ding{51}} & 69.02\,$\pm$\,0.00 & 100.89\,$\pm$\,0.00 & 0.8962\,$\pm$\,0.0000\,\textcolor{green!60!black}{\ding{51}} & 53.00\,$\pm$\,0.00 & 77.30\,$\pm$\,0.00 & 0.8936\,$\pm$\,0.0000\,\textcolor{green!60!black}{\ding{51}} & 72.48\,$\pm$\,0.00 & 106.44\,$\pm$\,0.00 \\
          EnbPI &  0.8987\,$\pm$\,0.0000\,\textcolor{green!60!black}{\ding{51}} & 9.23\,$\pm$\,0.00 & 16.47\,$\pm$\,0.00 & 0.8967\,$\pm$\,0.0000\,\textcolor{green!60!black}{\ding{51}} & 68.92\,$\pm$\,0.00 & 103.53\,$\pm$\,0.00 & 0.8990\,$\pm$\,0.0001\,\textcolor{green!60!black}{\ding{51}} & 53.48\,$\pm$\,0.00 & 78.80\,$\pm$\,0.00 & 0.9025\,$\pm$\,0.0000\,\textcolor{green!60!black}{\ding{51}} & 75.46\,$\pm$\,0.01 & 108.64\,$\pm$\,0.00 \\
          ConForME &  0.8972\,$\pm$\,0.0000\,\textcolor{green!60!black}{\ding{51}} & 9.06\,$\pm$\,0.00 & 16.96\,$\pm$\,0.00 & 0.8940\,$\pm$\,0.0000\,\textcolor{green!60!black}{\ding{51}} & 70.28\,$\pm$\,0.00 & 106.10\,$\pm$\,0.00 & 0.8998\,$\pm$\,0.0000\,\textcolor{green!60!black}{\ding{51}} & 54.57\,$\pm$\,0.00 & 80.03\,$\pm$\,0.00 & 0.9063\,$\pm$\,0.0000\,\textcolor{green!60!black}{\ding{51}} & 77.11\,$\pm$\,0.00 & 109.54\,$\pm$\,0.00 \\
          \midrule
          \rowcolor[HTML]{D7F6FF} SCALE &  0.8943\,$\pm$\,0.0030\,\textcolor{green!60!black}{\ding{51}} & 8.87\,$\pm$\,0.06 & 14.15\,$\pm$\,0.03 & 0.9013\,$\pm$\,0.0032\,\textcolor{green!60!black}{\ding{51}} & \textbf{63.78\,$\pm$\,1.15} & \textbf{90.47\,$\pm$\,0.12} & 0.8960\,$\pm$\,0.0083\,\textcolor{green!60!black}{\ding{51}} & \textbf{47.72\,$\pm$\,1.14} & \textbf{66.69\,$\pm$\,0.08} & 0.8995\,$\pm$\,0.0050\,\textcolor{green!60!black}{\ding{51}} & \textbf{67.07\,$\pm$\,1.15} & 91.03\,$\pm$\,0.11 \\
          \midrule
          \multicolumn{13}{l}{\textcolor{gray!80}{\textit{$\alpha=0.20$}}} \\
          SCP &  0.7962\,$\pm$\,0.0000\,\textcolor{green!60!black}{\ding{51}} & 5.69\,$\pm$\,0.00 & 11.94\,$\pm$\,0.00 & 0.7926\,$\pm$\,0.0000\,\textcolor{green!60!black}{\ding{51}} & 49.90\,$\pm$\,0.00 & 82.14\,$\pm$\,0.00 & 0.8002\,$\pm$\,0.0000\,\textcolor{green!60!black}{\ding{51}} & 38.76\,$\pm$\,0.00 & 62.03\,$\pm$\,0.00 & 0.8080\,$\pm$\,0.0000\,\textcolor{green!60!black}{\ding{51}} & 54.20\,$\pm$\,0.00 & 85.58\,$\pm$\,0.00 \\
          SeqCP &  0.7568\,$\pm$\,0.0000\,\textcolor{red}{\ding{55}} & 5.70\,$\pm$\,0.00 & 11.75\,$\pm$\,0.00 & 0.7663\,$\pm$\,0.0000\,\textcolor{red}{\ding{55}} & 48.74\,$\pm$\,0.00 & 81.48\,$\pm$\,0.00 & 0.7740\,$\pm$\,0.0000\,\textcolor{red}{\ding{55}} & 37.59\,$\pm$\,0.00 & 62.51\,$\pm$\,0.00 & 0.7609\,$\pm$\,0.0000\,\textcolor{red}{\ding{55}} & 49.69\,$\pm$\,0.00 & 86.45\,$\pm$\,0.00 \\
          CoREL &  0.7993\,$\pm$\,0.0049\,\textcolor{green!60!black}{\ding{51}} & 6.29\,$\pm$\,0.11 & \textbf{10.78\,$\pm$\,0.01} & 0.8028\,$\pm$\,0.0037\,\textcolor{green!60!black}{\ding{51}} & 49.99\,$\pm$\,0.43 & 74.29\,$\pm$\,0.02 & 0.7907\,$\pm$\,0.0045\,\textcolor{green!60!black}{\ding{51}} & 37.74\,$\pm$\,0.41 & 56.16\,$\pm$\,0.21 & 0.8089\,$\pm$\,0.0024\,\textcolor{green!60!black}{\ding{51}} & 52.18\,$\pm$\,0.38 & \textbf{74.22\,$\pm$\,0.04} \\
          NexCP &  0.7923\,$\pm$\,0.0000\,\textcolor{green!60!black}{\ding{51}} & 5.77\,$\pm$\,0.00 & 11.45\,$\pm$\,0.00 & 0.7957\,$\pm$\,0.0000\,\textcolor{green!60!black}{\ding{51}} & 50.13\,$\pm$\,0.00 & 79.64\,$\pm$\,0.00 & 0.7974\,$\pm$\,0.0000\,\textcolor{green!60!black}{\ding{51}} & 38.47\,$\pm$\,0.00 & 61.00\,$\pm$\,0.00 & 0.7936\,$\pm$\,0.0000\,\textcolor{green!60!black}{\ding{51}} & 51.56\,$\pm$\,0.00 & 83.76\,$\pm$\,0.00 \\
          EnbPI &  0.7985\,$\pm$\,0.0000\,\textcolor{green!60!black}{\ding{51}} & 5.69\,$\pm$\,0.00 & 11.85\,$\pm$\,0.00 & 0.7957\,$\pm$\,0.0000\,\textcolor{green!60!black}{\ding{51}} & 49.56\,$\pm$\,0.00 & 81.11\,$\pm$\,0.00 & 0.7987\,$\pm$\,0.0001\,\textcolor{green!60!black}{\ding{51}} & 38.42\,$\pm$\,0.00 & 61.82\,$\pm$\,0.00 & 0.8032\,$\pm$\,0.0000\,\textcolor{green!60!black}{\ding{51}} & 53.24\,$\pm$\,0.01 & 85.36\,$\pm$\,0.00 \\
          ConForME &  0.7967\,$\pm$\,0.0000\,\textcolor{green!60!black}{\ding{51}} & \textbf{5.59\,$\pm$\,0.00} & 12.05\,$\pm$\,0.00 & 0.7946\,$\pm$\,0.0000\,\textcolor{green!60!black}{\ding{51}} & 51.21\,$\pm$\,0.00 & 83.10\,$\pm$\,0.00 & 0.8001\,$\pm$\,0.0000\,\textcolor{green!60!black}{\ding{51}} & 39.22\,$\pm$\,0.00 & 62.72\,$\pm$\,0.00 & 0.8084\,$\pm$\,0.0000\,\textcolor{green!60!black}{\ding{51}} & 54.40\,$\pm$\,0.00 & 85.86\,$\pm$\,0.00 \\
          \midrule
          \rowcolor[HTML]{D7F6FF} SCALE &  0.7928\,$\pm$\,0.0041\,\textcolor{green!60!black}{\ding{51}} & 6.28\,$\pm$\,0.04 & 10.85\,$\pm$\,0.01 & 0.8062\,$\pm$\,0.0045\,\textcolor{green!60!black}{\ding{51}} & \textbf{48.32\,$\pm$\,0.81} & \textbf{72.75\,$\pm$\,0.05} & 0.7948\,$\pm$\,0.0125\,\textcolor{green!60!black}{\ding{51}} & \textbf{36.01\,$\pm$\,0.89} & \textbf{54.11\,$\pm$\,0.07} & 0.8006\,$\pm$\,0.0095\,\textcolor{green!60!black}{\ding{51}} & \textbf{51.13\,$\pm$\,0.96} & 74.39\,$\pm$\,0.12 \\
          \bottomrule
      \end{tabular}}
  \end{table*}
  \begin{table*}[!htbp]
      \centering
      \small
      \setlength{\tabcolsep}{3pt}
      \renewcommand{\arraystretch}{1.1}
      \caption{Interval results on METR-LA, PEMS07, PEMS08, PEMS04 at $\alpha \in \{0.05,0.1,0.2\}$ for the first stage point forecasting backbone (DCRNN). We report Coverage, PI-Width, and Winkler (mean$\pm$std over seeds for trainable methods); non-parametric residual-quantile baselines (SCP/SeqCP/NexCP) are deterministic given residuals.}
      \label{tab:stage1_backbone_ext}
      \resizebox{\textwidth}{!}{%
      \begin{tabular}{l ccc ccc ccc ccc}
          \toprule
          \rowcolor{gray!15}
          Method & \multicolumn{3}{c}{METR-LA} & \multicolumn{3}{c}{PEMS07} & \multicolumn{3}{c}{PEMS08} & \multicolumn{3}{c}{PEMS04} \\
          \cmidrule(lr){2-4} \cmidrule(lr){5-7} \cmidrule(lr){8-10} \cmidrule(lr){11-13}
          \rowcolor{gray!15}
           & Coverage $\textcolor{black}{\boldsymbol{\uparrow}}$ & PI-Width $\textcolor{black}{\boldsymbol{\downarrow}}$ & Winkler $\textcolor{black}{\boldsymbol{\downarrow}}$ & Coverage $\textcolor{black}{\boldsymbol{\uparrow}}$ & PI-Width $\textcolor{black}{\boldsymbol{\downarrow}}$ & Winkler $\textcolor{black}{\boldsymbol{\downarrow}}$ & Coverage $\textcolor{black}{\boldsymbol{\uparrow}}$ & PI-Width $\textcolor{black}{\boldsymbol{\downarrow}}$ & Winkler $\textcolor{black}{\boldsymbol{\downarrow}}$ & Coverage $\textcolor{black}{\boldsymbol{\uparrow}}$ & PI-Width $\textcolor{black}{\boldsymbol{\downarrow}}$ & Winkler $\textcolor{black}{\boldsymbol{\downarrow}}$ \\
          \midrule
          \multicolumn{13}{l}{\textcolor{gray!80}{\textit{$\alpha=0.05$}}} \\
          SCP &  0.9480\,$\pm$\,0.0000\,\textcolor{green!60!black}{\ding{51}} & 13.81\,$\pm$\,0.00 & 22.41\,$\pm$\,0.00 & 0.9459\,$\pm$\,0.0000\,\textcolor{green!60!black}{\ding{51}} & 95.91\,$\pm$\,0.00 & 139.18\,$\pm$\,0.00 & 0.9493\,$\pm$\,0.0000\,\textcolor{green!60!black}{\ding{51}} & 75.15\,$\pm$\,0.00 & 106.37\,$\pm$\,0.00 & 0.9531\,$\pm$\,0.0000\,\textcolor{green!60!black}{\ding{51}} & 103.49\,$\pm$\,0.00 & 138.70\,$\pm$\,0.00 \\
          SeqCP &  0.9129\,$\pm$\,0.0000\,\textcolor{red}{\ding{55}} & 12.28\,$\pm$\,0.00 & 22.72\,$\pm$\,0.00 & 0.9209\,$\pm$\,0.0000\,\textcolor{red}{\ding{55}} & 88.75\,$\pm$\,0.00 & 139.98\,$\pm$\,0.00 & 0.9183\,$\pm$\,0.0000\,\textcolor{red}{\ding{55}} & 68.58\,$\pm$\,0.00 & 108.90\,$\pm$\,0.00 & 0.9097\,$\pm$\,0.0000\,\textcolor{red}{\ding{55}} & 88.67\,$\pm$\,0.00 & 145.17\,$\pm$\,0.00 \\
          CoREL &  0.9503\,$\pm$\,0.0035\,\textcolor{green!60!black}{\ding{51}} & 12.29\,$\pm$\,0.15 & \textbf{18.24\,$\pm$\,0.03} & 0.9475\,$\pm$\,0.0038\,\textcolor{green!60!black}{\ding{51}} & 85.87\,$\pm$\,0.12 & \textbf{115.41\,$\pm$\,0.19} & 0.9482\,$\pm$\,0.0037\,\textcolor{green!60!black}{\ding{51}} & 65.12\,$\pm$\,1.28 & 86.17\,$\pm$\,0.29 & 0.9537\,$\pm$\,0.0020\,\textcolor{green!60!black}{\ding{51}} & 86.22\,$\pm$\,0.82 & \textbf{110.53\,$\pm$\,0.20} \\
          NexCP &  0.9440\,$\pm$\,0.0000\,\textcolor{green!60!black}{\ding{51}} & 13.58\,$\pm$\,0.00 & 21.66\,$\pm$\,0.00 & 0.9457\,$\pm$\,0.0000\,\textcolor{green!60!black}{\ding{51}} & 95.04\,$\pm$\,0.00 & 133.20\,$\pm$\,0.00 & 0.9459\,$\pm$\,0.0000\,\textcolor{green!60!black}{\ding{51}} & 73.70\,$\pm$\,0.00 & 102.36\,$\pm$\,0.00 & 0.9441\,$\pm$\,0.0000\,\textcolor{green!60!black}{\ding{51}} & 97.72\,$\pm$\,0.00 & 135.43\,$\pm$\,0.00 \\
          EnbPI &  0.9487\,$\pm$\,0.0000\,\textcolor{green!60!black}{\ding{51}} & 13.88\,$\pm$\,0.00 & 22.32\,$\pm$\,0.00 & 0.9485\,$\pm$\,0.0000\,\textcolor{green!60!black}{\ding{51}} & 95.60\,$\pm$\,0.01 & 136.66\,$\pm$\,0.00 & 0.9501\,$\pm$\,0.0001\,\textcolor{green!60!black}{\ding{51}} & 73.89\,$\pm$\,0.03 & 104.24\,$\pm$\,0.01 & 0.9508\,$\pm$\,0.0001\,\textcolor{green!60!black}{\ding{51}} & 100.96\,$\pm$\,0.01 & 137.58\,$\pm$\,0.00 \\
          ConForME &  0.9485\,$\pm$\,0.0000\,\textcolor{green!60!black}{\ding{51}} & 14.07\,$\pm$\,0.00 & 22.98\,$\pm$\,0.00 & 0.9461\,$\pm$\,0.0000\,\textcolor{green!60!black}{\ding{51}} & 96.14\,$\pm$\,0.00 & 139.67\,$\pm$\,0.00 & 0.9496\,$\pm$\,0.0000\,\textcolor{green!60!black}{\ding{51}} & 76.25\,$\pm$\,0.00 & 107.66\,$\pm$\,0.00 & 0.9535\,$\pm$\,0.0000\,\textcolor{green!60!black}{\ding{51}} & 104.01\,$\pm$\,0.00 & 139.48\,$\pm$\,0.00 \\
          \midrule
          \rowcolor[HTML]{D7F6FF} SCALE &  0.9444\,$\pm$\,0.0030\,\textcolor{green!60!black}{\ding{51}} & \textbf{12.09\,$\pm$\,0.25} & 18.32\,$\pm$\,0.08 & 0.9443\,$\pm$\,0.0043\,\textcolor{green!60!black}{\ding{51}} & \textbf{81.68\,$\pm$\,0.87} & 115.79\,$\pm$\,0.30 & 0.9543\,$\pm$\,0.0018\,\textcolor{green!60!black}{\ding{51}} & \textbf{63.28\,$\pm$\,0.46} & \textbf{83.77\,$\pm$\,0.17} & 0.9425\,$\pm$\,0.0047\,\textcolor{green!60!black}{\ding{51}} & \textbf{83.06\,$\pm$\,1.81} & 111.89\,$\pm$\,0.13 \\
          \midrule
          \multicolumn{13}{l}{\textcolor{gray!80}{\textit{$\alpha=0.10$}}} \\
          SCP &  0.8984\,$\pm$\,0.0000\,\textcolor{green!60!black}{\ding{51}} & 9.57\,$\pm$\,0.00 & 17.01\,$\pm$\,0.00 & 0.8948\,$\pm$\,0.0000\,\textcolor{green!60!black}{\ding{51}} & 74.28\,$\pm$\,0.00 & 111.87\,$\pm$\,0.00 & 0.9010\,$\pm$\,0.0000\,\textcolor{green!60!black}{\ding{51}} & 58.16\,$\pm$\,0.00 & 85.47\,$\pm$\,0.00 & 0.9057\,$\pm$\,0.0000\,\textcolor{green!60!black}{\ding{51}} & 80.70\,$\pm$\,0.00 & 113.34\,$\pm$\,0.00 \\
          SeqCP &  0.8590\,$\pm$\,0.0000\,\textcolor{red}{\ding{55}} & 8.96\,$\pm$\,0.00 & 16.98\,$\pm$\,0.00 & 0.8692\,$\pm$\,0.0000\,\textcolor{red}{\ding{55}} & 70.66\,$\pm$\,0.00 & 111.77\,$\pm$\,0.00 & 0.8679\,$\pm$\,0.0000\,\textcolor{red}{\ding{55}} & 54.80\,$\pm$\,0.00 & 87.09\,$\pm$\,0.00 & 0.8562\,$\pm$\,0.0000\,\textcolor{red}{\ding{55}} & 71.12\,$\pm$\,0.00 & 116.35\,$\pm$\,0.00 \\
          CoREL &  0.9034\,$\pm$\,0.0044\,\textcolor{green!60!black}{\ding{51}} & 9.31\,$\pm$\,0.08 & \textbf{14.45\,$\pm$\,0.02} & 0.8967\,$\pm$\,0.0053\,\textcolor{green!60!black}{\ding{51}} & 69.56\,$\pm$\,0.10 & 96.32\,$\pm$\,0.13 & 0.8961\,$\pm$\,0.0048\,\textcolor{green!60!black}{\ding{51}} & 52.68\,$\pm$\,0.82 & 72.45\,$\pm$\,0.23 & 0.9055\,$\pm$\,0.0042\,\textcolor{green!60!black}{\ding{51}} & 70.68\,$\pm$\,0.83 & \textbf{93.40\,$\pm$\,0.13} \\
          NexCP &  0.8945\,$\pm$\,0.0000\,\textcolor{green!60!black}{\ding{51}} & 9.46\,$\pm$\,0.00 & 16.39\,$\pm$\,0.00 & 0.8963\,$\pm$\,0.0000\,\textcolor{green!60!black}{\ding{51}} & 73.89\,$\pm$\,0.00 & 107.88\,$\pm$\,0.00 & 0.8969\,$\pm$\,0.0000\,\textcolor{green!60!black}{\ding{51}} & 57.34\,$\pm$\,0.00 & 83.17\,$\pm$\,0.00 & 0.8938\,$\pm$\,0.0000\,\textcolor{green!60!black}{\ding{51}} & 76.25\,$\pm$\,0.00 & 110.65\,$\pm$\,0.00 \\
          EnbPI &  0.8991\,$\pm$\,0.0000\,\textcolor{green!60!black}{\ding{51}} & 9.60\,$\pm$\,0.00 & 16.96\,$\pm$\,0.00 & 0.8978\,$\pm$\,0.0000\,\textcolor{green!60!black}{\ding{51}} & 74.12\,$\pm$\,0.00 & 110.43\,$\pm$\,0.00 & 0.9007\,$\pm$\,0.0000\,\textcolor{green!60!black}{\ding{51}} & 57.24\,$\pm$\,0.01 & 84.10\,$\pm$\,0.01 & 0.9018\,$\pm$\,0.0001\,\textcolor{green!60!black}{\ding{51}} & 78.82\,$\pm$\,0.01 & 112.54\,$\pm$\,0.01 \\
          ConForME &  0.8985\,$\pm$\,0.0000\,\textcolor{green!60!black}{\ding{51}} & 9.46\,$\pm$\,0.00 & 17.30\,$\pm$\,0.00 & 0.8951\,$\pm$\,0.0000\,\textcolor{green!60!black}{\ding{51}} & 74.53\,$\pm$\,0.00 & 112.20\,$\pm$\,0.00 & 0.9007\,$\pm$\,0.0000\,\textcolor{green!60!black}{\ding{51}} & 59.25\,$\pm$\,0.00 & 86.69\,$\pm$\,0.00 & 0.9061\,$\pm$\,0.0000\,\textcolor{green!60!black}{\ding{51}} & 81.09\,$\pm$\,0.00 & 113.90\,$\pm$\,0.00 \\
          \midrule
          \rowcolor[HTML]{D7F6FF} SCALE &  0.8923\,$\pm$\,0.0030\,\textcolor{green!60!black}{\ding{51}} & \textbf{9.16\,$\pm$\,0.15} & 14.58\,$\pm$\,0.02 & 0.8958\,$\pm$\,0.0067\,\textcolor{green!60!black}{\ding{51}} & \textbf{66.32\,$\pm$\,0.76} & \textbf{95.19\,$\pm$\,0.19} & 0.9076\,$\pm$\,0.0029\,\textcolor{green!60!black}{\ding{51}} & \textbf{51.21\,$\pm$\,0.32} & \textbf{69.91\,$\pm$\,0.10} & 0.8908\,$\pm$\,0.0060\,\textcolor{green!60!black}{\ding{51}} & \textbf{67.70\,$\pm$\,1.39} & 93.89\,$\pm$\,0.07 \\
          \midrule
          \multicolumn{13}{l}{\textcolor{gray!80}{\textit{$\alpha=0.20$}}} \\
          SCP &  0.7989\,$\pm$\,0.0000\,\textcolor{green!60!black}{\ding{51}} & 5.94\,$\pm$\,0.00 & 12.27\,$\pm$\,0.00 & 0.7943\,$\pm$\,0.0000\,\textcolor{green!60!black}{\ding{51}} & 53.43\,$\pm$\,0.00 & 87.43\,$\pm$\,0.00 & 0.8021\,$\pm$\,0.0000\,\textcolor{green!60!black}{\ding{51}} & 41.68\,$\pm$\,0.00 & 66.88\,$\pm$\,0.00 & 0.8080\,$\pm$\,0.0000\,\textcolor{green!60!black}{\ding{51}} & 57.07\,$\pm$\,0.00 & 89.35\,$\pm$\,0.00 \\
          SeqCP &  0.7621\,$\pm$\,0.0000\,\textcolor{red}{\ding{55}} & 5.96\,$\pm$\,0.00 & 12.19\,$\pm$\,0.00 & 0.7736\,$\pm$\,0.0000\,\textcolor{red}{\ding{55}} & 52.23\,$\pm$\,0.00 & 87.07\,$\pm$\,0.00 & 0.7730\,$\pm$\,0.0000\,\textcolor{red}{\ding{55}} & 40.49\,$\pm$\,0.00 & 67.68\,$\pm$\,0.00 & 0.7617\,$\pm$\,0.0000\,\textcolor{red}{\ding{55}} & 52.46\,$\pm$\,0.00 & 90.51\,$\pm$\,0.00 \\
          CoREL &  0.8099\,$\pm$\,0.0050\,\textcolor{green!60!black}{\ding{51}} & 6.79\,$\pm$\,0.04 & \textbf{11.14\,$\pm$\,0.01} & 0.7967\,$\pm$\,0.0083\,\textcolor{green!60!black}{\ding{51}} & 52.77\,$\pm$\,0.13 & 78.38\,$\pm$\,0.07 & 0.7944\,$\pm$\,0.0024\,\textcolor{green!60!black}{\ding{51}} & 40.06\,$\pm$\,0.25 & 59.40\,$\pm$\,0.18 & 0.8106\,$\pm$\,0.0050\,\textcolor{green!60!black}{\ding{51}} & 54.21\,$\pm$\,0.48 & 76.95\,$\pm$\,0.07 \\
          NexCP &  0.7938\,$\pm$\,0.0000\,\textcolor{green!60!black}{\ding{51}} & 6.00\,$\pm$\,0.00 & 11.87\,$\pm$\,0.00 & 0.7970\,$\pm$\,0.0000\,\textcolor{green!60!black}{\ding{51}} & 53.51\,$\pm$\,0.00 & 85.12\,$\pm$\,0.00 & 0.7982\,$\pm$\,0.0000\,\textcolor{green!60!black}{\ding{51}} & 41.48\,$\pm$\,0.00 & 65.71\,$\pm$\,0.00 & 0.7937\,$\pm$\,0.0000\,\textcolor{green!60!black}{\ding{51}} & 54.44\,$\pm$\,0.00 & 87.55\,$\pm$\,0.00 \\
          EnbPI &  0.7993\,$\pm$\,0.0000\,\textcolor{green!60!black}{\ding{51}} & 5.97\,$\pm$\,0.00 & 12.25\,$\pm$\,0.00 & 0.7973\,$\pm$\,0.0000\,\textcolor{green!60!black}{\ding{51}} & 53.29\,$\pm$\,0.00 & 86.63\,$\pm$\,0.00 & 0.8014\,$\pm$\,0.0000\,\textcolor{green!60!black}{\ding{51}} & 41.06\,$\pm$\,0.01 & 65.97\,$\pm$\,0.01 & 0.8028\,$\pm$\,0.0000\,\textcolor{green!60!black}{\ding{51}} & 55.81\,$\pm$\,0.00 & 88.82\,$\pm$\,0.00 \\
          ConForME &  0.7993\,$\pm$\,0.0000\,\textcolor{green!60!black}{\ding{51}} & \textbf{5.85\,$\pm$\,0.00} & 12.36\,$\pm$\,0.00 & 0.7948\,$\pm$\,0.0000\,\textcolor{green!60!black}{\ding{51}} & 53.59\,$\pm$\,0.00 & 87.63\,$\pm$\,0.00 & 0.8018\,$\pm$\,0.0000\,\textcolor{green!60!black}{\ding{51}} & 42.80\,$\pm$\,0.00 & 68.07\,$\pm$\,0.00 & 0.8080\,$\pm$\,0.0000\,\textcolor{green!60!black}{\ding{51}} & 57.32\,$\pm$\,0.00 & 89.78\,$\pm$\,0.00 \\
          \midrule
          \rowcolor[HTML]{D7F6FF} SCALE &  0.7905\,$\pm$\,0.0026\,\textcolor{green!60!black}{\ding{51}} & 6.53\,$\pm$\,0.08 & 11.23\,$\pm$\,0.01 & 0.7989\,$\pm$\,0.0109\,\textcolor{green!60!black}{\ding{51}} & \textbf{50.20\,$\pm$\,0.58} & \textbf{76.66\,$\pm$\,0.12} & 0.8116\,$\pm$\,0.0043\,\textcolor{green!60!black}{\ding{51}} & \textbf{38.59\,$\pm$\,0.30} & \textbf{56.84\,$\pm$\,0.10} & 0.7901\,$\pm$\,0.0090\,\textcolor{green!60!black}{\ding{51}} & \textbf{51.69\,$\pm$\,1.09} & \textbf{76.87\,$\pm$\,0.01} \\
          \bottomrule
      \end{tabular}}
  \end{table*}


\end{document}